\newtheorem{theorem}{Theorem}[section]
\newtheorem{proposition}[theorem]{Proposition}
\newtheorem{assump}[theorem]{Assumption}
\newcommand{\blue}[1]{{#1}}
\newcommand{\x}[1]{{\thetav}^{(#1)}}
\newcommand{\FedAvg}{{\rm FedAvg} }
\newcommand{\dom}{\mathop{\rm dom}}
\newcommand{\Uc}{\mathcal{U}}
\newcommand{\Cc}{\mathcal{C}}
\newcommand{\Sc}{\mathcal{S}}
\newcommand{\Xc}{\mathcal{X}}
\newcommand{\Dc}{\mathcal{D}}
\newcommand{\Eb}{\mathds{E}}
\newcommand{\Rb}{\mathds{R}}
\newcommand{\g}[1]{\vg^{(#1)}}
\newcommand{\tg}[1]{\widetilde{\vg}^{(#1)}}
\newcommand{\uv}{\vu}
\newcommand{\xv}{\vx}
\newcommand{\zero}{\mathbf{0}}
\newcommand{\one}{\mathbf{1}}
\newcommand{\lambdav}{\boldsymbol{\lambda}}
\newcommand{\thetav}{\boldsymbol{\theta}}
\newcommand{\Msf}{\mathsf{A}}
\newcommand{\gammav}{\boldsymbol{\gamma}}
\def\eqref#1{equation~\ref{#1}}
\def\1{\bm{1}}
\def\va{{\bm{a}}}
\def\vb{{\bm{b}}}
\def\vc{{\bm{c}}}
\def\vd{{\bm{d}}}
\def\vf{{\bm{f}}}
\def\vg{{\bm{g}}}
\def\vp{{\bm{p}}}
\def\vq{{\bm{q}}}
\def\vr{{\bm{r}}}
\def\vu{{\bm{u}}}
\def\vw{{\bm{w}}}
\def\vx{{\bm{x}}}
\DeclareMathAlphabet{\mathsfit}{\encodingdefault}{\sfdefault}{m}{sl}
\SetMathAlphabet{\mathsfit}{bold}{\encodingdefault}{\sfdefault}{bx}{n}
\DeclareMathOperator*{\argmin}{arg\,min}
\title{Proportional Fairness in Federated Learning}
\author{\name Guojun Zhang, Saber Malekmohammadi, Xi Chen \\ \email \{guojun.zhang, saber.malekmohammadi, xi.chen4\}@huawei.com \\
      \addr Huawei Noah's Ark Lab
      \AND
      \name Yaoliang Yu \\
      \email yaoliang.yu@uwaterloo.ca \\
      \addr University of Waterloo
      }
\numberwithin{equation}{section}
\begin{document}

\maketitle

\begin{abstract}
With the increasingly broad deployment of federated learning (FL) systems in the real world, it is critical but challenging to ensure fairness in FL, i.e.~reasonably satisfactory performances for each of the numerous diverse clients. 
In this work, we introduce and study a new fairness notion in FL, called \emph{proportional fairness} (PF), which is based on the relative change of each client's performance. From its connection with the bargaining games, we propose \emph{PropFair}, a novel and easy-to-implement algorithm for finding proportionally fair solutions in FL, and study its convergence properties.
Through extensive experiments on vision and language datasets, we demonstrate that PropFair can approximately find PF solutions, and %compared with state-of-the-art fair FL algorithms,
it achieves a good balance between the average performances of all clients and of the worst 10\% clients. Our code is available at \url{https://github.com/huawei-noah/Federated-Learning/tree/main/FairFL}.
\end{abstract}

\section{Introduction}

Federated learning (FL, \citealt{mcmahan2017communication}) has attracted an intensive amount of attention in recent years, due to its great potential in real world applications such as IoT devices \citep{imteaj2021survey}, healthcare \citep{xu2021federated} and finance \citep{long2020federated}. In FL, different clients collaboratively learn a global model that presumably benefits all, without sharing the local data.

However, clients differ. Due to the heterogeneity of client objectives and resources, the benefit each client receives may vary. How can we make sure each client is treated \emph{fairly} in FL? 

%This is utterly important for the success of FL and remains a significant challenge in practice. 

To answer this question, we first need to define what we mean by fairness. Similar to fairness in other fields \citep{jain1984quantitative, sen_chapter_1986, rawls1999theory, barocas2017fairness}, in FL, there is no unified definition of fairness. In social choice theory, two of the most popular definitions are \emph{utilitarianism} and \emph{egalitarianism}. The goal of utilitarian fairness is to maximize the utility of the total society; while egalitarian fairness requires the worst-off people to receive enough benefits. Coincidentally, they correspond to two of the fair FL algorithms: Federated Averaging (FedAvg, \citealt{mcmahan2017communication}) and Agnostic Federated Learning (AFL, \citealt{mohri2019agnostic}). In FedAvg (AFL), we minimize the averaged (worst-case) loss function, respectively. Utilitarian and egalitarian might be in conflict with each other: one could improve the worst-case clients, but better-off clients would be degraded to a large extent.

To achieve some balance between utilitarian and egalitarian fairness, other notions of fairness have been studied. Inspired by \emph{$\alpha$-fairness} from telecommunication \citep{mo_fair_2000}, \citet{li2019fair} proposed $q$-Fair Federated Learning ($q$-FFL). By replacing the client weights with the softmax function of the client losses, \citet{li2020tilted} proposed Tilted Empirical Risk Minimization (TERM). However, it remains vague what type of balance these algorithms are trying to yield.

In this work, we bring another fairness notion into the zoo of fair FL, called \emph{proportional fairness} (PF, \citealt{kelly1997charging}). It also balances between utilitarian and egalitarian fairness, but is more intuitive. As a illustrative example, suppose we only have two clients and if we can improve the performance of one client \emph{relatively} by 2\% while decreasing another one by 1\%, then the solution is more proportionally fair. In practice, this view of relative change is quotidian. In stock market, people care more about how much they gain/lose compared to the cost; in telecommunication, people worry about the data transmission speed compared to the bandwidth. In a word, PF studies the \emph{relative change} of each client, rather than the \emph{absolute change}. 

Under convexity, PF is equivalent to the \emph{Nash bargaining solution} (NBS, \citealt{nash1950bargaining}), a well-known concept from cooperative game theory. Based on the notion of PF and its related NBS, we propose a new FL algorithm called \emph{PropFair}. Our contributions are the following:
\begin{itemize}
\item With the \emph{utility} perspective and Nash bargaining solutions, we propose a surrogate loss for achieving proportionally fair FL. This provides new insights to fair FL and is distinct from existing literature which uses the \emph{loss} perspective for fairness (see \Cref{sec:review}).
\item \emph{Theoretical guarantee:} we prove the convergence of PropFair to a stationary point of our objective, under mild assumptions. This proof can generalize to any other FL algorithm in the unified framework we propose.

\item \emph{Empirical viability:} we test our algorithm on several popular vision and language datasets, and modern neural architectures. Our results show that PropFair not only approximately obtains proportionally fair FL solutions, but also attains more favorable balance between the averaged and worst-case performances.

\item Compared to previous works \citep{mohri2019agnostic, li2019fair, li_ditto_2021}, we provide a comprehensive benchmark for popular fair FL algorithms with systematic hyperparameter tuning. This could facilitate future fairness research in FL.
\end{itemize}

\blue{Note that we mainly focus on fairness in \emph{federated learning}. Perhaps more widely known and orthogonal to fair FL, fairness has also been studied in general machine learning (\Cref{sec:fair_ML}), such as demographic parity \citep{dwork_fairness_2012}, equalized odds \citep{hardt_equality_2016} and calibration \citep{gebel2009multivariate}. These definitions require knowledge of sensitive attributes and true labels.} Although it is possible to adapt these fairness definitions into FL, by e.g., treating each sensitive attribute as a client, the adaptation may not always be straightforward due to the unique challenge of privacy in FL. Such adaptation can be interesting future work and we do not consider it here.

\blue{\textbf{Notations.} We use $\thetav$ to denote the model parameters, and $\ell(\thetav, (\xv, y))$ to represent the prediction loss of $\thetav$ on the sample $(\xv, y)$. $\ell_S$ denotes the average prediction loss on batch $S$. For each client $i$, the data distribution is $\Dc_i$ and the expected loss of $\thetav$ on $\Dc_i$ is $f_i$. We denote $\vf = (f_1, \dots, f_n)$ with $n$ the number of clients, and use $p_i$ as the linear weight of client $i$. Usually, we choose $p_i = n_i / N$ with $n_i$ the number of samples of client $i$, and $N$ the total number of samples across all clients. We use $\Phi : \Rb^n \to \Rb$ to denote the scalarization of $\vf$ and $\varphi: \Rb \to \Rb$ for some scalar function that operates on each $f_i$. We denote $\lambdav \in \Rb^n$ as the dual parameter, and $\Msf_\varphi$ as Kolmogorov's generalized mean. The utilities of each client $i$ is $u_i \in \Rb$ whose exact definition depends on the context, and $\uv = (u_1, \dots, u_n)$ denotes the vector all client utilities. A more complete notation table can be found in Appendix \ref{app:notation}.
}

\section{A Unified Framework of Fair FL Algorithms}\label{sec:review}

%Thus we leave it as future work. 

Suppose we have $n$ clients, and a model parameterized by $\thetav$. Because of data heterogeneity, for each client $i$ the data distribution $\Dc_i$ is different. The corresponding loss function becomes:
\begin{align}\label{eq:loss_i}
f_i(\thetav) := \Eb_{(\xv, y)\sim \Dc_i} [\ell(\thetav, (\xv, y))],
\end{align}
where $\ell$ is the prediction loss (such as cross entropy) of model $\thetav$ for each sample. The goal of FL is essentially to learn a model $\thetav$ that every element in the vector $\vf = (f_1, \dots, f_n)$ is small, a.k.a.~multi-objective optimization (MOO, \citealt{jahn2009vector}). \citet{hu2020fedmgda+} took this approach and used Multiple Gradient Descent Algorithm (MGDA) to find Pareto stationary points. 
%This corresponds to max-min fairness \citep{bertsekas2021data}.\footnote{This is not to be confused with the maximin criterion from social choice \citep{rawls1974some}. }

Another popular approach to MOO is scalarization of $\vf$ \citep[Chapter 5,][]{jahn2009vector}, by changing the vector optimization to some scalar optimization: 
$
\min_{\thetav} (\Phi \circ \vf)(\thetav)
$
with $\Phi: \Rb^n \to \Rb$. 
In this work, we mainly focus on $\Phi$ being a (additively) separable function:
\begin{align}\label{eq:separable}
\min_{\thetav} \sum_i p_i (\varphi \circ f_i)(\thetav),\quad \varphi: \Rb \to \Rb.
\end{align}
The linear weights $p_i$'s are usually pre-defined and satisfy $p_i \geq 0, \sum_i p_i = 1$. In FL, a usual choice of $p_i$ is $p_i = n_i/N$ with $n_i$ the number of samples for client $i$ and $N$ the total number of samples. 
Here $\varphi$ is a monotonically increasing function, since if any $f_i$ increases, the total loss should also increase.

In order to properly locate proportional fairness in the fairness literature, we first review existing fairness definitions that have been applied to FL. In the following subsections, we show that different choices of scalar function $\varphi$ lead to different fair FL algorithms with their respective fairness principles. 

\subsection{Utilitarianism}

The simplest choice of $\varphi$ would be the identity function, $\varphi(f_i) = f_i$:
\begin{align}\label{eq:fedavg}
\min_{\thetav} F(\thetav) := \sum_i p_i f_i(\thetav), \mbox{ with }p_i \geq 0\mbox{ pre-defined},\mbox{ and }\sum_i p_i = 1.
\end{align}
This corresponds to the first FL algorithm, Federated Averaging (FedAvg, \citealt{mcmahan2017communication}). Combined with eq.~\ref{eq:loss_i}, the objective eq.~\ref{eq:fedavg} is equivalent to centralized training with all the client samples in one place. %Of course, this equivalence is not exact due to multiple local steps each client takes.

From a fairness perspective, eq.~\ref{eq:fedavg} can be called \emph{utilitarianism}, which can be traced back to at least \citet{bentham1780introduction}. From a utilitarian perspective, a solution $\thetav$ is fair if it maximizes an average of the client utilities. (Here we treat client $i$'s utility $u_i$ as $-f_i$. In general, $u_i \in \Rb$ is some value client $i$ wishes to maximize.)

%If we treat $\thetav$ as a public policy and $u_i = -f_i$ as the utility (e.g.~happiness) client $i$ receives, then solving eq.~\ref{eq:fedavg} is equivalent to $\max_{\thetav} \sum_i p_i u_i(\thetav)$, i.e., maximize the utility of the whole society. 

\subsection{Egalitarianism (Maximin Criterion)}

In \blue{contrast} to FedAvg, Agnostic Federated Learning (AFL, \citealt{mohri2019agnostic}) does not assume a pre-defined weight for each client, but aims to minimize the worst convex combination:
\begin{align}\label{eq:afl}
\min_{\thetav} \max_{\vp}\sum_i p_i f_i(\thetav), \mbox{ with }\one^\top \vp = 1, \mbox{ and }\vp \geq \zero. 
\end{align}
Note that $\vp \in \Rb^n$ is a vector on the probability simplex. An equivalent formulation is:
\begin{align}
\min_{\thetav} \max_i f_i(\thetav).
\end{align}
In other words, we minimize the worst-case client loss. In social choice, this corresponds to the egalitarian rule (or more specifically, the maximin criterion, see \citealt{rawls1974some}). In MOO, this corresponds to $\Phi(\vf) = \max_i f_i$ (above eq.~\ref{eq:separable}). There is one important caveat of AFL worth mentioning: the generalization. In practice, each client loss $f_i$ is in fact the expected loss on the empirical distribution $\widehat{\Dc}_i$, i.e., 
\begin{align}
\widehat{f}_i(\thetav) = \Eb_{(\xv, y)\sim \widehat{\Dc}_i} [\ell(\thetav, (\xv, y))].
\end{align}
In FL, some clients may have few samples and the empirical estimate $\widehat{f}_i$ may not faithfully reflect the underlying distribution. If such a client happens to be the worst-case client, then AFL would suffer from defective generalization. We provide a concrete example in \Cref{sec:failure_afl}, and this phenomenon has also been observed in our experiments.

% Another related algorithm to AFL is called \emph{Tilted Empirical Risk Minimization} (TERM, \citet{li2020tilted}). The difference is that, instead of choosing the worst case, it takes a soft version with
% \begin{align}
% p_i = \frac{\exp(\alpha f_i)}{\sum_i \exp(\alpha f_i)}, \, \alpha \geq 0,
% \end{align}
% at each communication round of FL. Taking $\alpha \to \infty$ and $\alpha = 0$ recovers the maximin criterion and utilitarianism (with equal pre-defined weights), respectively. Formally, by taking $\varphi_i(f_i) = \exp(\alpha f_i)$ in eq.~\ref{eq:separable}, we obtain the TERM objective:
% \begin{align}\label{eq:term}
% \min_{\thetav} \sum_i \exp(\alpha f_i(\thetav)), \quad \alpha \geq 0.
% \end{align} % move it to appendix

\subsection{$\alpha$-Fairness}\label{sec:qFFL}

Last but not least, we may slightly modify the function $\varphi$ in FedAvg to be $\varphi(f_i) = f_i^{q+1}/(q+1)$:
\begin{align}\label{eq:qfedavg}
\min_{\thetav} \frac{1}{q+1}\sum_i p_i f_i^{q+1}(\thetav), \mbox{ with }p_i \geq 0\mbox{ pre-defined},\mbox{ and }\sum_i p_i = 1.
\end{align}
This is called $q$-Fair Federated Learning ($q$-FFL, \citealt{li2019fair}), and $q \geq 0$ is required. If $q = 0$, then we retrieve FedAvg; if $q\to \infty$, then the client who has the largest loss $f_i$ will be emphasized more, which corresponds to AFL. In general, $q$-FFL interpolates between the two.
From a fairness perspective, $q$-FFL can relate to $\alpha$-fairness \citep{mo_fair_2000}, a popular concept from the field of communication. Suppose each client has utility $u_i \in \Rb$ and $\uv = (u_1, \dots, u_n) \in \Uc \subseteq \Rb^n$, with $\Uc$ the feasible set of client utilities, then $\alpha$-fairness associates with the following problem:
\begin{align}\label{eq:alpha_fair}
\max_{\uv \in \Uc} \sum_i p_i \phi_\alpha(u_i) , \, \mbox{ with pre-defined }p_i \geq 0\mbox{ and }\phi_\alpha(u_i) = \begin{cases}
\log u_i & \mbox{if }\alpha = 1,\\
{u_i^{1-\alpha}}/({1-\alpha}) & \mbox{if }\alpha > 0\mbox{ and }\alpha \neq 1.
\end{cases}
\end{align}
$q$-FFL modifies the $\alpha$-fairness with two changes: (1) take $\alpha = -q$, and allow $\alpha \leq 0$; (2) replace $u_i$ with the loss $f_i$. Therefore, $q$-FFL is an analogy of $\alpha$-fairness. However, the objective eq.~\ref{eq:qfedavg} misses the important case with $\alpha = 1$, also known as \emph{proportional fairness} (PF, \citealt{kelly_rate_1998}), which we will study in \S~\ref{sec:PF}. Note that the formulation eq.~\ref{eq:qfedavg} is not fit for studying PF, since if we take $q\to -1$ (corresponding to $\alpha = 1$), then we obtain $\sum_i p_i \log f_i$, which need not be convex even when each $f_i$ is (see also \S~\ref{sec:choice_util}).

\subsection{Dual View of Fair FL Algorithms}\label{sec:dual_view}

%With so many fair FL algorithms being proposed, can we understand them in a principled framework? 
In this subsection, we show that many existing fair FL algorithms can be treated in a surprisingly unified way. In fact, eq.~\ref{eq:separable} is equivalent to minimizing the Kolmogorov's generalized mean \citep{Kolmogorov30}:
\begin{align}\label{eq:gen_mean}
%\textstyle 
\Msf_\varphi({\vf}(\thetav)) := \varphi^{-1}\left(\sum_{i=1}^n p_i \varphi(f_i(\thetav))\right).
\end{align}
%When $p_i = 1/n$ for every $i$, then eq.~\ref{eq:gen_mean} is known as the Kolmogorov's generalized mean \citep{Kolmogorov30}, if $s$ is a (strictly) monotonic and continuous function. 
Examples include $\varphi(f_i) = f_i$ (FedAvg), $\varphi(f_i) = f_i^{q+1}$ ($q$-FFL, $q \geq 0$) and $\varphi(f_i) = \exp(\alpha f_i)$ ($\alpha \geq 0$). The last choice is known as Tilted Empirical Risk Minimization (TERM, \citealt{li2020tilted}). 
%\begin{itemize}
% \item If $\varphi(f_i) = f_i$, then eq.~\ref{eq:gen_mean} becomes the objective of FedAvg \citep{mcmahan2017communication};
% \item If $\varphi(f_i) = f_i^{q+1}$ ($q \geq 0$), then eq.~\ref{eq:gen_mean} becomes (equivalently) $q$-FFL \citep{li2019fair};
% \item If $\varphi(f_i) = \exp(\alpha f_i)$ ($\alpha \geq 0$), then eq.~\ref{eq:gen_mean} becomes Tilted Empirical Risk Minimization \citep{li2020tilted};

% \end{itemize}

%\subsection{Dual view of fair FL}

We can now supply a dual view of the aforementioned FL algorithms that is perhaps more revealing. Concretely, let $\varphi$ be (strictly) increasing, convex and thrice differentiable. Then, the generalized mean function $\Msf_\varphi$ is convex iff \blue{$-{\varphi'}/{\varphi''}$} is convex \citep[Theorem 1,][]{BenTalTeboulle86}.
Applying the convex conjugate of $\Msf_\varphi$ we obtain the equivalent problem:
\begin{align}\label{eq:convex_conjugate}
\min_{\thetav} \Msf_\varphi(\vf(\thetav)) \equiv \min_{\thetav} \max_{{\lambdav \geq \zero}} \sum_i \lambda_i f_i(\thetav) - \Msf_\varphi^*({\lambdav}),\; \Msf_\varphi^*(\lambdav) := \sup_{\vf} \lambdav^\top \vf - \Msf_\varphi(\vf),
\end{align}
where $\Msf_\varphi^*(\lambdav)$ is the \emph{convex conjugate} of $\Msf_\varphi$.
Note that $\vf \geq \zero$ and thus we require ${\lambdav \geq \zero}$. Under strong duality, we may find the optimal dual variable $\lambdav^*$, with which our fair FL algorithms are essentially FedAvg with the fine-tuned weighting vector ${\lambdav}^*$.

\textbf{Constraints of $\lambdav$.} Solving the convex conjugate $\Msf_\varphi^*$ often gives additional constraints on $\lambdav$. For example, for FedAvg we can find that $\Msf_\varphi^*({\lambdav}) = 0$ if $\lambda_i = p_i$ for all $i\in [n]$ and $\Msf_\varphi^*({\lambdav}) = \infty$ otherwise. For $\varphi(f_i) = f_i^{q+1}$, we obtain the conjugate function corresponding to $q$-FFL:
\begin{align}\label{eq:qffl}
\Msf_\varphi^*({\lambdav}) = 
0,\mbox{ if } {\lambdav} \geq \zero \mbox{ and } \sum_i p_i^{-1/q} \lambda_i^{(q+1)/q} \leq 1,\mbox{ and }\infty \mbox{ otherwise.}
%\end{cases}
\end{align}
Bringing eq.~\ref{eq:qffl} into eq.~\ref{eq:convex_conjugate} and using H\"{o}lder's inequality we obtain the maximizer $\lambda_i \propto p_i f_i^q$. Similarly, we can derive the convex conjugate of TERM \citep{li2020tilted} as:
\begin{align}
\Msf_\varphi^*(\lambdav) = %\begin{cases}
\sum_i \frac{\lambda_i}{\alpha}\log \frac{\lambda_i}{p_i} & \mbox{ if }\lambdav \geq \zero, \lambdav^\top \one = 1,\mbox{ and }\infty \mbox{ otherwise.}
%\end{cases}
\end{align}
The maximizer is achieved at $\lambda_i \propto p_i e^{\alpha f_i}$. In other words, TERM gives a higher weight to clients with worse losses. Detailed derivations of the convex conjugates can be found in Appendix \ref{sec:dual_prop}.

\begin{table*}[tb]
    \caption{Different fairness concepts and their corresponding FL algorithms. $f_i$ is the loss function for the $i^{\rm th}$ client. The requirement of $\lambdav$ can be found in \S~\ref{sec:dual_view} and \S~\ref{sec:dual_propfair}. We defer the description and the dual view of PropFair to \S~\ref{sec:PF}. %\todo{need fix}
    %See also \S~\ref{sec:unify} and eq.~\ref{eq:training_loss} for the notations.
    }
    \centering
    \begin{tabular}{ccccc}
    \toprule 
   \bf FL algorithm & \bf Principle  & \bf Objective & \bf Constraints of $\lambdav$ \\ % & \bf Reference 
    \midrule
   FedAvg & Utilitarian  & $\sum_i p_i f_i$ & $\lambda_i = p_i $  \\ % & \citet{mcmahan2017communication}
  AFL  & Egalitarian  & $\max_i f_i$ & $\lambdav \geq \zero$, ${\bf 1}^\top \lambdav \leq 1$  \\ % & \citet{mohri2019agnostic}
 $q$-FFL  & $\alpha$-fairness & $\sum_i p_i f_i^{q+1}$ & $\lambda_i \propto p_i f_i^q$, $\lambdav \geq \zero$  \\ % & \citet{li2019fair}
  TERM & n/a & $ \sum_i p_i e^{\alpha f_i}$ & $\lambda_i \propto p_i e^{\alpha f_i}$, $\lambdav \geq \zero$, ${\bf 1}^\top \lambdav = 1$  \\ % & \cite{li2020tilted}
    \midrule 
 {\bf PropFair}  &  {\bf Proportional}  & $-\sum_i p_i \log (M - f_i)$ & $\lambda_i \propto \frac{p_i}{M-f_i}, \prod_i (\lambda_i/p_i)^{p_i} = 1$ \\ %& {\bf this work} 
    \bottomrule
    \end{tabular}
    \label{tab:fairness_fl}
\end{table*}

In \Cref{tab:fairness_fl}, we summarize all the algorithms we have discussed, including their motivating principles, objectives as well as the constraints of $\lambdav$ induced by $\Msf_\varphi^*$. Although the fair FL algorithms are motivated from different principles, most of them achieve a balance between utilitarianism and egalitarianism, thus allowing us to compare them on the same ground (\S~\ref{sec:exp}).

\section{Adapting Proportional Fairness to FL} \label{sec:PF}

Now we study how to add the missing piece mentioned in \Cref{sec:qFFL} to FL: proportional fairness. From a utility perspective, eq.~\ref{eq:alpha_fair} with $\alpha = 1$ reduces to:
\begin{align}\label{eq:nash_bargain}
\max_{\uv \in \Uc} \sum_i p_i \log u_i, \mbox{ with }p_i \geq 0\mbox{ pre-defined},\mbox{ and }\sum_i p_i = 1.
\end{align}
Note that we now specify the domain of $\uv$ to be $\Uc \subseteq \Rb_{++}^n$. The objective in eq.~\ref{eq:nash_bargain} is sometimes known as the \emph{Nash product} (up to logarithmic transformation), and the maximizer $\uv^*$ is also called the \emph{Nash bargaining solution} (NBS, \citealt{nash1950bargaining}). Axiomatic characterizations of the Nash bargaining solution are well-known, for instance by the following four axioms: Pareto optimality, symmetry, scale equivariance and monotonicity \citep[e.g.,][Theorem 16.35]{maschler2020game}. Moreover, \blue{\Cref{fig:nash} gives an illustration of the NBS. Among all the solutions that maximize the total utility, the Nash bargaining solution achieves equal utility for the two players, and the largest worst-case utility.}

The first-order optimality condition \citep{bertsekas1997nonlinear} of eq.~\ref{eq:nash_bargain} can be written as:
\begin{align}\label{eq:optimality_main}
\langle \uv - \uv^*, \nabla \sum_{i=1}^n p_i \log u_i^*\rangle \leq 0, \,\mbox{for any }\uv\in \Uc,
\end{align}
resulting in the following definition of proportional fairness \citep{kelly_rate_1998}:
\begin{align}\label{eq:prop_fair}
\uv^* \in \Uc \mbox{ is proportionally fair if }\sum_i p_i \frac{u_i - u_i^*}{u_i^*} \leq 0, \,\mbox{for any }\uv\in \Uc.
\end{align}

%When all the $u_i^*$ are the same, eq.~\ref{eq:prop_fair} is equivalent to maximizing the total utility.
Intuitively, $(u_i - u_i^*)/u_i^*$ is the relative utility gain for player $i$ given its utility switched from $u_i^*$ to $u_i$. PF simply states that at the solution $\uv^*$, the average relative utility cannot be improved. For instance, for two players with $p_1 = p_2 = 1/2$ we have:
\begin{align}
\frac{u_1 - u_1^*}{u_1^*} \leq -\frac{u_2 - u_2^*}{u_2^*},
\end{align}
which says that if by deviating from the optimal solution $(u_1^*, u_2^*)$, player $2$ could gain $p$ percentage more in terms of utility, then player $1$ will have to lose a percentage at least as large as $p$. 

The Nash bargaining solution is equivalent to the PF solution according to the following proposition:
\begin{restatable}[\textbf{equivalence}, e.g. \citealt{kelly1997charging, boche_nash_2009}]{proposition}{equivalence}\label{prop:equivalence}
For any convex set $\Uc \in \Rb_{++}^n$, a point $u \in \Uc$ is the Nash bargaining solution iff it is proportionally fair. If $\,\Uc$ is non-convex, then a PF solution, when exists, is a Nash bargaining solution.
\end{restatable}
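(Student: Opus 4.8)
The plan is to reduce the statement to a standard convex-analytic fact by observing that proportional fairness is exactly the first-order variational inequality for maximizing the Nash product. First I would set $g(\uv) := \sum_{i=1}^n p_i \log u_i$ on $\Rb_{++}^n$ and record that $g$ is concave — indeed strictly concave, since each $u_i \mapsto \log u_i$ is — and differentiable with $\nabla g(\uv)_i = p_i/u_i$. Then the defining inequality of PF in eq.~\ref{eq:prop_fair}, namely $\sum_i p_i (u_i - u_i^*)/u_i^* \le 0$ for all $\uv \in \Uc$, is precisely $\langle \nabla g(\uv^*), \uv - \uv^*\rangle \le 0$ for all $\uv \in \Uc$, i.e.\ the first-order condition eq.~\ref{eq:optimality_main}. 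Since the Nash bargaining solution is by definition the maximizer of the Nash product eq.~\ref{eq:nash_bargain} over $\Uc$ (unique whenever it exists, by strict concavity of $g$), the task becomes: show a feasible point satisfies this variational inequality iff it maximizes $g$ over $\Uc$, with the forward implication needing no convexity of $\Uc$.

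For the direction ``PF $\Rightarrow$ NBS'' — which covers both the convex and the non-convex cases at once — I would invoke the gradient inequality for the concave function $g$: for all $\uv, \uv^* \in \Rb_{++}^n$, $g(\uv) \le g(\uv^*) + \langle \nabla g(\uv^*), \uv - \uv^*\rangle$. This bound is global and places no hypothesis on $\Uc$. If $\uv^*$ is proportionally fair, the linear term is $\le 0$ for every $\uv \in \Uc$, hence $g(\uv) \le g(\uv^*)$ for all $\uv \in \Uc$; so $\uv^*$ maximizes the Nash product over $\Uc$, i.e.\ it is a Nash bargaining solution. This settles the ``$\Uc$ non-convex'' clause and one half of the convex equivalence.

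For the converse ``NBS $\Rightarrow$ PF'' under convexity of $\Uc$, I would use the standard first-order necessary condition for a constrained maximum: given any $\uv \in \Uc$, convexity keeps the segment $\uv_t := \uv^* + t(\uv - \uv^*)$ in $\Uc$ for $t \in [0,1]$, and since $t \mapsto g(\uv_t)$ attains its maximum over $[0,1]$ at $t=0$, its right derivative there is nonpositive, giving $\langle \nabla g(\uv^*), \uv - \uv^*\rangle \le 0$; as $\uv \in \Uc$ was arbitrary, $\uv^*$ is proportionally fair. Combining the two directions yields the equivalence on convex $\Uc$, and uniqueness of ``the'' NBS (equivalently, of the PF point) again follows from strict concavity of $g$.

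I do not expect a genuine obstacle: the whole content is the global gradient inequality for concave functions together with the first-order optimality characterization on convex sets. The one point requiring care is precisely why the non-convex half goes through — namely that $g(\uv) \le g(\uv^*) + \langle \nabla g(\uv^*), \uv - \uv^*\rangle$ holds without assuming $\Uc$ convex — and, relatedly, stating explicitly the convention that the Nash bargaining solution is the maximizer of $\sum_i p_i \log u_i$ over $\Uc$ (disagreement point at the origin, consistent with $\Uc \subseteq \Rb_{++}^n$), so that the ``when exists'' caveat is simply the possible nonexistence of that maximizer.
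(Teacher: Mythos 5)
Your proposal is correct and follows essentially the same route as the paper: both identify the PF inequality with the first-order optimality condition for maximizing the concave objective $\sum_i p_i\log u_i$, use its sufficiency (the concave gradient inequality) to get PF $\Rightarrow$ NBS without any convexity of $\Uc$, and use the standard necessary condition on convex $\Uc$ for the converse — the paper merely phrases the non-convex half by extending the affine PF inequality to the convex hull of $\Uc$ rather than invoking the gradient inequality directly. One minor caveat: your side remarks about strict concavity and uniqueness of the NBS require all $p_i>0$ (the paper only assumes $p_i\ge 0$), but nothing in the proposition depends on them.
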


A PF solution, whenever exists, is a Nash bargaining solution over $\Uc$. While the converse also holds if $\Uc$ is convex, for nonconvex $\Uc$, PF solutions may not exist. In contrast, NBS always exists if $\Uc$ is compact, and thus we solve eq.~\ref{eq:nash_bargain} as a necessary condition of PF. From Jensen's inequality, we can show that:
\begin{align}\label{eq:jensen}
\sum_i p_i \log u_i \leq \log \sum_i p_i u_i.
\end{align}
In other words, solving the NBS yields a lower bound of the averaged utility. On the other hand, if any of the utilities is close to zero, then the left hand side of eq.~\ref{eq:jensen} would decrease to $-\infty$. 
Therefore, the NBS does not yield extremely undesirable performance for any client.
In a nutshell, the NBS achieves a balance between maximizing the average and the worst-case utilities.

%\begin{figure}[15]{r}{0.47\textwidth}
\begin{figure}
    \centering
    %\vspace{-1.5em}
    \includegraphics[width=5cm]{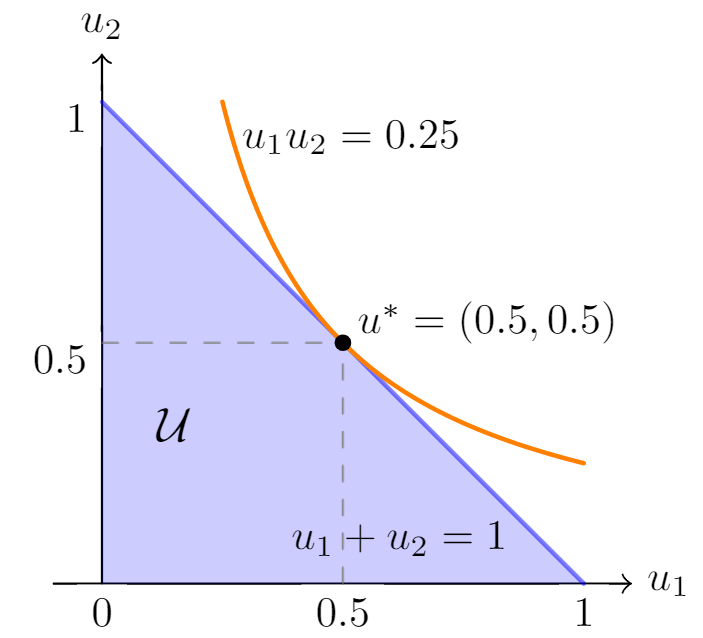}
    %\vspace{-0.5em}
    \caption{Figure inspired by \citet{nash1950bargaining}. $\Uc$: the feasible set of utilities. Blue line: maximizers of the total utility, on which the Nash bargaining solution $\uv^*$ stands out as the fairest.}
    \label{fig:nash}
\end{figure}

\subsection{The PropFair algorithm for federated learning}\label{sec:propfair}

In order to realize proportional fairness in FL, we need to solve eq.~\ref{eq:nash_bargain}. With parametrization of $u_i$, \blue{the utility set $\Uc$ becomes the set of all possible choices of $(u_1(\thetav), \dots, u_n(\thetav))$}, and our goal is to find a global model $\thetav$ to solve eq.~\ref{eq:nash_bargain}:
\begin{align}\label{eq:eval_metric}
\max_{\thetav} \sum_i p_i \log u_i(\thetav).
\end{align}

\subsubsection{What is the right choice of utilities?}\label{sec:choice_util}
One immediate question is: \emph{how do we define these utilities in FL?} Ideally, the utility should be the test accuracy, which is unfortunately not amenable to optimize. Instead, we could use the training loss $f_i$. There are a few alternatives:
\begin{itemize}[topsep=0pt,itemsep=0pt]
\item Replace $u_i$ with $f_i$ as done in $q$-FFL, and minimize \blue{the aggregate loss,} $\sum_i p_i \log f_i$;
\item Replace $u_i$ with $f_i$ as done in $q$-FFL, and maximize \blue{the aggregate utility,} $\sum_i p_i \log f_i$;
\item Choose $u_i = M - f_i$, and maximize $\sum_i p_i \log(M - f_i)$, with $M$ some hyperparameter to be determined.
\end{itemize}

%\textbf{Why not the product of loss functions?} 
The first approach will encourage the client losses to be even more disparate. For instance, suppose $p_1 = p_2 = \frac12$, and then $(f_1, f_2) = (\frac13, \frac23)$ has smaller product than $(f_1, f_2) = (\frac12, \frac12)$. The second approach is not a choice either as it is at odds with minimizing client losses. Therefore, we are left with the third option. By contrast, for any $M \geq 1$ and $p_1 = p_2 = 1/2$, one can show that $(f_1, f_2) = (\frac12, \frac12)$ always gives a better solution than $(f_1, f_2) = (\frac13, \frac23)$. The resulting objective becomes:
\begin{align}\label{eq:training_loss}
%\textstyle
\min_{\thetav} \pi(\thetav) := -\sum_i p_i \log(M - f_i(\thetav)).
\end{align}

%For the purpose of optimization, we take a simple approach with $u_i(\thetav) = M - f_i(\thetav),$ where $M$ is some utility baseline and $f_i$ is the loss function. 

\subsubsection{Huberization}

However, the objective eq.~\ref{eq:training_loss} also raises issues: what if $M - f_i$ is small and blows up the gradient, or even worse, what if $M - f_i$ is negative and the logarithm does not make sense at all? Inspired by Huber's approach of robust estimation \citep{huber1992robust}, we propose a ``huberized'' version of eq.~\ref{eq:training_loss}:
\begin{align}\label{eq:training_loss_huber}
%\textstyle
\min_{\thetav} -\sum_i p_i \log_{[\epsilon]}(M - f_i(\thetav)), \mbox{ with }\log_{[\epsilon]}(M - t) := \begin{cases}
\log(M - t),\mbox{ if }t \leq M - \epsilon, \\
\log \epsilon - \frac{1}{\epsilon}(t - M + \epsilon), \mbox{ if }t > M - \epsilon.
\end{cases}
\end{align}
Essentially, $\log_{[\epsilon]}(M - t)$ is a robust $\mathcal{C}^1$ extension of $\log(M - t)$ from $[0, M - \epsilon]$ to $\Rb_+$: its linear part ensures that at $t= M - \epsilon$, both the value and the derivative are continuous. If any $f_i$ is close or greater than $M$, then eq.~\ref{eq:training_loss_huber} switches from logarithm to its linear version. Based on eq.~\ref{eq:training_loss_huber} we propose \Cref{alg:prop_fair} called \emph{PropFair}. It modifies FedAvg \citep{mcmahan2017communication} with a simple drop-in replacement, by replacing the loss of each batch $\ell_S^i$ with $\log_{[\epsilon]}(M - \ell_S^i(\thetav))$. This allows easy adaptation of PropFair with minimal change into any of the current FL platforms, such as Flower \citep{beutel2020flower} and Tensorflow Federated.\footnote{\url{https://www.tensorflow.org/federated}} \blue{Also note that in \Cref{alg:prop_fair} we average over the batch before the composition with $\log_{[\epsilon]}$. This order cannot be switched since otherwise the local variance will be $m$ times larger (see eq.~\ref{eq:wrong_average}). }

\textbf{Remark.} When $M \to \infty$ and $f_i(\thetav)$ is small compared to $M$, the loss function for client $i$ becomes: $$f_i^{\rm log}(\thetav) = -\log(M - f_i(\thetav)) \approx -\log M + \frac{f_i(\thetav)}{M}.$$ Thus, FedAvg can be regarded as a first-order approximation of PropFair. We utilize this approximation in our implementation. Another way to obtain FedAvg is to take $\epsilon = M$ and thus $\log_{[\epsilon]}(M - t)$ always uses the linear branch. In contrast, if $\epsilon \to 0$, then eq.~\ref{eq:training_loss_huber} becomes more similar to eq.~\ref{eq:training_loss}. 
% \textbf{Remark.} 
% To avoid negative numbers in the logarithm, we replace $-\log(M - \ell_S^i(\thetav))$ with $\ell_S^i(\thetav)/M$ if $M - \ell_S^i(\thetav) < \epsilon$ (here $\epsilon > 0$). An added benefit is that it also stabilizes the optimization procedure, since if $M - \ell_S^i(\thetav)$ is small for some batch, taking the gradient of $\log(M - \ell_S^i(\thetav))$ would lead to a large effective learning rate. At a later stage of training, $\ell_S^i(\thetav)$ is usually small and we are effectively minimizing the objective eq.~\ref{eq:training_loss}. We will revisit the convergence property in \Cref{sec:convergence}.  %\todo{how it approaches the convergence}

\begin{algorithm}[tb]
\caption{PropFair}
\label{alg:prop_fair}
{\bfseries Input:} global epoch $T$, client number $n$, loss function $f_i$ for client $i$, number of samples $n_i$ for client $i$, initial global model $\thetav_0$, local step number $K_i$, baseline $M$, threshold $\epsilon$, $p_i = n_i/N$, batch size $m$, learning rate $\eta$
%(default 0.2) 
\\
\For{$t$ {\bfseries in} $0, 1 \dots T - 1$} 
{randomly select $\Cc_t \subseteq [n]$ \\
$\x{i}_{t, 0} = \thetav_{t}$ for $i\in \Cc_t$, $N = \sum_{i\in \Cc_t} n_i$\\
\For(\tcp*[h]{in parallel}){$i$ {\bfseries in} $\Cc_t$}
    {$j = 1$, draw $K_i$ \blue{mini-}batches of samples from client $i$\\
    \For{$S_i$ {\bfseries in} the $K_i$ batches}
    {%
    $\ell_{S_i}(\thetav) = \frac{1}{|S_i|}\sum_{(\xv,y)\in S_i}\ell(\thetav, (\xv, y))$ \\
    %\colorbox{SkyBlue}{
    ${f_i^{\rm log}}(\thetav) = -\log_{[\epsilon]}(M - \ell_{S_i}(\thetav))$% \textbf{ if }$M - \ell_S^i(\thetav) \geq \epsilon$ \textbf{ else }$\ell_S^i(\thetav)/M$%}
    \\
    $\x{i}_{t, j} \leftarrow \x{i}_{t, j-1} - \eta \nabla f_i^{\rm log}(\x{i}_{t, j-1})$, $j \leftarrow j + 1$}
    }
$\thetav_{t+1} = \sum_{i\in S_t} p_i \x{i}_{t, K_i}$ \\
}
\textbf{Output:} global model $\thetav_T$
\end{algorithm}

\subsection{Dual view of PropFair}\label{sec:dual_propfair}

With the dual view from \Cref{sec:dual_view}, we can also treat PropFair as minimizing a weighted combination of loss functions (plus constants), similar to other fair FL algorithms. Note that if $\varphi(f_i) = -\log (M - f_i)$ in eq.~\ref{eq:gen_mean}, then we have PropFair (see \Cref{tab:fairness_fl}):
\begin{proposition}[\textbf{dual view of PropFair}]
The generalized mean eq.~\ref{eq:gen_mean} for PropFair can be written as:
\begin{align}
\Msf_\varphi(\vf) = \max_{\lambdav \geq \zero, \prod_i (\lambda_i/p_i)^{p_i} \geq 1} \lambdav^\top \vf - M(\lambdav^\top \one - 1),
\end{align}
Solving the inner maximization of eq.~\ref{eq:convex_conjugate} gives $\prod_{i=1}^n \left(\frac{\lambda_i}{p_i} \right)^{p_i} = 1$ and $\lambda_i \propto \tfrac{p_i}{M-f_i}$.
\end{proposition}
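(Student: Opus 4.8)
The plan is to compute the generalized mean $\Msf_\varphi$ explicitly, identify its convex conjugate $\Msf_\varphi^*$, and recognize the inner optimization as the equality case of the weighted AM--GM inequality. First I would invert $\varphi(t) = -\log(M-t)$, which gives $\varphi^{-1}(s) = M - e^{-s}$, so that from eq.~\ref{eq:gen_mean} with $\varphi(f_i) = -\log(M-f_i)$,
\begin{align}
\Msf_\varphi(\vf) = M - \exp\Big(\textstyle\sum_i p_i \log(M - f_i)\Big) = M - \prod_{i=1}^n (M - f_i)^{p_i},
\end{align}
on the natural domain $\{\vf : f_i < M \text{ for all } i\}$. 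Since the weighted geometric mean $\vg \mapsto \prod_i g_i^{p_i}$ is concave on $\Rb_{++}^n$ and $\vf \mapsto M\one - \vf$ is affine, $\Msf_\varphi$ is convex (equivalently, $-\varphi'/\varphi'' = -(M - t)$ is convex, so the criterion of \citet{BenTalTeboulle86} invoked before eq.~\ref{eq:convex_conjugate} applies), and eq.~\ref{eq:convex_conjugate} is a genuine equivalence.

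Next I would substitute $g_i := M - f_i > 0$, under which $\lambdav^\top \vf - M(\lambdav^\top\one - 1) = M - \lambdav^\top\vg$; hence the claimed max-formula is equivalent to
\begin{align}
\prod_{i=1}^n g_i^{p_i} = \min_{\lambdav \geq \zero,\ \prod_i (\lambda_i/p_i)^{p_i} \geq 1} \ \sum_{i=1}^n \lambda_i g_i .
\end{align}
For the "$\leq$" direction (the minimum upper-bounds the product), I would apply weighted AM--GM: $\sum_i \lambda_i g_i = \sum_i p_i (\lambda_i g_i/p_i) \geq \prod_i (\lambda_i g_i/p_i)^{p_i} = \big(\prod_i (\lambda_i/p_i)^{p_i}\big)\prod_i g_i^{p_i} \geq \prod_i g_i^{p_i}$, where the last step uses the constraint. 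For "$\geq$", I would exhibit the candidate minimizer $\lambda_i = p_i\big(\prod_j g_j^{p_j}\big)/g_i$: a direct check shows $\prod_i(\lambda_i/p_i)^{p_i} = 1$ (so it is feasible, with the constraint active) and $\sum_i \lambda_i g_i = \big(\prod_j g_j^{p_j}\big)\sum_i p_i = \prod_j g_j^{p_j}$. Translating back, $\lambda_i \propto p_i/(M - f_i)$ with $\prod_i(\lambda_i/p_i)^{p_i} = 1$, which is precisely the stated maximizer. To fully justify that $\Msf_\varphi^*(\lambdav) = M(\lambdav^\top\one-1)$ on the feasible region and $+\infty$ elsewhere, I would separately rule out the diverging directions of $\lambdav$: if some $\lambda_i = 0$, letting $g_i\to\infty$ sends $\prod_i g_i^{p_i} - \lambdav^\top\vg\to\infty$; and if $\prod_i(\lambda_i/p_i)^{p_i}<1$, scaling $\vg$ along the ray on which AM--GM holds with equality does the same.

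I do not expect a serious obstacle; the mathematical content is essentially the equality case of weighted AM--GM together with a routine conjugate computation. The only points needing care are bookkeeping ones: maintaining the domain restriction $f_i < M$ throughout, distinguishing the variational feasible set (where the product of $(\lambda_i/p_i)^{p_i}$ is $\geq 1$) from the optimum (where it equals $1$), and handling the degenerate/unbounded directions of $\lambdav$ so that the biconjugate returns $\Msf_\varphi$ itself rather than a strictly smaller convex envelope.
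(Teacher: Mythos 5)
Your proposal is correct and follows essentially the same route as the paper's Appendix derivation: write the generalized mean explicitly as $M - \prod_i (M-f_i)^{p_i}$, identify the conjugate $\Msf_\varphi^*(\lambdav) = M(\lambdav^\top\one - 1)$ on the feasible set (with the same divergent-direction case analysis for $\lambda_i = 0$ and $\prod_i(\lambda_i/p_i)^{p_i} < 1$), and solve the inner maximization via the weighted AM--GM inequality, whose equality case yields $\prod_i(\lambda_i/p_i)^{p_i} = 1$ and $\lambda_i \propto p_i/(M-f_i)$. Your substitution $g_i = M - f_i$ and the explicit minimizer check are only a cosmetic repackaging of the same argument.
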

Similar to TERM/$q$-FFL, PropFair puts a larger weight on worse-off clients with a larger loss. 
%We will compare these algorithms in \Cref{sec:exp}. 
%, since $\lambda_i$ increases with $f_i$. %Therefore, it achieves enhanced equity. 

\vspace{-0.5em}
\section{The optimization side of PropFair}\label{sec:convergence}
\vspace{-0.5em}

In this section, we discuss the convexity of our PropFair objective and show the convergence guarantee of Algorithm \ref{alg:prop_fair}. This gives \blue{formal fairness guarantee} for our algorithm, and potentially for the convergence of many others in the scalarization class, eq.~\ref{eq:separable}. For simplicity we only study the case when $f_i \leq M - \epsilon$ for all $i$.  

\vspace{-0.3em}
\subsection{Convexity of the PropFair objective}
\vspace{-0.3em}

Convexity is an important and desirable property in optimization \citep{boyd2004convex}. With convexity, every stationary point is optimal \citep{bertsekas1997nonlinear}. From the composition rule, if $f_i$ is convex for each client $i$, then $M - f_i$ is concave, and thus $\sum_i p_i \log(M- f_i(\thetav))$ is concave as well \citep[e.g.,][]{boyd2004convex}. In other words, for convex losses, our optimization problem eq.~\ref{eq:training_loss} is still convex as we are maximizing over concave functions. Moreover, our PropFair objective is convex \emph{even} when $f_i$'s are not. For example, this could happen if $f_i(\thetav) = M - \exp(\thetav^\top {\bf A}_i \thetav)$ and each ${\bf A}_i$ is a positive definite matrix. In fact, it suffices to require each $M - f_i$ to be log-concave \citep{boyd2004convex}.  %Although PropFair allows for a broader class, in general the PropFair objective can still be non-convex.
%\todo{log convex} %In fact, most deep learning problems are non-convex.
\vspace{-0.3em}
\subsection{Adaptive learning rate and curvature}
\vspace{-0.3em}

Denote $\varphi(t) = -\log(M - t)$.
We can compute the $1^{\rm st}$- and $2^{\rm nd}$-order derivatives of $\varphi \circ f_i$:
\begin{align}\label{eq:gradients}
&\nabla (\varphi \circ f_i) = \frac{\nabla f_i}{M - f_i}, \; \nabla^2 (\varphi \circ f_i) = \frac{ (M - f_i) \nabla^2 f_i + (\nabla f_i)(\nabla f_i)^\top}{(M - f_i)^2}. 
\end{align}
%From the above equations we observe that:
%\begin{itemize}[leftmargin=*]
%\item
This equation tells us that at each local gradient step, the gradient $\nabla (\varphi \circ f_i)$ has the same direction as $\nabla f_i$, and the only difference is the step size. Compared to FedAvg, PropFair automatically has an \emph{adaptive learning rate} for each client. When the local client loss function $f_i$ is small, the learning rate is smaller; when $f_i$ is large, the learning rate is larger. This agrees with our intuition that to achieve fairness, a worse-off client should be allowed to take a more aggressive step, while a better-off client moves more slowly to ``wait for'' other clients.
%\item 

In the Hessian $\nabla^2 (\varphi \circ f_i)$, an additional positive semi-definite (p.s.d.) term $(\nabla f_i)(\nabla f_i)^\top$ is added. Thus, $\nabla^2 (\varphi \circ f_i)$ can be p.s.d.~even if the original Hessian $\nabla^2 f_i$ is not. 
Moreover, the denominator $(M - f_i)^2$ has a similar effect of coordinating the curvatures of various clients as in the gradients.
%\end{itemize} 

\vspace{-0.3em}
\subsection{Convergence results}\label{sec:convergence_2}
\vspace{-0.3em}

Let us now formally prove the convergence of PropFair by bounding its progress, using standard assumptions \citep{li2019convergence, reddi2020adaptive} such as Lipschitz smoothness and bounded variance. Every norm discussed in this subsection is Euclidean (including the proofs in Appendix~\ref{sec:proofs}). 

In fact, PropFair can be treated as an easy variant of FedAvg, with the local objective $f_i$ replaced with $f_i^{\log}$. Therefore, we just need to prove the convergence of FedAvg and the convergence of PropFair would \blue{follow similarly}. In general, similar results also hold for objectives in the form of eq.~\ref{eq:separable}.

% Recall that:
% \begin{align}
% %\textstyle
% f_i(\thetav) := \Eb_{(\xv, y) \sim \Dc_i} [\ell(\theta, (\xv, y))], \, F(\thetav) = \frac{1}{n} \sum_{i=1}^n f_i(\thetav).\nonumber
% \end{align} 
Let us state the assumptions first. Since in practice we use stochastic gradient descent (SGD) for training, we consider the effect of mini-batches. We also assume that the (local) variance of mini-batches and the (global) variance among clients are bounded. 

%\blue{For convenience we denote $\ell_S^i$ as the averaged loss on batch $S$, which is composed of $m$ i.i.d.~samples drawn from client $i$'s data distribution $\Dc_i$:
% $$
% \ell_S^i(\thetav) := \frac{1}{|S|}\sum\nolimits_{(\xv, y)\in S}\ell (\thetav, (\xv, y)), \, \mbox{ with }S \sim \Dc_i^m.
% $$
% }
\begin{assump}[\textbf{Lipschitz smoothness and bounded variances}]\label{assump:lip_smooth}
Each function $f_i$ is ${L}$-Lipschitz smooth, i.e., for any $\thetav, \thetav' \in \Rb^d$ and any $i\in [n]$, we have $\|\nabla f_i(\thetav) - \nabla f_i(\thetav')\| \leq {L} \|\thetav - \thetav'\|$;
For any $i, j \in [n]$, $f_i - f_j$ is $\sigma$-Lipschitz continuous and
$\mathop{\Eb}_{(\xv, y) \sim \Dc_i} \left\|\nabla \ell(\thetav, (\xv, y)) - \nabla f_i(\thetav)\right\|^2 \leq \sigma_{i}^2$ $\forall \thetav \in \Rb^d$.
% \begin{align}
% \textstyle
% &\mathop{\Eb}_{S\sim \Dc_i^m} \Big\|\tfrac{1}{|S|}\sum\nolimits_{(\xv, y)\in S}\nabla \ell (\theta, (\xv, y)) - \nabla f_i(\thetav)\Big\|^2 \leq \sigma_{i}^2. \nonumber 
% \end{align}
\end{assump}

\begin{algorithm}
\caption{FedAvg}
\label{alg:fedavg}
{\bfseries Input: } global epoch $T$, client number $n$, loss function $f_i$, number of samples $n_i$ for client $i$, initial global model $\thetav_0$, local step number $K_i$ for client $i$, batch size $m$, learning rate $\eta$, $p_i = n_i/N$ \\
\For{$t$ in $0, 1 \dots T - 1$}
{randomly select $\Cc_t \subseteq [n]$ \\
$\x{i}_{t} = \thetav_t$ for $i\in \Cc_t$, $N = \sum_{i\in \Cc_t} n_i$ \\
\For(\tcp*[h]{in parallel}){$i$ in $\Cc_t$}
{starting from $\x{i}_{t}$, take $K_i$ local SGD steps on $f_i$ to find $\x{i}_{t+1}$}
$\thetav_{t+1} = \sum_{i\in \Cc_t} p_i \x{i}_{t+1}$ \\
}
{\bfseries Output:} global model $\thetav_T$
\end{algorithm}

Following the notations of \cite{reddi2020adaptive}, we use $\sigma_{}^2$ and $\sigma_{i}^2$ to denote the global and local variances for client $i$. 
%\blue{Note that the local variance $\sigma_i^2$ can be controlled if $\ell(\theta, (\xv, y))$ is an unbiased estimator of $f_i(\theta)$ for $(\xv, y) \sim \Dc_i$. } 
This assumption allows us to obtain the convergence result for FedAvg (see \Cref{alg:fedavg}).
For easy reference, we include FedAvg \citep{mcmahan2017communication} in \Cref{alg:fedavg}, whose goal is to optimize the overall performance. At each round, each client takes local SGD steps to minimize the loss function based on the client data. Afterwards, the server computes a weighted average of the parameters of these participating clients, and shares this average among them. Note that for client $i$, the number of local steps is $K_i$ with learning rate $\eta$.  In line 3 of \Cref{alg:fedavg}, if $\Cc_t = [n]$ then we call it \emph{full participation}, otherwise it is called \emph{partial participation}. We prove the following convergence result of FedAvg. Note that we defined $F$ in eq.~\ref{eq:fedavg}, and $m$ is the batch size.

\begin{restatable}[\textbf{FedAvg}]{theorem}{FedAvg}\label{thm:fedavg}
Given Assumption \ref{assump:lip_smooth}, assume that the local learning rate satisfies $\eta K_i \leq \frac{1}{6L}$ for any $i\in [n]$ and
\begin{align}\label{eq:assumption_eta}
\eta \leq \frac1L\sqrt{\frac{1}{24(e-2)(\sum_i p_i^2)(\sum_i K_i^4)}}.
\end{align}
Running \Cref{alg:fedavg} for $T$ global epochs we have:
\begin{align}
\min_{0\leq t \leq T - 1} \Eb \|\nabla F(\thetav_t)\|^2 \leq \frac{12}{(11\mu - 9)\eta}\left( \frac{F_0 - F^*}{T} + \Psi_{\sigma}\right), \nonumber
\end{align}
with $\mu = \sum_i p_i K_i$ for full participation and $\mu = \min_i K_i$ for partial participation, $F_0 = F(\thetav_0)$, $F^* = \min_{\thetav} F(\thetav)$ the optimal value, and
\begin{align}
%\textstyle
\blue{\Psi_{\sigma} = {\eta} \|\vp\|^2 \left[ \sum_{i=1}^n K_i^2 \left(\frac{L\eta \sigma_{i}^2}{2m} + \sigma_{}^2\right) + (e - 2) \eta^2 L^2 \sum_{i=1}^n K_i^3 \left(\frac{\sigma_{i}^2}{m}  + 6 K_i \sigma_{}^2\right) \right], \, \vp = (p_1, \dots, p_n). } \nonumber
\end{align}
\end{restatable}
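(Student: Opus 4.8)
The statement is a (by now fairly standard) convergence guarantee for local‑SGD / FedAvg in the smooth nonconvex regime with bounded within‑client and between‑client variances, so the plan follows the descent‑lemma template, with every constant tracked so as to land on the stated $\Psi_\sigma$ and the coefficient $\tfrac{12}{(11\mu-9)\eta}$. (Once this is established the convergence of \Cref{alg:prop_fair} follows by applying it with $f_i$ replaced by $f_i^{\log}$, whose smoothness constant and variance bounds on the region $\{f_i\le M-\epsilon\}$ are controlled through eq.~\ref{eq:gradients} and $M-f_i\ge\epsilon$.) First I would invoke $L$‑smoothness of $F$ (inherited from the $f_i$'s) at the server iterates,
\begin{align}
F(\thetav_{t+1}) \le F(\thetav_t) + \langle \nabla F(\thetav_t),\, \thetav_{t+1}-\thetav_t\rangle + \tfrac{L}{2}\|\thetav_{t+1}-\thetav_t\|^2 , \nonumber
\end{align}
and substitute the aggregated server update $\thetav_{t+1}-\thetav_t = -\eta\sum_{i\in\Cc_t} p_i\sum_{k=0}^{K_i-1}\g{i}_{t,k}$ of \Cref{alg:fedavg}, where $\g{i}_{t,k}$ is the stochastic gradient of $f_i$ at the $k$‑th local iterate $\x{i}_{t,k}$ of round $t$; then take expectation, conditioning on $\thetav_t$ (and additionally averaging over the sampled set $\Cc_t$ in the partial‑participation case).

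The cross term becomes $-\eta\,\langle \nabla F(\thetav_t), \sum_i p_i\sum_k \nabla f_i(\x{i}_{t,k})\rangle$, which I split into the ``clean'' descent term $-\eta\mu\|\nabla F(\thetav_t)\|^2$ — with $\mu=\sum_i p_i K_i$ under full participation and $\mu=\min_i K_i$ under partial participation, since the latter only guarantees that the aggregated pseudo‑gradient is unbiased with useful component floored by the smallest $K_i$ — plus a remainder (a client‑drift term) bounded by $\sum_{i,k}\Eb\|\x{i}_{t,k}-\thetav_t\|^2$ via Young's inequality. The key lemma is the drift bound: unrolling the local‑SGD recursion and using $\|\nabla f_i(\x{i}_{t,k})\|^2\le 3\|\nabla F(\thetav_t)\|^2 + 3L^2\|\x{i}_{t,k}-\thetav_t\|^2 + 3\sigma^2$ (the last term from $\|\nabla f_i - \nabla F\| = \|\sum_j p_j(\nabla f_i-\nabla f_j)\|\le\sigma$, using the $\sigma$‑Lipschitz assumption on $f_i-f_j$), one obtains a recursion in $k$ whose geometric growth factor is kept below a constant by the hypothesis $\eta K_i\le\tfrac{1}{6L}$; summing over $k\le K_i-1$ produces the factor $(e-2)$ from the elementary bound $(1+\tfrac{1}{K_i-1})^{K_i-1}<e$. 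For the quadratic term I expand $\Eb\|\sum_i p_i\sum_k\g{i}_{t,k}\|^2\le \|\sum_i p_i\sum_k\nabla f_i(\x{i}_{t,k})\|^2 + \|\vp\|^2\sum_i \tfrac{K_i\sigma_i^2}{m}$, splitting mean from mini‑batch noise and reducing the mean part again to $\|\nabla F(\thetav_t)\|^2$ plus drift; the explicit cap eq.~\ref{eq:assumption_eta} on $\eta$ is precisely what makes the net coefficient of $\|\nabla F(\thetav_t)\|^2$, after collecting all positive contributions, stay of the form $\eta(11\mu-9)/12>0$.

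Putting the pieces together yields a per‑round inequality
\begin{align}
\tfrac{\eta(11\mu-9)}{12}\,\Eb\|\nabla F(\thetav_t)\|^2 \le \Eb\!\left[F(\thetav_t)-F(\thetav_{t+1})\right] + \Psi_\sigma , \nonumber
\end{align}
where $\Psi_\sigma$ collects the irreducible noise terms: the $K_i^2\big(\tfrac{L\eta\sigma_i^2}{2m}+\sigma^2\big)$ pieces from the leading‑order drift and the mini‑batch variance in the quadratic term, and the $(e-2)\eta^2L^2K_i^3\big(\tfrac{\sigma_i^2}{m}+6K_i\sigma^2\big)$ pieces from the higher‑order correction in the drift recursion. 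Summing over $t=0,\dots,T-1$, the left side telescopes against $F_0-F(\thetav_T)\le F_0-F^*$; dividing by $T$ and lower‑bounding the average of $\Eb\|\nabla F(\thetav_t)\|^2$ by $\min_{0\le t\le T-1}\Eb\|\nabla F(\thetav_t)\|^2$ gives the claimed bound.

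The main obstacle is the constant‑tracking in the drift analysis under heterogeneity and partial participation: one must choose the Young's‑inequality weights in the cross‑term split so that the recovered negative term $-\eta\mu\|\nabla F(\thetav_t)\|^2$ is only mildly eroded (producing the slightly ungainly $11\mu-9$ rather than a clean multiple of $\mu$) while every stray positive term is either absorbed into the descent or deposited into $\Psi_\sigma$, and one must verify that $\eta K_i\le\tfrac{1}{6L}$ together with eq.~\ref{eq:assumption_eta} simultaneously bounds the drift recursion's growth factor and renders the quadratic‑term coefficient negligible. Since the problem is nonconvex, optimality of stationary points is unavailable, so the entire argument must run through $\|\nabla F\|^2$ alone, and the constants cannot be hidden in $O(\cdot)$ notation if one wants the precise $\Psi_\sigma$ of the statement.
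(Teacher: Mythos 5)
Your proposal is correct and follows essentially the same route as the paper's proof: descent lemma at the server iterates, splitting the cross term into the $-\eta\mu\|\nabla F(\thetav_t)\|^2$ descent piece plus a client-drift remainder, bounding heterogeneity by $\sigma$ via the Lipschitzness of $f_i-f_j$, controlling the drift recursion with $\eta K_i\le\frac{1}{6L}$ and the $(1+1/K_i)^{K_i}\le e$ bound to produce the $(e-2)$ factor, and using eq.~\ref{eq:assumption_eta} to keep the gradient coefficient at $\frac{(11\mu-9)\eta}{12}$ before telescoping. The only cosmetic deviation is your martingale-style treatment of the mini-batch noise (giving $K_i\sigma_i^2/m$ where the paper's Cauchy--Schwarz gives $K_i^2\sigma_i^2/m$), which only tightens the bound and still yields the stated $\Psi_\sigma$.
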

\blue{Our result is quite general: we allow for both full and partial participation; multiple and heterogeneous local steps; and non-uniform aggregation with weight $p_i$. The variance term ${\Psi}_{\sigma}$ decreases with smaller local steps $K_i$, which agrees with our intuition that each $K_i$ should be as small as possible given the communication constraint. Moreover, to minimize $\|\vp\|^2$ we should take $p_i = 1/n$ for each client $i$, which means if the samples are more evenly distributed across clients, the error is smaller. In presence of convexity, we can see that 
FedAvg converges to a neighborhood of the optimal solution, and the size of the neighborhood is controlled by the heterogeneity of clients and the variance of mini-batches. When we have the global variance term ${\sigma}_{} = 0$, our result reduces to the standard result of stochastic gradient descent (e.g., \citealt{ghadimi2013stochastic}), since we have $$\min_{0\leq t \leq T - 1} \Eb \|\nabla F(\thetav_t)\|^2 \leq \frac{12}{(11\mu - 9)\eta} \frac{F_0 - F^*}{T} + O(\eta),$$
and by taking $\eta = O(1/\sqrt{T})$, we obtain $\min_{0\leq t \leq T - 1} \Eb \|\nabla F(\thetav_t)\|^2 = O(1/\sqrt{T})$. 
}

We note that we are not the first to prove the convergence of FedAvg. \blue{For instance,
%Compared to our result, \citet{reddi2020adaptive} only considers a homogeneous number of steps for each client, and uniform aggregation for Adam-type methods. 
\citet{li2019convergence} assumes that each function $f_i$ is strongly convex and each client takes the same number of local steps;
\citet{karimireddy2020scaffold} assumes the same number of local steps, gradient bounded similarity and uniform weights $p_i = 1/n$.}
These assumptions may not reflect the practical use of FedAvg. For example, usually each client has a different number of samples and they may take different numbers of local updates. Moreover, for neural networks, (global) strong convexity is usually not present. \blue{Compared to these results, we consider different local client steps, heterogeneous weights and partial participation in the non-convex case, which is more realistic. }

Based on \Cref{thm:fedavg}, we can similarly prove the convergence of other FL algorithms which minimize eq.~\ref{eq:separable}, if there are some additional assumptions. For the PropFair algorithm, as an example, we need to additionally assume the Lipschitzness and bounded variances for the client losses:
%\todo{move FedAvg to appdx}
%, due to eq.~\ref{eq:gradients}. 
%To apply \Cref{thm:fedavg}, we need an additional assumption about the client function $f_i$:
\begin{assump}[\textbf{boundedness, Lipschitz continuity and bounded variances for client losses}]\label{assump:bound_lip}
For any $i \in [n]$, $\thetav \in \Rb^d$ and any batch $\blue{S_i}\sim \Dc_i^m$ of $m$ i.i.d.~samples, we have: 
$$0 \leq \ell_{S_i}(\thetav) := \frac{1}{|S_i|}\sum\nolimits_{(\xv, y)\in S_i}\ell (\thetav, (\xv, y)) \leq \frac{M}{2},
$$ 
and for any $\thetav, \thetav' \in \Rb^d$, $\|f_i(\thetav) - f_i(\thetav')\| \leq L_{0}\|\thetav - \thetav'\|$ holds. We also assume that for any $i, j\in [n]$ and $\thetav \in \Rb^d$, $\|f_i(\thetav) - f_j(\thetav)\|^2 \leq \sigma_{0}^2$ and $\mathop{\Eb}_{(\xv, y)\sim \Dc_i} \left\|\ell (\thetav, (\xv, y)) - f_i(\thetav)\right\|^2 \leq \sigma_{0,i}^2$ hold.
\end{assump}
%With this additional assumption and \Cref{thm:fedavg} we find:
%We can also apply Theorem \ref{thm:fedavg} to our PropFair objective as a corollary. Recall the PropFair objective:
% Recall that the PropFair objective is $\pi(\thetav) := -\frac{1}{n}\sum_{i=1}^n \log(M - f_i(\thetav))$, we prove the following theorem:
%\end{align}
we can obtain the convergence guarantee of PropFair to a neighborhood of some stationary point:
%\todo{say some words}

\begin{restatable}[\textbf{PropFair}]{theorem}{PropFair}\label{thm:cvg-PF}
Denote $\widetilde{L} = \frac{4}{M^2}(\frac{3}{2}ML + L_0^2)$ and $p_i = \frac{n_i}{N}$. Given Assumptions \ref{assump:lip_smooth} and \ref{assump:bound_lip}, assume that the local learning rate satisfies: \begin{align}\label{eq:assumption_eta_0}
\blue{\eta \leq \min \left\{ \min_{i\in [n]} \frac{1}{6 \widetilde{L} K_i}, \frac{1}{8\tilde{L}}\sqrt{\frac{1}{(e-2)(\sum_i p_i^2)(\sum_i K_i^4)}} \right\}.}
\end{align}
By running Algorithm \ref{alg:prop_fair} for $T$ global epochs we have:
\begin{align}
\min_{0\leq t \leq T - 1} \Eb \|\nabla \pi(\thetav_t)\|^2 \leq \frac{12}{(11\mu - 9)\eta}\left( \frac{\pi_0 - \pi^*}{T} + \widetilde{\Psi}_{\sigma}\right), \nonumber
\end{align}
with $\mu = \sum_i p_i K_i$ for full participation and $\mu = \min_i K_i$ for partial participation, $\pi_0 = \pi(\thetav_0)$,  $\pi^* = \min_{\thetav} \pi(\thetav)$ the optimal value, and
\begin{align}
\blue{\widetilde{\Psi}_{\sigma} = {\eta} \|\vp\|^2 \bigg[ \sum_{i=1}^n K_i^2 \left(\frac{\widetilde{\sigma}_{i}^2}{m} + 2\widetilde{\sigma}_{}^2\right) + 16(e - 2) \eta^2 \tilde{L}^2 \sum_{i=1}^n K_i^4 \left(\frac{\widetilde{\sigma}_{i}^2}{m} + \widetilde{\sigma}_{}^2\right) \bigg]} \nonumber
\end{align}
where $\textstyle\widetilde{\sigma}_{i}^2 = \frac{8}{M^4}(9M^2 \sigma_{i}^2 + 4L_0^2 \sigma_{0, i}^2)$ and $\widetilde{\sigma}_{} = \frac{4}{M}\left(\frac{3}{2}\sigma + \frac{L_0}{M}\sigma_{0}\right)$. %
\end{restatable}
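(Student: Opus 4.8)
The plan is to reduce Theorem~\ref{thm:cvg-PF} to Theorem~\ref{thm:fedavg} by viewing PropFair as FedAvg run on the transformed local objectives $f_i^{\log} = \varphi \circ f_i$ with $\varphi(t) = -\log(M - t)$. Since Theorem~\ref{thm:fedavg} is proved under Assumption~\ref{assump:lip_smooth} applied to the objectives it optimizes, it suffices to verify that the transformed objectives $f_i^{\log}$ satisfy the hypotheses of Theorem~\ref{thm:fedavg} with the modified constants $\widetilde{L}$, $\widetilde{\sigma}_i^2$, $\widetilde{\sigma}$, and then simply substitute. Concretely, I would carry out the following steps.

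First, \textbf{smoothness of $f_i^{\log}$}. Using the Hessian formula in eq.~\eqref{eq:gradients}, $\nabla^2(\varphi\circ f_i) = \frac{(M-f_i)\nabla^2 f_i + (\nabla f_i)(\nabla f_i)^\top}{(M-f_i)^2}$. On the region $f_i \le M - \epsilon$ — actually using the stronger bound $\ell_{S_i} \le M/2$ hence $f_i \le M/2$ from Assumption~\ref{assump:bound_lip}, so $M - f_i \ge M/2$ — I bound the operator norm: $\|\nabla^2 f_i\| \le L$, $\|\nabla f_i\| \le L_0$, and $(M-f_i)^2 \ge M^2/4$, giving $\|\nabla^2(\varphi\circ f_i)\| \le \frac{(M-f_i)L + L_0^2}{(M-f_i)^2}$. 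Since $M - f_i \ge M/2$, the numerator is at most $(M - M/2 + \text{something})$... more carefully: $M - f_i \le M$ so numerator $\le ML + L_0^2$, and I get $\|\nabla^2(\varphi\circ f_i)\| \le \frac{4(ML + L_0^2)}{M^2}$. The paper's constant $\widetilde{L} = \frac{4}{M^2}(\frac32 ML + L_0^2)$ is of this form (the $\frac32$ slack absorbs, e.g., $M - f_i \le \frac32 M$ type crude bounds or a looser numerator estimate); I would reproduce the exact chain of inequalities the authors use to land on $\widetilde{L}$. This shows each $f_i^{\log}$ is $\widetilde{L}$-Lipschitz smooth.

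Second, \textbf{variance propagation}. The local stochastic gradient of $f_i^{\log}$ at $\thetav$ from batch $S_i$ is $\frac{\nabla \ell_{S_i}(\thetav)}{M - \ell_{S_i}(\thetav)}$ while the full gradient is $\frac{\nabla f_i(\thetav)}{M - f_i(\thetav)}$; I need to bound $\Eb\|\frac{\nabla\ell_{S_i}}{M-\ell_{S_i}} - \frac{\nabla f_i}{M-f_i}\|^2$. Write the difference as $\frac{1}{M-\ell_{S_i}}(\nabla\ell_{S_i} - \nabla f_i) + \nabla f_i(\frac{1}{M-\ell_{S_i}} - \frac{1}{M-f_i})$; the first term is controlled using $M - \ell_{S_i} \ge M/2$ and the mini-batch variance $\sigma_i^2/m$ (note the batch averaging: $\Eb\|\nabla\ell_{S_i} - \nabla f_i\|^2 \le \sigma_i^2/m$), and the second using $\|\nabla f_i\| \le L_0$, $|\frac{1}{M-\ell_{S_i}} - \frac{1}{M-f_i}| = \frac{|\ell_{S_i} - f_i|}{(M-\ell_{S_i})(M-f_i)} \le \frac{4|\ell_{S_i}-f_i|}{M^2}$ together with $\Eb|\ell_{S_i} - f_i|^2 \le \sigma_{0,i}^2/m$ (or $\sigma_{0,i}^2$; match the paper). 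Combining with $(a+b)^2 \le 2a^2 + 2b^2$ yields the local variance $\widetilde{\sigma}_i^2 = \frac{8}{M^4}(9M^2\sigma_i^2 + 4L_0^2\sigma_{0,i}^2)$. The global variance $\widetilde{\sigma}$ comes analogously by bounding $\|\frac{\nabla f_i}{M-f_i} - \frac{\nabla f_j}{M-f_j}\|$ using the $\sigma$-Lipschitz bound on $\nabla f_i - \nabla f_j$... wait, $f_i - f_j$ is $\sigma$-Lipschitz means $\|\nabla f_i - \nabla f_j\| \le \sigma$, and $\|f_i - f_j\| \le \sigma_0$, giving the stated $\widetilde{\sigma} = \frac{4}{M}(\frac32\sigma + \frac{L_0}{M}\sigma_0)$ after the triangle-inequality split; I would track the constants to match.

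Third, \textbf{assembling the bound}. With $\widetilde{L}$, $\widetilde{\sigma}_i^2$, $\widetilde{\sigma}$ in hand and the stepsize condition eq.~\eqref{eq:assumption_eta_0} being exactly eq.~\eqref{eq:assumption_eta} with $L \mapsto \widetilde{L}$ (and a slightly adjusted numerical constant $8$ vs.~$\sqrt{24(e-2)}$ that covers the factor-$16$ that shows up in $\widetilde\Psi_\sigma$), I invoke Theorem~\ref{thm:fedavg} verbatim with $F \mapsto \pi = \sum_i p_i f_i^{\log}$, which gives $\min_t \Eb\|\nabla\pi(\thetav_t)\|^2 \le \frac{12}{(11\mu-9)\eta}(\frac{\pi_0 - \pi^*}{T} + \widetilde\Psi_\sigma)$ with $\widetilde\Psi_\sigma$ the expression in Theorem~\ref{thm:fedavg} evaluated at the new constants. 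I expect the main obstacle to be the variance-propagation step: FedAvg's analysis assumes additive, unbiased gradient noise with a clean second-moment bound, but the PropFair stochastic gradient $\nabla\ell_{S_i}/(M-\ell_{S_i})$ is a \emph{nonlinear} (ratio) function of the mini-batch, hence \emph{biased} as an estimator of $\nabla f_i/(M-f_i)$. I would either (i) absorb the bias into the global-variance-like term by bounding $\|\Eb[\nabla\ell_{S_i}/(M-\ell_{S_i})] - \nabla f_i/(M-f_i)\|$ the same way (the bias is $O(\sigma_{0,i}^2/(mM^3))$-ish and is dominated by the terms already present), or (ii) note that the authors average over the batch \emph{before} composing with $\log_{[\epsilon]}$ (the remark after eq.~\eqref{eq:wrong_average}), so the relevant ``sample'' is the whole batch $S_i \sim \Dc_i^m$ and the stated per-batch bounds in Assumption~\ref{assump:bound_lip} already account for this — making the estimator's noise exactly the quantity Assumption~\ref{assump:bound_lip} bounds. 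Either way, the remaining work is the routine constant-chasing indicated above, and the restriction $f_i \le M - \epsilon$ (indeed $f_i \le M/2$) ensures all denominators stay bounded away from zero so no Huberization branch is triggered and $\varphi$ is genuinely smooth on the relevant region.
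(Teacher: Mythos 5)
Your preparatory computations (the smoothness constant of $\varphi\circ f_i$ and the propagation of the local/global variance bounds through the map $t\mapsto -\log(M-t)$, using $M-f_i\ge M/2$) follow the same lines as the paper and are fine in spirit. The genuine gap is your final step: you cannot ``invoke Theorem~\ref{thm:fedavg} verbatim with $F\mapsto\pi$,'' because the proof of Theorem~\ref{thm:fedavg} uses unbiasedness of the stochastic gradients in three essential places --- the cross term in eq.~\ref{eq:expectation_multi} (where $\Eb\,\g{i}_{t,j}=\nabla f_i(\x{i}_{t,j-1})$ is invoked), the orthogonal decomposition in eq.~\ref{eq:variance}, and the bias--variance split in eq.~\ref{eq:local_updates} --- and $\nabla(\varphi\circ\ell_{S_i})$ is a \emph{biased} estimator of $\nabla(\varphi\circ f_i)$, as you yourself note. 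This is not a cosmetic issue: the conclusion of Theorem~\ref{thm:cvg-PF} is \emph{not} Theorem~\ref{thm:fedavg} with $(L,\sigma_i,\sigma)$ replaced by $(\widetilde L,\widetilde\sigma_i,\widetilde\sigma)$. Compare $\widetilde\Psi_\sigma$ with $\Psi_\sigma$: the local-variance contribution enters at order $\eta\,\widetilde\sigma_i^2/m$ rather than $L\eta^2\sigma_i^2/(2m)$, the higher-order term is $16(e-2)\eta^2\widetilde L^2\sum_i K_i^4(\widetilde\sigma_i^2/m+\widetilde\sigma^2)$ rather than $(e-2)\eta^2L^2\sum_i K_i^3(\sigma_i^2/m+6K_i\sigma^2)$, and the step-size condition changes from the $\sqrt{24(e-2)}$ form of eq.~\ref{eq:assumption_eta} to the $1/(8\widetilde L)$ form of eq.~\ref{eq:assumption_eta_0}. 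These differences arise exactly because the paper re-derives the FedAvg recursion without unbiasedness: the cross term is handled by Cauchy--Schwarz on the full deviation $\tg{i}_{t,j}-\nabla\pi(\thetav_t)$, the orthogonal decomposition is replaced by $\|\va+\vb\|^2\le2(\|\va\|^2+\|\vb\|^2)$, and the local-drift recursion uses a four-term split yielding $8K_i\eta^2$ in place of $6K_i\eta^2$. A verbatim substitution argument cannot produce the stated bound.

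Of your two proposed patches, (ii) does not address the problem: averaging over the batch \emph{before} composing with $\log_{[\epsilon]}$ is what keeps the variance bound from inflating by a factor of $m$ (the point of the remark around eq.~\ref{eq:wrong_average}); it does not make $\nabla(\varphi\circ\ell_{S_i})$ unbiased for $\nabla(\varphi\circ f_i)$, and nothing in Assumption~\ref{assump:bound_lip} supplies unbiasedness of the composed estimator. Patch (i) can be developed into a correct argument --- the bias is bounded by $\widetilde\sigma_i/\sqrt m$ via Jensen (your $O(\sigma_{0,i}^2/(mM^3))$ Taylor estimate is not justified by the stated assumptions, which control only second moments), and carrying it through the descent lemma amounts to exactly the kind of re-derivation the paper performs. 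But then you are no longer quoting Theorem~\ref{thm:fedavg}; you must redo Parts I--IV with the weakened inequalities and track the altered constants, which is the substantive content of the paper's proof and is left unestablished in your proposal.
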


\blue{Our \Cref{thm:cvg-PF} inherits similar advantages from \Cref{thm:fedavg}. One major difference is that when $\tilde{\sigma} = 0$, one cannot retrieve the same rate of SGD. This is expected since each batch $\varphi \circ \ell_{S_i}$ is no longer an unbiased estimator $\varphi \circ f_i$ due to the composition. Nevertheless, due to data heterogeneity in FL, the global variance $\tilde{\sigma}$ is often large, in which case the local variance term $\tilde{\sigma}_i^2/m$ in $\widetilde{\Psi}_{\sigma}$ can be comparable to $\tilde{\sigma}^2$ by controlling the batch size $m$. 
}

\section{Experiments}\label{sec:exp}

In this section, we verify properties of PropFair by answering the following questions: (1) can PropFair achieve proportional fairness as in eq.~\ref{eq:prop_fair}? (2) what balance does PropFair achieve between the average and worst-case performances? We report them separately in \Cref{sec:verify_propfair} and \Cref{sec:balance}.

\subsection{Experimental setup}

%\todo{can move this to an appendix}
We first give details on our datasets, models and hyperparameters, which are in accordance with existing works. See \Cref{app:exp} for additional experimental setup. A comprehensive survey of benchmarking FL algorithms can be found in e.g.~\citet{caldas2018leaf, he2020fedml}. %More detailed experimental setup can be found in \Cref{sec:datasets}. %\todo{may move the details to appdx B}

\textbf{Datasets.} We follow standard benchmark datasets as in the existing literature, including CIFAR-\{10, 100\} \citep{krizhevsky2009learning}, TinyImageNet \citep{le2015tiny} and Shakespeare \citep{mcmahan2017communication}. For vision datasets (CIFAR-\{10, 100\}/TinyImageNet), the task is image classification, and following \citet{wang_federated_2019} we use Dirichlet allocation to split the dataset into different clients. For the language dataset (Shakespeare), the task is next-character prediction. We use the default realistic partition based on different users. We first partition the dataset into different clients, and further split each client dataset into its own training and test sets. This reflects the real scenario, where each client evaluates the performance by itself.
% CINIC-10 \citep{darlow2018cinic},
%For more details
%of the datasets and our splitting methods, 
%see \Cref{sec:datasets}.

%\textbf{Train-test split.} 

\textbf{Models, optimizer and loss function.} For vision datasets we use ResNet-18 \citep{he2016deep} with Group Normalization \citep{wu2018group}. 
As discussed by \citet{hsieh2020non}, Group Normalization (with \texttt{num\_groups=2}) works better than batch normalization, especially in the federated settings. 
For the Shakespeare dataset, we use LSTM \citep{hochreiter1997long}. 
%We implement SGD
%and cross entropy loss
%throughout our experiments.
We find the best learning rates through grid search (see Appendix \ref{app:exp}). 
%For $q$-FFL
%we find the best hyperparameter $q$ from $\{0.1, 1, 5\}$.
%\todo{how do we tune the hyperparam for TERM?}

\textbf{Other hyperparameters.} We implement full participation and one local epoch throughout (with many local steps for each client). Due to data heterogeneity, the number of local steps $K_i$ for each client $i$ varies. For CIFAR-\{10, 100\} we partition the data into 10 clients; for TinyImageNet/Shakespeare we choose 20 clients.

%\textbf{Baseline algorithms.}
%We compare our PropFair algorithm with common fair FL algorithms (see \Cref{tab:fairness_fl}),  %The algorithms are tuned with similar hyperparameter budget.

\textbf{Evaluation metrics.} 
% \todo{fix} As discussed in \S~\ref{sec:PF}, finding Nash bargaining solutions yields efficiency and fairness. We evaluate efficiency with the \emph{mean test accuracy} and fairness with the mean accuracy of the worst 10\% clients (or in short, worst 10\% accuracy), respectively. 
%In fairness research, it is utterly important to compare algorithms in the right playground. 
We validate proportional fairness eq.~\ref{eq:prop_fair} of our PropFair algorithm, where we treat each $u_i$ as the test accuracy of client $i$. To show that PropFair achieves a proper balance between utilitarian and egalitarian fairness, we use the average and the worst 10\% test accuracies. These are standard fairness metrics used in the literature \citep[e.g.~][]{li2020tilted, li2019fair}. 
In Appendix \ref{app:exp} we also present other standard metrics such as standard deviation and worst 20\%. 

%In order to \todo{placeholder} we compute the geometric mean of client utilities $\sqrt[n]{u_1 \dots u_n}$, where each $u_i$ denotes the test accuracy of client $i$. This is in accordance with Def.~\ref{def:nash_bargaining}. Another metric we use is the worst 10\% of the client test accs, which is a 

\begin{figure}
    \centering
    \includegraphics[width=\textwidth]{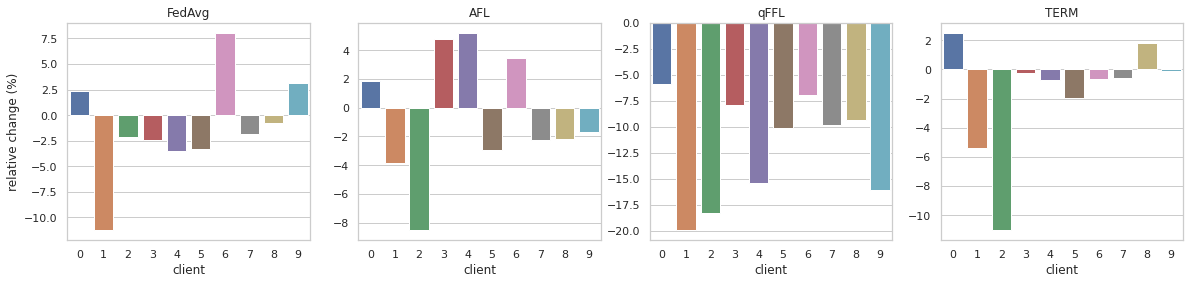}
    \caption{The relative improvement/deterioration $(u_i - u_i^*)/u_i^*$ of the test accuracy $u_i$ of each client $i$ of other baseline algorithms compared to PropFair. The dataset is CIFAR-10. The (weighted) average of the relative changes for FedAvg, AFL, $q$-FFL and TERM are respectively: $-2.21\%$, $-1.32\%$, $-12.95\%$, $-1.79\%$. We choose the hyperparameters based on \Cref{tab:hyperparam} in \Cref{app:exp}. } 
    \label{fig:proportional_cifar10}
\end{figure}

\subsection{Verification of proportional fairness}
\label{sec:verify_propfair}

In this subsection, we show that PropFair can, to some extent, achieve proportional fairness as defined in eq.~\ref{eq:prop_fair}. We treat $u_i$ as the test accuracy of client $i$, and compute
\begin{align}\label{eq:sum_ratios}
\sum_i p_i \frac{u_i - u_i^*}{u_i^*},
\end{align}
with $p_i = n_i/N$ and $\uv^* := (u_1^*, \dots, u_n^*)$ the test accuracies obtained by the PropFair model. Although we cannot verify eq.~\ref{eq:sum_ratios} for every $\uv$, we can at least validate the negativity for some competitive $\uv$'s, of, e.g., models learned by other fair FL algorithms. 

\vspace{-0.4em}
\subsubsection{CIFAR-10}
\vspace{-0.4em}

We first compute eq.~\ref{eq:sum_ratios} where $\uv^*$ is the test accuracies obtained by PropFair and $\uv$ is the test accuracies found by one of the other fair FL algorithms, including FedAvg, AFL, $q$-FFL and TERM. \Cref{fig:proportional_cifar10} shows the relative changes of each client, $(u_i - u_i^*)/{u_i^*}$, from which we can see that compared to the solution found by PropFair, for another fair FL solution, most clients are degraded by a large relative amount, and only a few clients are improved by a small amount. 
%\todo{demonstrate proportional fairness through the increase/decrease of test accuracies/training losses}

\begin{figure}[ht]
    \centering
    \includegraphics[width=\textwidth]{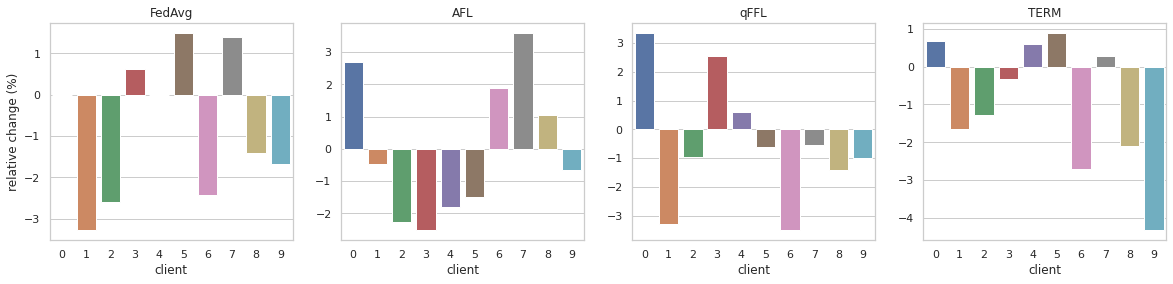}
    \caption{The relative improvement/deterioration $(u_i - u_i^*)/u_i^*$ of the test accuracy of each client $i$, pretrained with PropFair and fine-tuned with another baseline. The dataset is CIFAR-100. The average of the relative changes for FedAvg, AFL, $q$-FFL and TERM are respectively: $-0.86\%$, $+0.05\%$, $-0.67\%$, $-1.01\%$. } 
    \label{fig:proportional_cifar100}
\end{figure}
%\vspace{-2.0em}

\begin{figure}[ht]
    \centering
    \includegraphics[width=\textwidth]{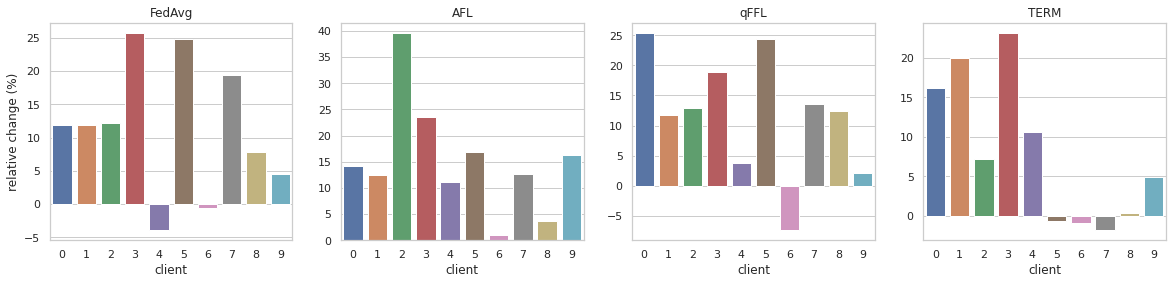}
    \caption{The relative improvement/deterioration $(u_i - u_i^*)/u_i^*$ of test accuracy of each client $i$, pretrained with another baseline and fine-tuned with PropFair. The dataset is CIFAR-100. The average of the relative changes over FedAvg, AFL, $q$-FFL and TERM are respectively: $10.88\%$, $13.93\%$, $11.58\%$, $8.67\%$. } 
    \label{fig:propfair_from_others_cifar100}
\end{figure}

\subsubsection{CIFAR-100}

In fact, we may compute eq.~\ref{eq:sum_ratios} with a stronger $\uv$. For CIFAR-100, we still treat $\uv^*$ as the test accuracies obtained by PropFair. The difference is that we use $\uv^*$ as the initialization, and fine-tune with other fair FL algorithms, to find $\uv$. If other fair FL algorithms cannot improve the proportional fairness of $\uv^*$, then eq.~\ref{eq:sum_ratios} should be negative. As we see in \Cref{fig:proportional_cifar100}, this result indeed holds approximately (except the slight improvement for AFL).

By contrast, none of the baseline fair FL algorithms can achieve the same level of proportional fairness as our PropFair. In \Cref{fig:propfair_from_others_cifar100}, we see that if we start from a model pretrained with a baseline fair FL algorithm, and fine-tune with our Propfair, most client performances are improved, sometimes by a large margin.

\begin{figure*}[t]
\centering
\includegraphics[width=\textwidth]{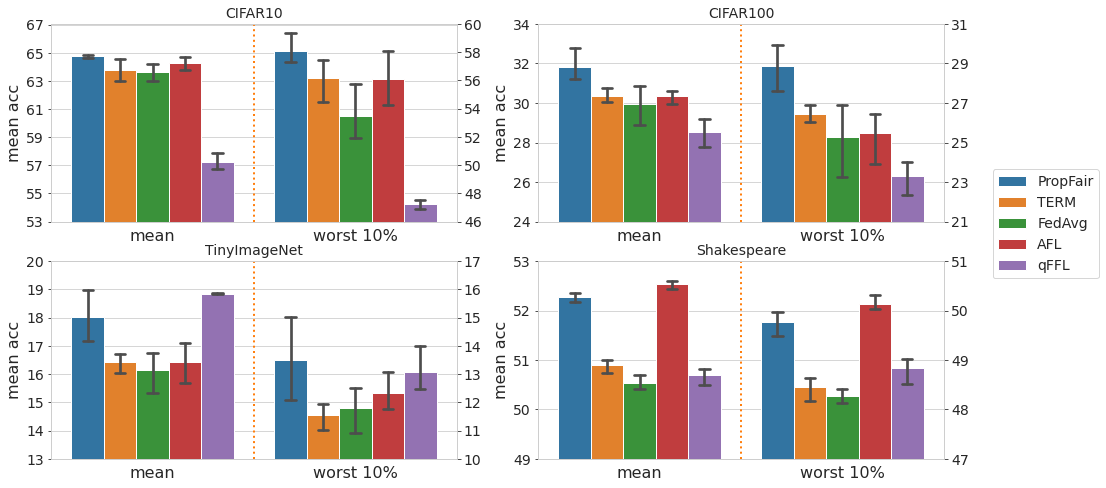}
% \begin{subfigure}{\textwidth}
% \includegraphics[width=\textwidth]{images/first_row.png}
% \end{subfigure}
% \newline
% \begin{subfigure}{0.64\textwidth}
% \includegraphics[width=\textwidth]{images/second_row.png}
\caption{Mean and worst 10\% test accuracies for different algorithms. The accuracies are in percentage. ({\bf top left}): CIFAR-10; 
({\bf top right}): CIFAR-100; ({\bf bottom left}): TinyImageNet; ({\bf bottom right}): Shakespeare. All subfigures share the same legends and axis labels.}
\label{fig:results}
\end{figure*}

%\todo{through the $\sum \log$}

\subsection{Comparison between PropFair and existing fair FL algorithms on other metrics}
\label{sec:balance}

In \Cref{fig:results}, we compare PropFair with existing fair FL algorithms using the average and the worst 10\% test accuracies across clients, including FedAvg \citep{mcmahan2017communication}, $q$-FFL \citep{li2019fair}, AFL \citep{mohri2019agnostic} and TERM \citep{li2020tilted}. 

%Our results show that PropFair can achieve better fairness without sacrificing much of the efficiency. 
%In order to rule out the factor of optimization, we train all algorithms till convergence. 
%For other datasets, we can also see that PropFair remains consistently competitive against the baselines. 

\textbf{Average performance.} From \Cref{fig:results} we can see that PropFair does not always yield the best average performance, e.g., compared to $q$-FFL on TinyImageNet. This is expected, since maximizing the Nash product does not necessarily give the best average performance. Nevertheless, PropFair remains competitive. Somewhat surprisingly, FedAvg does not always achieve the best average performance, which might be due to optimization issues \citep{pathak2020fedsplit}.

\textbf{Worst 10\% performance.} We also compare the worst 10\% performance of various fair FL algorithms. We observe that PropFair achieves the state-of-the-art in terms of the worst 10\% performance, across various vision and language datasets.  
This is within our expectation, since from eq.~\ref{eq:jensen} and eq.~\ref{eq:training_loss} we can see that low utility in \emph{any} of the clients would result in a small Nash product.

Specifically, although AFL directly maximizes the worst-case loss function, it does not always achieve the best worst-case performances (see Table \ref{tab:combinedwithterm} in \Cref{app:exp}), especially for vision datasets. This might be due to the generalization issue of AFL (see \Cref{sec:failure_afl}). 
\vspace{-0.3em}
\section{Related Work in Fair FL}

We review recent related work for fair federated learning. In additional to AFL \citep{mohri2019agnostic}, $q$-FFL \citep{li2019fair} and TERM \citep{li2020tilted}, there have been other approaches for fairness in FL. For example, FedMGDA+ \citep{hu2020fedmgda+} defines fairness as achieving the Pareto frontier and they proposed to use the MGDA algorithm. As another example, GIFAIR-FL \citep{Yue2021GIFAIRFLAA} encourages the similarity of the client losses by adding a regularization term of the pairwise $\ell_1$ distances. Last but not least, Ditto \citep{li_ditto_2021} proposed a personalization approach to obtain fairness and robustness.  A comprehensive recent survey of fairness in FL can be found in \citet{shi2021survey}, and we have included additional papers of fairness (in FL and in general) in \Cref{app:related_work}. 

%\todo{explain Ditto, GiFair and FedMGDA+}

% Some efforts have been made to address this challenge. For instance, Mohri et al.~\citep{mohri2019agnostic} developed a framework named Agnostic Federated Learning (AFL), by optimizing the worst-case client loss function through minimax optimization. Other fair FL algorithms include TERM \citep{li2020tilted} with tilted losses, and Ditto \citep{li_ditto_2021}, which further proposed to achieve fairness through personalizing the FL model for each client.\footnote{While it is certainly possible to allow each client to have its own version of parameters, in this work we do not consider personalization.} Last but not least, inspired by $\alpha$-fairness \citep{lan_axiomatic_2010} that is widely used in wireless networks, Li et al.~\citep{li2019fair} designed $q$-Fair Federated Learning ($q$-FFL). 

%. At the same time, it addresses the concern of {privacy}:  users' data will not be sent to a central server and the training process is done locally on each client. 

%In the classic Federated Averaging (FedAvg) algorithm \citep{mcmahan2017communication}, the role of the server is simply to aggregate the local parameters of different clients, and redistribute their average. 

%Numerous variants of \FedAvg have been proposed since then, such as \FedProx \citep{li2018federated}, \FedOpt \citep{reddi2020adaptive} and FedNova \citep{wang2020tackling}. %For a comprehensive survey on federated optimization, see \citet{wang2021field} and \citet{FLnow21}. 

%%%%%%%%%%%%%%%%%%%%%%%%%%%%%%%%%%%%%%%%%%%%%%%%%%%%%%%%%%%%%%%%%%%%%%
\vspace{-0.2em}
\section{Conclusions}

\blue{Based on the necessity of considering relative changes,} we introduce the concept of Proportional Fairness (PF) into the field of federated learning (FL), which is deeply rooted in cooperative game theory. By showing the connection between PF and the Nash bargaining solution, we propose PropFair that maximizes the product of client utilities, \blue{where the total relative utility cannot be improved}. This guarantees PropFair to have good worst-case performance without sacrificing the total utility much. We verify proportional fairness and the balance between utilitarian and egalitarian fairness in our extensive experiments. As we have shown, many fair FL algorithms, including PropFair, can be unified using Kolmogorov's generalized mean, the deeper understanding of which may lead to future design of fair FL algorithms.

\section*{Broader Impact Statement}

With the wide deployment of federated learning, how to ensure fairness in FL algorithms has become a major concern. In this work, we study proportional fairness in FL to make FL systems fairer and thus more trustworthy. This could have important positive social impacts as well. We are not aware of potential negative societal impacts yet but we welcome discussions on them. 

% \subsubsection*{Author Contributions}
% If you'd like to, you may include a section for author contributions as is done
% in many journals. This is optional and at the discretion of the authors. Only add
% this information once your submission is accepted and deanonymized. 

\subsubsection*{Acknowledgments}
We thank the reviewers and the action editor for constructive comments that largely improved our draft. 
GZ would like to thank Changjian Shui for his constructive feedback on an earlier draft, and Mahdi Beitollahi for pointing out typos. 
YY is supported by NSERC and WHJIL. 
% Use unnumbered third level headings for the acknowledgments. All
% acknowledgments, including those to funding agencies, go at the end of the paper.
% Only add this information once your submission is accepted and deanonymized. 

\bibliography{bibs/fairness, bibs/fl,bibs/bargain,bibs/multi-obj,bibs/utils,main}
\bibliographystyle{tmlr}

\appendix

\newpage

\section{Notations}\label{app:notation}
\blue{
We include a notation table for easy navigation. The reader can refer to \Cref{tab:notation_table} for quick access to the notations. 
\begin{table}[]
\centering
\begin{tabular}{cc}
\toprule
\bf Notation  &  \bf Meaning \\
\midrule\midrule
$\thetav$   &  model parameters \\
\midrule
$n$ & the number of clients \\
\midrule
$m$ & batch size \\
\midrule
$\xv \in \Rb^d$ &  raw input \\
\midrule
$y \in [C]$ & output label \\
\midrule
$n_i$ & the number of samples from client $i$ \\
\midrule
$N = \sum_i n_i$ & the total number of samples from all clients \\
\midrule
$S_i$ & a batch of samples from client $i$ \\
\midrule
$\Dc_i$ & data distribution of client $i$ \\
\midrule
$\ell(\thetav, (\xv, y))$ & prediction loss (e.g.~cross entropy) of model $\thetav$ on sample $(\xv, y)$ \\
\midrule
$\ell_{S_i}$ & average loss over batch $S_i$ \\
\midrule
$f_i$ & expected loss over distribution $\Dc_i$  \\
\midrule
$\vf = (f_1, \dots, f_n)$ & vector of client losses \\
\midrule
$u_i$ & utility of client $i$ \\
\midrule
$\uv = (u_1, \dots, u_n)$ & vector of client utilities \\
\midrule
$K_i$ & the number of local steps of client $i$ \\
\midrule
$p_i$ & pre-defined weight of each client $i$, usually $p_i = n_i / N$ \\
\midrule
$\vp = (p_1, \dots, p_n)$ & vector of client weights \\
\midrule
$\Phi : \Rb^n \to \Rb$ & scalarization function of $\vf$ \\
\midrule
$\varphi: \Rb \to \Rb$ & a scalar function that acts on each client loss $f_i$ \\
\midrule
$\Msf_\varphi$ & Kolmogorov's generalized mean with scalar function $\varphi$ \\
\midrule
% $\lambda_i$ & the linear weight when computing the dual of $\Msf_\varphi$ \\
% \midrule
$\lambdav = (\lambda_1, \dots, \lambda_n)$ & dual parameter \\
\midrule
$\eta$ & local learning rate \\
\midrule
$\sigma_i^2$ & local variance on distribution $i$ \\
\midrule
$\sigma^2$ & global variance among clients \\
\midrule
$F = \sum_i p_i f_i$ & objective of FedAvg \\
\midrule
$\log_{[\epsilon]}(M - t)$ & huberization of $\log(M - t)$, see eq.~\ref{eq:training_loss_huber} \\
\midrule
$\pi = -\sum_i p_i \log_{[\epsilon]}(M - f_i)$ & objective of PropFair \\
\midrule
$L$ & Lipschitz constant of all $\nabla f_i$'s \\
\midrule
$e$ & natural logarithm \\
\midrule 
$\Rb_{++}^n$ & (strictly) positive orthant of $\Rb^n$ \\
\bottomrule
\end{tabular}
\caption{Notation table.}
\label{tab:notation_table}
\end{table}
}

\section{Proofs}\label{sec:proofs}

\equivalence*

\begin{proof}
The Nash bargaining solution $\uv^*$ is equivalent to the maximum of the following:
\begin{align}
\max_{\uv\in \Uc} \sum_{i=1}^n p_i \log u_i.
\end{align}
Since $\Uc$ is convex and $\sum_{i=1}^n p_i \log u_i$ is concave in $\uv$, the necessary and sufficient optimality condition \citep[e.g,.][]{bertsekas1997nonlinear} is:
\begin{align}\label{eq:optimality}
\langle \uv - \uv^*, \nabla \sum_{i=1}^n p_i \log u_i^*\rangle \leq 0, \,\mbox{for any }\uv\in \Uc,
\end{align}
or equivalently, eq.~\ref{eq:prop_fair}. If $\Uc$ is non-convex, then the optimality condition eq.~\ref{eq:optimality} also holds for the convex hull of $\Uc$. Therefore, $\uv^*$ is a maximizer of $\sum_{i=1}^n p_i \log u_i$ in the convex hull of $\Uc$ and thus $\Uc$. 
\end{proof}

\FedAvg*

\begin{proof}
We first assume full participation in the following theorem. The partial participation version is an easy extension and we discuss it in the end. We use $\x{i}_{t, j}$ to denote the model parameters of client $i$ at global epoch $t$ and local step $j$. Due to the synchronization step, we have $\x{i}_{t,0} = \thetav_t$, the global model at step $t$, and 
\begin{align}\label{eq:aggregate}
\thetav_{t+1} = \sum_{i=1}^n p_i \x{i}_{t, K_i}, \, p_i = \frac{n_i}{N},
\end{align}
where $K_i$ is the local number of steps of client $i$. We also have:
\begin{align}\label{eq:local_update}
\x{i}_{t, j} = \x{i}_{t, j-1} - \eta \g{i}_{t, j}, \mbox{ for all }j \in [K_i].
\end{align}
where \blue{$\g{i}_{t, j} = \nabla \ell_{S_i^j} (\x{i}_{t, j-1})$} is an unbiased estimator of $\nabla f_i(\x{i}_{t, j-1})$ for $j \in [K_i]$, \blue{with $S_i^j$ the $j^{\rm th}$ batch from client $i$. }Combining eq.~\ref{eq:aggregate} and eq.~\ref{eq:local_update} we have:
\begin{align}\label{eq:global_update}
\thetav_{t+1} = \thetav_t - \eta \sum_{i=1}^n p_i \sum_{j=1}^{K_i} \g{i}_{t, j}.
\end{align}
\paragraph{Part I} Since each $f_i$ is ${L}$-Lipschitz smooth, so is their average $F = \sum_i p_i f_i$, from which we obtain that:
\begin{align}
F(\thetav_{t+1}) \leq F(\thetav_t) + \left \langle \nabla F(\thetav_t), \thetav_{t+1} - \thetav_t \right \rangle + \frac{{L}}{2} \|\thetav_{t+1} - \thetav_t \|^2.
\end{align}
Plugging in eq.~\ref{eq:global_update} yields:
\begin{align}\label{eq:lipschitz}
F(\thetav_{t+1}) \leq F(\thetav_t) - \eta \left \langle \nabla F(\thetav_t), \sum_{i=1}^n p_i \sum_{j=1}^{K_i} \g{i}_{t, j} \right \rangle + \frac{{L} \eta^2}{2} \left \|\sum_{i=1}^n p_i \sum_{j=1}^{K_i} \g{i}_{t, j}\right \|^2.
\end{align}
From the identity $\g{i}_{t, j} = \g{i}_{t, j} - \nabla F(\thetav_t) + \nabla F(\thetav_t),$
we write eq.~\ref{eq:lipschitz} as:
\begin{align}\label{eq:lipschitz_2}
F(\thetav_{t+1}) &\leq F(\thetav_t) - \eta \sum_{i=1}^n p_i K_i \|\nabla F(\thetav_t)\|^2 - \eta \left \langle \nabla F(\thetav_t), \sum_{i=1}^n p_i \sum_{j=1}^{K_i} (\g{i}_{t, j} - \nabla F(\thetav_t)) \right \rangle + \nonumber \\
& + \frac{{L} \eta^2}{2} \left \|\sum_{i=1}^n p_i \sum_{j=1}^{K_i} (\g{i}_{t, j} - \nabla F(\thetav_t)) + \sum_{i=1}^n p_i K_i \nabla F(\thetav_t) \right \|^2.
\end{align}
By further expanding the last term we have:
\begin{align}
F(\thetav_{t+1}) &\leq F(\thetav_t) - \eta \sum_{i=1}^n p_i K_i \left \|\nabla F(\thetav_t)\right \|^2 - \eta \left \langle \nabla F(\thetav_t), \sum_{i=1}^n p_i \sum_{j=1}^{K_i} (\g{i}_{t, j} - \nabla F(\thetav_t)) \right \rangle + \nonumber \\
& +  \frac{{L} \eta^2}{2} \left \|\sum_{i=1}^n p_i \sum_{j=1}^{K_i} (\g{i}_{t, j} - \nabla F(\thetav_t)) \right \|^2 + \frac{{L} \eta^2}{2} \left(\sum_{i=1}^n p_i K_i\right)^2  \left \|\nabla F(\thetav_t)\right \|^2 + \nonumber \\
& + {L} \eta^2 \left \langle \sum_{i=1}^n p_i K_i \nabla F(\thetav_t), \sum_{i=1}^n p_i \sum_{j=1}^{K_i} (\g{i}_{t, j} - \nabla F(\thetav_t)) \right \rangle.
\end{align}
For simplicity we use $\mu$ as a shorthand for $\sum_{i=1}^n p_i K_i$. Grouping similar terms together gives:
\begin{align}\label{eq:mid_step_lip}
F(\thetav_{t+1}) &\leq F(\thetav_t) - \eta \mu \left(1 - \frac{{L}\eta}{2} \mu \right) \|\nabla F(\thetav_t)\|^2 \nonumber \\
& - \eta (1 - {L} \eta \mu) \left \langle \nabla F(\thetav_t), \sum_{i=1}^n p_i \sum_{j=1}^{K_i} (\g{i}_{t, j} - \nabla F(\thetav_t)) \right \rangle +  \frac{{L} \eta^2}{2} \left\|\sum_{i=1}^n p_i \sum_{j=1}^{K_i} (\g{i}_{t, j} - \nabla F(\thetav_t))\right\|^2.
\end{align}
Taking the expectation on both sides and with Cauchy--Schwarz inequality, we have:
\begin{align}\label{eq:expectation_multi}
\Eb F(\thetav_{t+1}) &\leq \Eb F(\thetav_t) - \eta \mu \left(1 - \frac{{L}\eta}{2} \mu \right) \Eb \|\nabla F(\thetav_t)\|^2 +  \nonumber \\
& + \eta (1 - {L} \eta \mu) \Eb \left[ \| \nabla F(\thetav_t) \|\cdot \left \|\sum_{i=1}^n p_i \sum_{j=1}^{K_i} (\nabla f_i(\x{i}_{t, j-1}) - \nabla F(\thetav_t)) \right \| \right] +  \nonumber \\
& + \frac{{L} \eta^2}{2} \Eb \left\|\sum_{i=1}^n p_i \sum_{j=1}^{K_i} (\g{i}_{t, j} - \nabla F(\thetav_t)) \right\|^2  \nonumber \\
& \leq \Eb F(\thetav_t) +  \left( - \eta \mu \left(1 - \frac{{L}\eta}{2}\mu \right) + \frac{1}{2}\eta (1 - {L} \eta \mu)\right) \Eb \|\nabla F(\thetav_t)\|^2 + \nonumber \\
& + \frac{{L} \eta^2}{2} \Eb \left\|\sum_{i=1}^n p_i \sum_{j=1}^{K_i} (\g{i}_{t, j} - \nabla F(\thetav_t)) \right\|^2 \blue{ + \frac{1}{2} \eta (1 - {L} \eta \mu) \Eb \left\|\sum_{i=1}^n p_i \sum_{j=1}^{K_i} (\nabla f_i(\x{i}_{t, j - 1}) - \nabla F(\thetav_t)) \right\|^2},
\end{align}
where \blue{we used $\Eb \g{i}_{t, j} = \nabla f_i (\x{i}_{t, j-1})$} and the inequality $a b \leq \frac{1}{2}(a^2 + b^2)$ in the second line. Let us now study the two coefficients separately. From the assumption, $6\eta L K_i \leq 1$ for any $i\in [n]$, and thus $6 {L} \mu \eta \leq 1$. Hence we have:
\begin{align}\label{eq:linear_term_grad_norm}
- \eta \mu \left(1 - \frac{{L}\eta}{2}\mu \right) + \frac{1}{2}\eta (1 - {L} \eta \mu) &\leq -\eta \left(\mu - \frac{1}{2} - \frac{{L} \eta}{2}\mu^2 + \frac{1}{2} {L} \eta \mu \right) \nonumber \\
&\leq -\eta \left( \frac{11\mu - 6}{12} + \frac{{L} \eta \mu}{2}\right) \nonumber \\
&\leq -\eta \frac{11\mu - 6}{12}.
\end{align}
% Moreover, the coefficient of the third term can be upper bounded as:
% \begin{align}
% \frac{{L} \eta^2}{2} + \frac{1}{2} \eta (1 - {L} \eta \mu)  = \frac{\eta}{2} + \frac{{L}}{2}\eta^2 (1 - \mu) \leq \frac{\eta}{2}.
% \end{align}
Therefore, eq.~\ref{eq:expectation_multi} becomes:
\begin{align}\label{eq:summary_1}
\Eb F(\thetav_{t+1}) \leq \Eb F(\thetav_t) - \eta \frac{11\mu - 6}{12} \Eb \| \nabla F(\thetav_t)\|^2 + \frac{{L} \eta^2}{2} \Eb \left\|\sum_{i=1}^n p_i \sum_{j=1}^{K_i} (\g{i}_{t, j} - \nabla F(\thetav_t)) \right\|^2 + \nonumber \\
\blue{ + \frac{1}{2} \eta (1 - {L} \eta \mu) \Eb \left\|\sum_{i=1}^n p_i \sum_{j=1}^{K_i} (\nabla f_i(\x{i}_{t, j - 1}) - \nabla F(\thetav_t)) \right\|^2}.
\end{align}

\paragraph{Part II} With the following identity:
\begin{align}\label{eq:split_into_two_terms}
\g{i}_{t, j} - \nabla F(\thetav_t) = \g{i}_{t, j} - \nabla f_i(\x{i}_{t, j-1}) +  \nabla f_i(\x{i}_{t, j-1}) - \nabla F(\thetav_t),
\end{align}
the \blue{second} last term of eq.~\ref{eq:expectation_multi} can be simplified as:
\begin{align}\label{eq:variance}
\Eb \left\|\sum_{i=1}^n p_i \sum_{j=1}^{K_i} (\g{i}_{t, j} - \nabla f_i(\x{i}_{t, j-1})) \right\|^2 + \Eb \left\|\sum_{i=1}^n p_i \sum_{j=1}^{K_i} (\nabla f_i(\x{i}_{t, j-1}) - \nabla F(\thetav_t)) \right\|^2,
\end{align}
where we note that $\g{i}_{t, j}$ is an unbiased estimator of $\nabla f_i(\x{i}_{t, j-1})$. We first bound the first term of eq.~\ref{eq:variance}:
\begin{align}\label{eq:local_variance}
\Eb \left\|\sum_{i=1}^n p_i \sum_{j=1}^{K_i} (\g{i}_{t, j} - \nabla f_i(\x{i}_{t, j-1})) \right\|^2 &\leq \Eb \left( \sum_{i=1}^n p_i \sum_{j=1}^{K_i} \left\|\g{i}_{t, j} - \nabla f_i(\x{i}_{t, j-1}) \right\| \right)^2 \nonumber \\
& \leq \Eb \|\vp\|^2  \sum_{i=1}^n \left( \sum_{j=1}^{K_i} \left\|\g{i}_{t, j} - \nabla f_i(\x{i}_{t, j-1}) \right\| \right)^2 \nonumber \\
&\leq \Eb  \|\vp\|^2 \sum_{i=1}^n K_i \sum_{j=1}^{K_i} \left\|\g{i}_{t, j} - \nabla f_i(\x{i}_{t, j-1}) \right\|^2 \nonumber \\
& = \|\vp\|^2 \sum_{i=1}^n K_i \sum_{j=1}^{K_i} \Eb \left\|\g{i}_{t, j} - \nabla f_i(\x{i}_{t, j-1}) \right\|^2 \nonumber \\
&\leq \|\vp\|^2  \sum_{i=1}^n K_i^2 \frac{\sigma_{i}^2}{m},
\end{align}
where in the first line, we used triangle inequality; in the second and third lines, we used the Cauchy--Schwarz inequality; in the fourth line we used the linearity of expectation; in the final line we note that given the last part of Assumption \ref{assump:lip_smooth}, we have:
\begin{align}\label{eq:variance_batch}
\Eb_{S_i\sim \Dc_i^m} \left\|\frac{1}{|S_i|}\sum_{(\xv, y) \in S_i} \nabla \ell(\thetav, (\xv, y)) - \nabla f_i(\thetav) \right\|^2 &= \frac{1}{|S_i|^2} \sum_{(\xv, y) \in S_i} \Eb_{(\xv, y))\sim \Dc_i} \left\| \nabla \ell(\thetav, (\xv, y)) - \nabla f_i(\thetav) \right\|^2 \leq \frac{\sigma_i^2}{m},
\end{align}
where we used the property that each $(\xv, y)$ is an i.i.d.~sample from $\Dc_i$, and that the estimation is unbiased (by the definition of $f_i$). Similarly we bound the second term of eq.~\ref{eq:variance}:
\begin{align}\label{eq:second_total}
\Eb \left\|\sum_{i=1}^n p_i \sum_{j=1}^{K_i} (\nabla f_i(\x{i}_{t, j-1}) - \nabla F(\thetav_t)) \right\|^2 \leq  \|\vp\|^2 \sum_{i=1}^n K_i \sum_{j=1}^{K_i} \Eb \left\|\nabla f_i(\x{i}_{t, j-1}) - \nabla F(\thetav_t) \right\|^2.
\end{align}
With the following identity:
\begin{align}
\nabla f_i(\x{i}_{t, j-1}) - \nabla F(\thetav_t) = \nabla f_i(\x{i}_{t, j-1}) - \nabla f_i(\thetav_t) +  \nabla f_i (\thetav_t) - \nabla F(\thetav_t),
\end{align}
and taking the squared norm on both sides, we have:
\begin{align}\label{eq:second_expand}
\| \nabla f_i(\x{i}_{t, j-1}) - \nabla F(\thetav_t) \|^2 &\leq 2\| \nabla f_i(\x{i}_{t, j-1}) - \nabla f_i(\thetav_t)\|^2 + 2 \|\nabla f_i (\thetav_t) - \nabla F(\thetav_t)\|^2 \nonumber \\
& \leq 2 {L}^2 \|\x{i}_{t, j-1} - \thetav_t \|^2 + 2 \sigma_{}^2,
\end{align}
where we note that:
\begin{align}\label{eq:local_global_variance}
\|\nabla f_i (\thetav_t) - \nabla F(\thetav_t)\| &= \left\|\nabla f_i (\thetav_t) - \sum_{j=1}^n p_j \nabla f_j(\thetav_t)\right\| \nonumber \\
&= \left\|\sum_{j=1}^n p_j (\nabla f_i (\thetav_t) - \nabla f_j(\thetav_t)) \right\| \nonumber \\
&\leq \sum_{j=1}^n p_j \|\nabla f_i (\thetav_t) - \nabla f_j(\thetav_t)\| \nonumber \\
&\leq \sigma_{},
\end{align}
where in the third line we used the triangle inequality and in the last line we used Assumption \ref{assump:lip_smooth}.
Plugging eq.~\ref{eq:second_expand} into eq.~\ref{eq:second_total} yields:
\begin{align}\label{eq:lipschitz_global_variance}
\Eb \left\|\sum_{i=1}^n p_i \sum_{j=1}^{K_i} (\nabla f_i(\x{i}_{t, j-1}) - \nabla F(\thetav_t)) \right\|^2 &\leq 2 \|\vp\|^2  \sum_{i=1}^n K_i^2 \sigma_{}^2 + 2 L^2 \|\vp\|^2 \sum_{i=1}^n K_i \sum_{j=1}^{K_i} \Eb \|\x{i}_{t, j-1} - \thetav_t \|^2.
\end{align}
Bringing eq.~\ref{eq:local_variance} and eq.~\ref{eq:lipschitz_global_variance} into eq.~\ref{eq:variance} we write:
\begin{align}\label{eq:local_gradient_global}
\Eb \left\|\sum_{i=1}^n p_i \sum_{j=1}^{K_i} (\g{i}_{t, j} - \nabla F(\thetav_t)) \right\|^2 &\leq \|\vp\|^2 \sum_{i=1}^n K_i^2 \left(\frac{\sigma_{i}^2}{m} + 2\sigma_{}^2\right) + 2L^2 \|\vp\|^2 \sum_{i=1}^n K_i \sum_{j=1}^{K_i} \Eb \|\x{i}_{t, j-1} - \thetav_t \|^2 \nonumber \\
&\leq \|\vp\|^2 \sum_{i=1}^n K_i^2 \left(\frac{\sigma_{i}^2}{m} + 2\sigma_{}^2\right) + 2L^2 \|\vp\|^2 \sum_{i=1}^n K_i \sum_{j=0}^{K_i - 1} \Eb \|\x{i}_{t, j} - \thetav_t \|^2.
\end{align}

\paragraph{Part III} Now let us give an upper bound for $\Eb \|\x{i}_{t, j} - \thetav_t \|^2$. From eq.~\ref{eq:local_gradient_global}, we only need to focus on $K_i \geq 2$ and $j = 1, \dots, K_i - 1$ since $\x{i}_{t, 0} = \thetav_t$. For $j \in [K_i - 1]$, we have from eq.~\ref{eq:local_update}:
\begin{align}\label{eq:local_updates}
\Eb \|\x{i}_{t, j} - \thetav_t \|^2 &= \Eb \|\x{i}_{t, j - 1}  - \thetav_t - \eta \g{i}_{t, j}\|^2 \nonumber \\
&= \Eb \|\x{i}_{t, j - 1}  - \thetav_t - \eta \nabla f_i(\x{i}_{t,j-1}) + \eta \nabla f_i(\x{i}_{t,j-1})  - \eta \g{i}_{t, j}\|^2 \nonumber \\
& = \Eb \|\x{i}_{t, j - 1}  - \thetav_t - \eta \nabla f_i(\x{i}_{t,j-1})\|^2 + \Eb \eta^2 \|\nabla f_i(\x{i}_{t,j-1})  - \g{i}_{t, j}\|^2 \nonumber \\
& = \Eb \|\x{i}_{t, j - 1}  - \thetav_t - \eta \nabla f_i(\x{i}_{t,j-1})\|^2 + \eta^2 \frac{\sigma_{i}^2}{m},
\end{align}
where in the third line we note that $\g{i}_{t, j}$ is an unbiased estimator of $\nabla f_i(\x{i}_{t,j-1})$ and in the last line we used eq.~\ref{eq:variance_batch}. The first term in the last line above can be bounded as: 
\begin{align}\label{eq:local_diff}
\Eb \|\x{i}_{t, j - 1}  - \thetav_t - \eta \nabla f_i(\x{i}_{t,j-1})\|^2 \leq \left( 1 + \frac{1}{2K_i - 1}\right) \Eb\|\x{i}_{t, j - 1}  - \thetav_t\|^2 + 2K_i \eta^2 \| \nabla f_i(\x{i}_{t,j-1})\|^2,
\end{align}
where we used $\|a + b\|^2 \leq ( 1 + \frac{1}{\alpha}) \|a\|^2 + (1 + \alpha) \|b\|^2$ for any vectors $a, b$ with the same dimension and $\alpha > 0$.
Since
\begin{align}
\nabla f_i(\x{i}_{t,j-1}) = (\nabla f_i(\x{i}_{t,j-1}) - \nabla f_i(\thetav_t)) + (\nabla f_i(\thetav_t) - \nabla F(\thetav_t)) + \nabla F(\thetav_t),
\end{align}
taking the squared norm on both sides we have (note that $(a+b+c)^2 \leq 3(a^2 + b^2 + c^2)$):
\begin{align}\label{eq:local_norm_square}
\|\nabla f_i(\x{i}_{t,j-1})\|^2 &\leq 3 \|\nabla f_i(\x{i}_{t,j-1}) - \nabla f_i(\thetav_t)\|^2 + 3 \|\nabla f_i(\thetav_t) - \nabla F(\thetav_t)\|^2 + 3 \|\nabla F(\thetav_t)\|^2 \nonumber \\
& \leq 3 {L}^2 \|\x{i}_{t, j-1} - \thetav_t \|^2 + 3 \sigma_{}^2 + 3 \|\nabla F(\thetav_t)\|^2,
\end{align}
where in the second line we used eq.~\ref{eq:local_global_variance} and Assumption~\ref{assump:lip_smooth}. 
Plugging eq.~\ref{eq:local_norm_square} into eq.~\ref{eq:local_diff} we find:
\begin{align}
\Eb \|\x{i}_{t, j - 1}  - \thetav_t - \eta \nabla f_i(\x{i}_{t,j-1})\|^2 & \leq \left( 1 + \frac{1}{2K_i - 1} + 6 K_i \eta^2 {L}^2 \right) \Eb\|\x{i}_{t, j - 1}  - \thetav_t\|^2  + 6K_i \eta^2 \sigma_{}^2 + 6K_i \eta^2 \Eb\|\nabla F(\thetav_t)\|^2.
\end{align}
Combined with eq.~\ref{eq:local_updates}, we obtain:
\begin{align}\label{eq:bound_diff_local_updates}
\Eb \|\x{i}_{t, j} - \thetav_t \|^2 & \leq \left( 1 + \frac{1}{2K_i - 1} + 6 K_i \eta^2 {L}^2 \right) \Eb\|\x{i}_{t, j - 1}  - \thetav_t\|^2  + \eta^2\left(\frac{\sigma_{i}^2}{m} + 6K_i \sigma_{}^2 \right) +  6K_i \eta^2 \Eb \|\nabla F(\thetav_t)\|^2.
\end{align}
Recall that we assumed $\eta \leq \min_i \{\frac{1}{6K_i {L}}\}$. For $K_i \geq 2$ (note the assumption at the beginning of Part III), we have:
\begin{align}
1 + \frac{1}{2K_i - 1} + 6 K_i \eta^2 {L}^2 \leq 1 + \frac{1}{2K_i - 1} + \frac{1}{6 K_i} \leq 1 + \frac{1}{K_i}.
\end{align}
Therefore, eq.~\ref{eq:bound_diff_local_updates} becomes:
\begin{align}\label{eq:bound_diff_local_updates_2}
\Eb \|\x{i}_{t, j} - \thetav_t \|^2 \leq \left( 1 + \frac{1}{K_i} \right) \Eb\|\x{i}_{t, j - 1}  - \thetav_t\|^2 + \eta^2\left(\frac{\sigma_{i}^2}{m} + 6K_i \sigma_{}^2 \right) +  6K_i \eta^2 \Eb \|\nabla F(\thetav_t)\|^2.
\end{align}
We can treat $\{a_j = \Eb \|\x{i}_{t, j} - \thetav_t \|^2\|\}_{j=1}^{K_i - 1} $ as a sequence. Unrolling this sequence and with $a_0 = 0$, we have:
\begin{align}
\Eb \|\x{i}_{t, j} - \thetav_t \|^2 &\leq \frac{\left( 1 + \frac{1}{K_i}\right)^j - 1}{1 + \frac{1}{K_i} - 1} \left(\eta^2\left(\frac{\sigma_{i}^2}{m} + 6K_i \sigma_{}^2 \right) +  6K_i \eta^2 \Eb \|\nabla F(\thetav_t)\|^2 \right) \nonumber \\
& = K_i\left(\left( 1 + \frac{1}{K_i}\right)^j - 1 \right) \left(\eta^2\left(\frac{\sigma_{i}^2}{m} + 6K_i \sigma_{}^2 \right) +  6K_i \eta^2 \Eb \|\nabla F(\thetav_t)\|^2 \right).
\end{align}
Summing over $j = 0, 1,\dots, K_i - 1$ gives:
\begin{align}\label{eq:local_updates_final}
\sum_{j=0}^{K_i - 1} \Eb \|\x{i}_{t, j} - \thetav_t \|^2 &\leq  K_i^2 \left(\left( 1 + \frac{1}{K_i}\right)^{K_i} - 2 \right) \left(\eta^2\left(\frac{\sigma_{i}^2}{m} + 6K_i \sigma_{}^2 \right) +  6K_i \eta^2 \Eb \|\nabla F(\thetav_t)\|^2 \right) \nonumber \\
&\leq (e - 2) K_i^2 \left(\eta^2\left(\frac{\sigma_{i}^2}{m} + 6K_i \sigma_{}^2 \right) +  6K_i \eta^2 \Eb \|\nabla F(\thetav_t)\|^2 \right) \nonumber \\
& = (e - 2) K_i^2 \eta^2 \left(\frac{\sigma_{i}^2}{m} + 6 K_i \sigma_{}^2 +  6K_i \Eb \|\nabla F(\thetav_t)\|^2 \right)
\end{align}
where in the first line we used the geometric series formula $1 + q + \dots + q^{n-1} = \frac{q^n - 1}{q-1}$; in the second line we used the fact that $\left( 1 + \frac{1}{K_i}\right)^{K_i} \leq e$ for $K_i \geq 1$, with $e$ the natural logarithm. 

\paragraph{Part IV} We finally put things together and finish our proof. From eq.~\ref{eq:summary_1} we have:
\blue{\begin{align}\label{eq:summary_t}
\Eb F(\thetav_{t+1}) & \leq  \Eb F(\thetav_t) - \eta \frac{11\mu - 6}{12} \Eb \|\nabla F(\thetav_t)\|^2 +\frac{{L} \eta^2}{2} \Eb \left\|\sum_{i=1}^n p_i \sum_{j=1}^{K_i} (\g{i}_{t, j} - \nabla F(\thetav_t)) \right\|^2 +  \nonumber \\
& + \frac{1}{2} \eta (1 - {L} \eta \mu) \Eb \left\|\sum_{i=1}^n p_i \sum_{j=1}^{K_i} (\nabla f_i(\x{i}_{t, j - 1}) - \nabla F(\thetav_t)) \right\|^2\nonumber \\
& = \Eb F(\thetav_t) - \eta \frac{11\mu - 6}{12} \Eb \|\nabla F(\thetav_t)\|^2 +\frac{{L} \eta^2}{2} \Eb \left\|\sum_{i=1}^n p_i \sum_{j=1}^{K_i} (\g{i}_{t, j} - \nabla f_i(\x{i}_{t, j - 1}) \right\|^2 +  \nonumber \\
& + \left(\frac{{L} \eta^2}{2} + \frac{1}{2} \eta (1 - {L} \eta \mu) \right) \Eb \left\|\sum_{i=1}^n p_i \sum_{j=1}^{K_i} (\nabla f_i(\x{i}_{t, j - 1}) - \nabla F(\thetav_t)) \right\|^2 \nonumber \\
&\leq \Eb F(\thetav_t) - \eta \frac{11\mu - 6}{12} \Eb \|\nabla F(\thetav_t)\|^2 +\frac{{L} \eta^2}{2} \|\vp\|^2 \sum_{i=1}^n K_i^2 \frac{\sigma_i^2}{m} +  \frac{\eta}{2} \Eb \left\|\sum_{i=1}^n p_i \sum_{j=1}^{K_i} (\nabla f_i(\x{i}_{t, j - 1}) - \nabla F(\thetav_t)) \right\|^2 \nonumber \\
&\leq  \Eb F(\thetav_t) - \eta \frac{11\mu - 6}{12} \Eb \|\nabla F(\thetav_t)\|^2 +\frac{{L} \eta^2}{2} \|\vp\|^2 \sum_{i=1}^n K_i^2 \frac{\sigma_i^2}{m} + \nonumber \\ 
& + \frac{\eta}{2} \left(2 \|\vp\|^2  \sum_{i=1}^n K_i^2 \sigma_{}^2 + 2 L^2 \|\vp\|^2 \sum_{i=1}^n K_i \sum_{j=1}^{K_i} \Eb \|\x{i}_{t, j-1} - \thetav_t \|^2 \right) \nonumber \\
&\leq \Eb F(\thetav_t) - \eta \frac{11\mu - 6}{12} \Eb \|\nabla F(\thetav_t)\|^2 + 6(e-2)\eta^3 L^2 \|\vp\|^2 \sum_{i=1}^n K_i^4 \Eb \|\nabla F(\thetav_t)\|^2 + \Psi_{\sigma},
\end{align}}
\blue{where in the third line we used eq.~\ref{eq:variance}; in the fifth line we used eq.~\ref{eq:local_variance} and note that
\begin{align}\label{eq:simple_two_term}
\frac{L\eta^2}{2} + \frac{1}{2}\eta(1 - L\eta \mu) = \frac{\eta}{2} + \frac{L\eta^2}{2}(1-\mu) \leq \frac{\eta}{2};
\end{align}
in the seventh line we used eq.~\ref{eq:lipschitz_global_variance}; and in the final line we used eq.~\ref{eq:local_updates_final} and denoted}
\begin{align}
\blue{\Psi_{\sigma} = {\eta} \|\vp\|^2 \left[ \sum_{i=1}^n K_i^2 \left(\frac{L\eta \sigma_{i}^2}{2m} + \sigma_{}^2\right) + (e - 2) \eta^2 L^2 \sum_{i=1}^n K_i^3 \left(\frac{\sigma_{i}^2}{m}  + 6 K_i \sigma_{}^2\right) \right].}
\end{align}
Since we assumed:
\begin{align}
\eta \leq \frac{1}{L}\sqrt{\frac{1}{24(e-2)\|\vp\|^2(\sum_{i=1}^n K_i^4)}},
\end{align}
we have:
\begin{align}\label{eq:first_order_coeff}
\frac{11\mu - 6}{12} - 6(e-2) L^2 \|\vp\|^2 \sum_{i=1}^n K_i^4 \eta^2 \geq \frac{11\mu - 6}{12} - \frac{1}{4} \geq \frac{11\mu - 9}{12}.
\end{align}
Therefore, eq.~\ref{eq:summary_t} becomes:
\begin{align}
\Eb F(\thetav_{t+1}) &\leq \Eb F(\thetav_t) - \eta \frac{11\mu - 9}{12} \Eb \|\nabla F(\thetav_t)\|^2 + \Psi_\sigma.
\end{align}
With some algebra we obtain:
\begin{align}
\eta \frac{11\mu - 9}{12} \Eb \|\nabla F(\thetav_t)\|^2 \leq \Eb [F(\thetav_t) - F(\thetav_{t+1})] + \Psi_{\sigma}.
\end{align}
Summing both sides over $t = 0, \dots, T - 1$ and dividing by $T$, we have:
\begin{align}
\eta \frac{11\mu - 9}{12} \frac{1}{T}\sum_{t=0}^{T-1} \Eb \|\nabla F(\thetav_t)\|^2 \leq \frac{F(\thetav_0) - F^*}{T} + \Psi_{\sigma},
\end{align}
which gives:
\begin{align}\label{eq:final_fedavg}
\min_{0\leq t \leq T - 1} \Eb \|\nabla F(\thetav_t)\|^2 \leq \frac{12}{(11\mu - 9)\eta}\left( \frac{F(\thetav_0) - F^*}{T} + \Psi_{\sigma}\right),
\end{align}
with $F^* = \min_{\thetav}F(\thetav)$ the optimal value. 

Finally, for the partial participation, it suffices to replace the client set $\{1, \dots, n\}$ with its subset. Note that after this substitution, the new variance term satisfies $\Psi'_{\sigma} \leq \Psi_{\sigma}$ since this term increases with more participants, and eq.~\ref{eq:first_order_coeff} still holds since we subtract a smaller term with partial participation. We also need to modify eq.~\ref{eq:first_order_coeff} so we further lower bound eq.~\ref{eq:first_order_coeff} with $\mu \geq \min_i K_i$. 
\end{proof}

\PropFair*

\begin{proof}

\blue{The proof follows similarly the proof of FedAvg (\Cref{thm:fedavg}). 
%The tricky part is that $\varphi \circ \ell_{S_i}$ is no longer an unbiased estimator of $\varphi \circ f_i$, which we need to take care of when reusing the proof. 
Denote $\varphi(t) = -\log(M - t)$. The changes of PropFair compared to FedAvg as follows:
\begin{itemize}
\item The aggregate loss for each client $i$ is not $f_i$, but $\varphi \circ f_i$;
\item The objective function is not $F = \sum_i p_i f_i$, but $\pi = \sum_i p_i \varphi \circ f_i$;
\item For each batch $S_i \sim \Dc_i^m$ from client $i$, the batch loss is not $\ell_{S_i}$, but $\varphi \circ \ell_{S_i}$.
\end{itemize}
Note that in Assumption \ref{assump:lip_smooth} we implicitly required eq.~\ref{eq:loss_i}:
$$
f_i(\thetav) = \Eb_{(\xv, y)\sim \Dc_i} [\ell(\thetav, (\xv, y))],
$$
or in other words, $\ell(\thetav, (\xv, y))$ is an \emph{unbiased} estimator of $f_i$. This is no longer true if we replace $\ell$ with $\varphi \circ \ell$ and $f_i$ with $\varphi \circ f_i$. Similarly, $\varphi \circ \ell_{S_i}$ is no longer an unbiased estimator of $\varphi \circ f_i$. We will take care of this pitfall in our proof.
}

First, from the Lipschitzness assumption in Assumption \ref{assump:bound_lip} we can obtain an upper bound for the gradient: $\|\nabla f_i(\thetav)\| \leq L_{0}$ for any $\thetav \in \Rb^d$. \blue{We will use this result, as well as the rest of Assumption \ref{assump:bound_lip}, to derive similar bounds as in Assumption \ref{assump:lip_smooth}:
\begin{itemize}
\item the Lipschitz constant of $\nabla \varphi \circ f_i$;
\item the Lipschitz constant of $\varphi \circ f_i - \varphi \circ f_j$;
\item the variance of each batch $\nabla \varphi \circ \ell_{S_i}$.
\end{itemize}
}

For the Lipschitz smooth constant of $\varphi \circ f_i$, we write:
\begin{align}\label{eq:lip_smooth_comp}
\|\nabla (\varphi \circ f_i)(\thetav) - \nabla (\varphi \circ f_i)(\thetav') \| &= \left\| \frac{\nabla f_i(\thetav)}{M - f_i(\thetav)} -  \frac{\nabla f_i(\thetav')}{M - f_i(\thetav')}\right\| \nonumber \\
&= \left\| \frac{M(\nabla f_i(\thetav) - \nabla f_i(\thetav')) - \nabla f_i(\thetav) f_i(\thetav') + \nabla f_i(\thetav') f_i(\thetav)}{(M - f_i(\thetav))(M - f_i(\thetav'))} \right\|\nonumber \\
&\leq \frac{4}{M^2}\left( M\|\nabla f_i(\thetav) - \nabla f_i(\thetav')\| + \| \nabla f_i(\thetav) f_i(\thetav') - \nabla f_i(\thetav') f_i(\thetav)\|\right) \nonumber \\
&\leq \frac{4}{M^2} (M L \|\thetav - \thetav'\| + \| \nabla f_i(\thetav) f_i(\thetav') - \nabla f_i(\thetav') f_i(\thetav)\|).
\end{align}
The second term in the parenthesis above can be computed as:
\begin{align}
\| \nabla f_i(\thetav) f_i(\thetav') - \nabla f_i(\thetav') f_i(\thetav))\| &= \| \nabla f_i(\thetav) f_i(\thetav') -  \nabla f_i(\thetav) f_i(\thetav) +  \nabla f_i(\thetav) f_i(\thetav) - \nabla f_i(\thetav') f_i(\thetav))\| \nonumber \\
&\leq \| \nabla f_i(\thetav) f_i(\thetav') -  \nabla f_i(\thetav) f_i(\thetav)\| +  \|\nabla f_i(\thetav) f_i(\thetav) - \nabla f_i(\thetav') f_i(\thetav)\| \nonumber \\
& = \| \nabla f_i(\thetav)\| \cdot \| f_i(\thetav') - f_i(\thetav)\| +  \|\nabla f_i(\thetav)  - \nabla f_i(\thetav')\| \cdot \|f_i(\thetav)\| \nonumber \\
&\leq L_0^2 \|\thetav' - \thetav\| + L \frac{M}{2} \|\thetav' - \thetav\|,
\end{align}
where in the second line we used triangle inequality; in the fourth line we used Assumptions \ref{assump:lip_smooth} and \ref{assump:bound_lip}. Plugging in back to eq.~\ref{eq:lip_smooth_comp} we have:
\begin{align}
\|\nabla (\varphi \circ f_i)(\thetav) - \nabla (\varphi \circ f_i)(\thetav') \| \leq \frac{4}{M^2}\left(\frac{3}{2}ML + L_0^2\right)\|\thetav - \thetav'\|.
\end{align}
Let us now figure out the variance terms. For the global variance term, we similarly write:
\begin{align}\label{eq:global_var}
\|\nabla (\varphi \circ f_i)(\thetav) - \nabla (\varphi \circ f_j)(\thetav) \| &= \left\| \frac{\nabla f_i(\thetav)}{M - f_i(\thetav)} -  \frac{\nabla f_j(\thetav)}{M - f_j(\thetav)}\right\| \nonumber \\
&= \left\| \frac{M(\nabla f_i(\thetav) - \nabla f_j(\thetav)) - \nabla f_i(\thetav) f_j(\thetav) + \nabla f_j(\thetav) f_i(\thetav)}{(M - f_i(\thetav))(M - f_j(\thetav))} \right\|\nonumber \\
&\leq \frac{4}{M^2}\left( M\|\nabla f_i(\thetav) - \nabla f_j(\thetav)\| + \| \nabla f_i(\thetav) f_j(\thetav) - \nabla f_j(\thetav) f_i(\thetav)\|\right) \nonumber \\
&\leq \frac{4}{M^2} (M \sigma_{} + \| \nabla f_i(\thetav) f_j(\thetav) - \nabla f_j(\thetav) f_i(\thetav)\|).
\end{align}
The second term in the parenthesis above can be computed as:
\begin{align}\label{eq:second_term_global_var}
\| \nabla f_i(\thetav) f_j(\thetav) - \nabla f_j(\thetav) f_i(\thetav))\| &= \| \nabla f_i(\thetav) f_j(\thetav) -  \nabla f_i(\thetav) f_i(\thetav) +  \nabla f_i(\thetav) f_i(\thetav) - \nabla f_j(\thetav) f_i(\thetav)\| \nonumber \\
&\leq \| \nabla f_i(\thetav) f_j(\thetav) -  \nabla f_i(\thetav) f_i(\thetav)\| +  \|\nabla f_i(\thetav) f_i(\thetav) - \nabla f_j(\thetav) f_i(\thetav)\| \nonumber \\
& = \| \nabla f_i(\thetav)\| \cdot \| f_j(\thetav) - f_i(\thetav)\| +  \|\nabla f_i(\thetav)  - \nabla f_j(\thetav)\| \cdot \|f_i(\thetav)\| \nonumber \\
&\leq L_0 \sigma_{0} + \frac{M}{2} \sigma_{},
\end{align}
where in the second line we used triangle inequality; in the last line we used Assumptions \ref{assump:lip_smooth} and \ref{assump:bound_lip}. Plugging eq.~\ref{eq:second_term_global_var} into eq.~\ref{eq:global_var} we find:
\begin{align}
\|\nabla (\varphi \circ f_i)(\thetav) - \nabla (\varphi \circ f_j)(\thetav) \| \leq \blue{\frac{4}{M}\left(\frac{3}{2}\sigma_{} + \frac{L_0}{M}\sigma_{0}\right)}.
\end{align}
\blue{Let us finally compute the new local variance term for each batch. Recall that we denoted $\ell_{S_i}(\theta) := \frac{1}{|S_i|}\sum_{(\xv, y)\in S_i}\ell(\thetav, (\xv, y))$, with $S_i\sim \Dc_i^m$. We can write} %Denote
% \begin{align}
% \ell_S(\thetav) := \frac{1}{|S|}\sum_{(\xv, y)\in S}\nabla \ell (\theta, (\xv, y)),  
% \end{align}
% where $S$ is a batch with $m$ samples i.i.d.~drawn from $\Dc_i$. We have:
\begin{align}
\|\nabla (\varphi \circ f_i)(\thetav) - \nabla (\varphi \circ \ell_{S_i})(\thetav) \| &= \left\| \frac{\nabla f_i(\thetav)}{M - f_i(\thetav)} -  \frac{\nabla \ell_{S_i}(\thetav)}{M - \ell_{S_i}(\thetav)}\right\| \nonumber \\
&\leq \frac{4}{M^2}\left( \frac{3M}{2}\|\nabla f_i(\thetav)  - \nabla \ell_{S_i}(\thetav)\| + L_0 \| f_i(\thetav)  - \ell_{S_i}(\thetav)\|\right),
\end{align}
and the derivation follows similarly as eq.~\ref{eq:global_var}. Taking the square on both sides and taking the expectation over \blue{$S_i \sim \Dc_i^m$}, we obtain:
\begin{align}\label{eq:variance_varphi_local}
\Eb_{S_i \sim \Dc_i^m}\|\nabla (\varphi \circ f_i)(\thetav) - \nabla (\varphi \circ \ell_{S_i})(\thetav) \|^2 &\leq \frac{16}{M^4} \Bigg( 2\frac{9M^2}{4} \Eb_{S_i \sim \Dc_i^m}\|\nabla f_i(\thetav)  - \nabla \ell_{S_i}(\thetav)\|^2 +  \nonumber \\ 
&+ 2 L_0^2  \Eb_{S_i \sim \Dc_i^m}\| f_i(\thetav)  - \ell_{S_i}(\thetav)\|^2 \Bigg) \nonumber \\
&\leq \frac{8}{M^4}\left(9M^2 \frac{\sigma_{i}^2}{m} + 4L_0^2 \frac{\sigma_{0, i}^2}{m}\right),
\end{align}
where in the first line we used $(a + b)^2 \leq 2(a^2 + b^2)$ and in the second line we used Assumptions \ref{assump:lip_smooth} and \ref{assump:bound_lip}. 

\blue{For convenience we will use the following notations:
\begin{align}
\widetilde{L} = \frac{4}{M^2}\left(\frac{3}{2}ML + L_0^2\right), \, \widetilde{\sigma}_{i}^2 = \frac{8}{M^4}\left(9M^2 \sigma_{i}^2 + 4L_0^2 \sigma_{0, i}^2\right), \, \tilde{\sigma} = \frac{4}{M}\left(\frac{3}{2}\sigma_{} + \frac{L_0}{M}\sigma_{0}\right),
\end{align}
which are the new Lipschitz constant of $\nabla \varphi \circ f_i$, the new local variance term of $\nabla \varphi \circ \ell_{S_i}$, and the new Lipschitz constant of $\varphi \circ f_i - \varphi \circ f_j$. 
Note that if we average after the composition, then the local variance would be:
\begin{align}\label{eq:wrong_average}
\Eb_{S_i\sim \Dc_i^m}\left\|\frac{1}{|S_i|}\sum_{(\vx, y)\in S_i}\nabla \varphi \circ \ell(\thetav, (\vx, y)) - \nabla \varphi \circ f_i(\thetav) \right\|^2 &\leq \frac{1}{|S_i|}\sum_{(\vx, y)\in S_i}\Eb_{(\vx, y)\sim \Dc_i}\left\|\nabla \varphi \circ \ell(\thetav, (\vx, y)) - \nabla \varphi \circ f_i(\thetav)\right\|^2 \nonumber \\
&\leq \widetilde{\sigma}_i^2,
\end{align}
where we can only use Cauchy--Schwarz inequality since $\varphi \circ \ell$ is biased. Therefore, if we do it in this way, the variance (upper bound) will be $m$ times larger than the current way, which will slow down the convergence.}

\blue{
Let us now follow the proof of FedAvg (\Cref{thm:fedavg}) to prove the convergence of PropFair. Our proof follows the one of \Cref{thm:fedavg}. Note that the global update now is:
\begin{align}\label{eq:global_update_2}
\thetav_{t+1} = \thetav_t - \eta \sum_{i=1}^n p_i \sum_{j=1}^{K_i} \tg{i}_{t, j},
\end{align}
with $\tg{i}_{t, j} = \nabla \varphi \circ \ell_{S_i^j} (\x{i}_{t, j-1})$ and $S_i^j$ the $j^{\rm th}$ batch from client $i$. Similar to eq.~\ref{eq:mid_step_lip} we obtain:
\begin{align}
\pi(\thetav_{t+1}) &\leq \pi(\thetav_t) - \eta \mu \left(1 - \frac{{L}\eta}{2} \mu \right) \|\nabla \pi(\thetav_t)\|^2 \nonumber \\
& - \eta (1 - {L} \eta \mu) \left \langle \nabla \pi(\thetav_t), \sum_{i=1}^n p_i \sum_{j=1}^{K_i} (\tg{i}_{t, j} - \nabla \pi(\thetav_t)) \right \rangle +  \frac{{L} \eta^2}{2} \left\|\sum_{i=1}^n p_i \sum_{j=1}^{K_i} (\tg{i}_{t, j} - \nabla \pi(\thetav_t))\right\|^2.
\end{align}
However, since $\tg{i}_{t, j}$ is no longer unbiased, we need to rewrite eq.~\ref{eq:expectation_multi} as:
\begin{align}\label{eq:expectation_multi_2}
\Eb \pi(\thetav_{t+1}) &\leq \Eb \pi(\thetav_t) - \eta \mu \left(1 - \frac{{L}\eta}{2} \mu \right) \Eb \|\nabla \pi(\thetav_t)\|^2 + \eta (1 - {L} \eta \mu) \Eb \left[ \| \nabla \pi(\thetav_t) \|\cdot \left \|\sum_{i=1}^n p_i \sum_{j=1}^{K_i} (\tg{i}_{t, j} - \nabla \pi(\thetav_t)) \right \| \right] + \nonumber \\
& + \frac{{L} \eta^2}{2} \Eb \left\|\sum_{i=1}^n p_i \sum_{j=1}^{K_i} (\tg{i}_{t, j} - \nabla \pi(\thetav_t)) \right\|^2  \nonumber \\
& \leq \Eb \pi(\thetav_t) +  \left( - \eta \mu \left(1 - \frac{{L}\eta}{2}\mu \right) + \frac{1}{2}\eta (1 - {L} \eta \mu)\right) \Eb \|\nabla \pi(\thetav_t)\|^2 + \nonumber \\
& + \left( \frac{{L} \eta^2}{2} + \frac{1}{2} \eta (1 - {L} \eta \mu) \right) \Eb \left\|\sum_{i=1}^n p_i \sum_{j=1}^{K_i} (\tg{i}_{t, j} - \nabla \pi(\thetav_t)) \right\|^2 \nonumber \\
&\leq \Eb \pi(\thetav_t) - \eta \frac{11\mu - 6}{12} \Eb \|\nabla \pi(\thetav_t)\|^2 + \frac{\eta}{2} \Eb \left\|\sum_{i=1}^n p_i \sum_{j=1}^{K_i} (\tg{i}_{t, j} - \nabla \pi(\thetav_t)) \right\|^2,
\end{align}
where we recycled eq.~\ref{eq:linear_term_grad_norm} and eq.~\ref{eq:simple_two_term}. Similar to eq.~\ref{eq:split_into_two_terms} we write:
\begin{align}
\tg{i}_{t, j} - \nabla \pi(\thetav_t) = \tg{i}_{t, j} - \nabla \varphi \circ f_i(\x{i}_{t, j-1}) +  \nabla \varphi \circ f_i(\x{i}_{t, j-1}) - \nabla \pi(\thetav_t),
\end{align}
and using $\|\va + \vb\|^2 \leq 2(\|\va\|^2 + \|\vb\|^2)$ eq.~\ref{eq:expectation_multi_2} becomes:
\begin{align}\label{eq:inter_mid_final}
\Eb \pi(\thetav_{t+1}) &\leq \Eb \pi(\thetav_t) - \eta \frac{11\mu - 6}{12} \Eb \|\nabla \pi(\thetav_t)\|^2 + \eta \Eb \left\|\sum_{i=1}^n p_i \sum_{j=1}^{K_i} (\tg{i}_{t, j} - \nabla \widetilde{f_i}(\x{i}_{t, j-1})) \right\|^2 + \nonumber \\ 
&+ \eta \Eb \left\|\sum_{i=1}^n p_i \sum_{j=1}^{K_i} (\nabla \widetilde{f_i}(\x{i}_{t, j-1}) - \nabla \pi(\thetav_t)) \right\|^2,
\end{align}
with $\tilde{f_i}$ a shorthand for $\varphi \circ f_i$. With eq.~\ref{eq:variance_varphi_local} and similar to eq.~\ref{eq:local_variance}, we have:
\begin{align}\label{eq:local_var_pf}
\Eb \left\|\sum_{i=1}^n p_i \sum_{j=1}^{K_i} (\tg{i}_{t, j} - \nabla \widetilde{f_i}(\x{i}_{t, j-1})) \right\|^2 \leq 
\|\vp\|^2 \sum_{i=1}^n K_i^2 \frac{\widetilde{\sigma}_i^2}{m},
\end{align}
and similar to eq.~\ref{eq:lipschitz_global_variance}, we obtain:
\begin{align}\label{eq:global_var_pf}
\Eb \left\|\sum_{i=1}^n p_i \sum_{j=1}^{K_i} (\nabla \widetilde{f_i}(\x{i}_{t, j-1}) - \nabla \pi(\thetav_t)) \right\|^2 & \leq 2 \|\vp\|^2  \sum_{i=1}^n K_i^2 \tilde{\sigma}^2 + 2 \tilde{L}^2 \|\vp\|^2 \sum_{i=1}^n K_i \sum_{j=1}^{K_i} \Eb \|\x{i}_{t, j-1} - \thetav_t \|^2.
\end{align}
For $j \in [K_i - 1]$, we can write similarly to eq.~\ref{eq:local_diff}: \begin{align}\label{eq:local_diff_2}
\Eb \|\x{i}_{t, j} - \thetav_t \|^2 &= \Eb \|\x{i}_{t, j - 1}  - \thetav_t - \eta \tg{i}_{t,j}\|^2 \leq \left( 1 + \frac{1}{2K_i - 1}\right) \Eb\|\x{i}_{t, j - 1}  - \thetav_t\|^2 + 2K_i \eta^2 \Eb \|\tg{i}_{t,j}\|^2.
\end{align}
With the following equality:
\begin{align}
\tg{i}_{t,j} = (\tg{i}_{t,j} - \nabla \tilde{f}_i(\x{i}_{t,j-1})) + (\nabla \tilde{f}_i(\x{i}_{t,j-1}) - \nabla \tilde{f_i}(\thetav_t)) + (\nabla \tilde{f_i}(\thetav_t) - \nabla \pi(\thetav_t)) + \nabla \pi(\thetav_t),
\end{align}
we use $\|\va + \vb + \vc + \vd\|^2 \leq 4(\|\va\|^2 + \|\vb\|^2 + \|\vc\|^2 + \|\vd\|^2)$ to obtain:
\begin{align}
\Eb \|\tg{i}_{t,j}\|^2 &\leq 4\Eb \|\tg{i}_{t,j} - \nabla \tilde{f}_i(\x{i}_{t,j-1})\|^2 + 4 \Eb \|\nabla \tilde{f}_i(\x{i}_{t,j-1}) - \nabla \tilde{f_i}(\thetav_t)\|^2 + 4 \Eb \| \nabla \tilde{f_i}(\thetav_t) - \nabla \pi(\thetav_t)\|^2 + 4\Eb \|\nabla \pi(\thetav_t)\|^2 \nonumber \\
&\leq 4\frac{\widetilde{\sigma}_i^2}{m} + 4 \tilde{L}^2 \Eb\|\x{i}_{t, j - 1} - \thetav_t\|^2 + 4\tilde{\sigma}^2 + 4 \Eb \|\nabla \pi(\thetav_t)\|^2.
\end{align}
Plugging it back into eq.~\ref{eq:local_diff_2} we have:
\begin{align}
\Eb \|\x{i}_{t, j} - \thetav_t \|^2 & \leq \left( 1 + \frac{1}{2K_i - 1} + 8 K_i \eta^2 \tilde{L}^2 \right) \Eb\|\x{i}_{t, j - 1}  - \thetav_t\|^2 + 8 K_i \eta^2 \left(\frac{\widetilde{\sigma}_i^2}{m} + \tilde{\sigma}^2\right) + 8K_i \eta^2 \Eb \|\nabla \pi(\thetav_t)\|^2 \nonumber \\
&\leq  \left( 1 + \frac{1}{K_i} \right) \Eb\|\x{i}_{t, j - 1}  - \thetav_t\|^2 + 8 K_i \eta^2 \left(\frac{\widetilde{\sigma}_i^2}{m} + \tilde{\sigma}^2\right) + 8K_i \eta^2 \Eb \|\nabla \pi(\thetav_t)\|^2 \nonumber \\
& \leq K_i\left(\left( 1 + \frac{1}{K_i} \right)^j - 1\right) \left( 8 K_i \eta^2 \left(\frac{\widetilde{\sigma}_i^2}{m} + \tilde{\sigma}^2\right) + 8K_i \eta^2 \Eb \|\nabla \pi(\thetav_t)\|^2\right),
\end{align}
where in the second line we used $\eta \leq \min_i\{\frac{1}{6K_i \tilde{L}}\}$, and the last line is telescoping. Similar to eq.~\ref{eq:local_updates_final}, summing over $j = 0, 1, \dots, K_i - 1$ gives:
\begin{align}\label{eq:distance_pf}
\sum_{j=0}^{K_i - 1} \Eb \|\x{i}_{t, j} - \thetav_t \|^2 \leq 8(e - 2) K_i^3 \eta^2 \left(\frac{\widetilde{\sigma}_i^2}{m} + \tilde{\sigma}^2 +  \Eb \|\nabla \pi(\thetav_t)\|^2 \right).
\end{align}
From eq.~\ref{eq:inter_mid_final} we have:
\begin{align}\label{eq:inter_mid_final_2}
\Eb \pi(\thetav_{t+1}) &\leq \Eb \pi(\thetav_t) - \eta \frac{11\mu - 6}{12} \Eb \|\nabla \pi(\thetav_t)\|^2 + \eta \Eb \left\|\sum_{i=1}^n p_i \sum_{j=1}^{K_i} (\tg{i}_{t, j} - \nabla \widetilde{f_i}(\x{i}_{t, j-1})) \right\|^2 + \nonumber \\ 
&+ \eta \Eb \left\|\sum_{i=1}^n p_i \sum_{j=1}^{K_i} (\nabla \widetilde{f_i}(\x{i}_{t, j-1}) - \nabla \pi(\thetav_t)) \right\|^2 \nonumber \\
& \leq \Eb \pi(\thetav_t) - \eta \frac{11\mu - 6}{12} \Eb \|\nabla \pi(\thetav_t)\|^2 + \eta \|\vp\|^2 \sum_{i=1}^n K_i^2 \frac{\widetilde{\sigma}_i^2}{m} + 2\eta  \|\vp\|^2  \sum_{i=1}^n K_i^2 \tilde{\sigma}^2 + \nonumber \\
&+ 2 \eta \tilde{L}^2 \|\vp\|^2 \sum_{i=1}^n K_i \sum_{j=1}^{K_i} \Eb \|\x{i}_{t, j-1} - \thetav_t \|^2 \nonumber \\
&\leq \Eb \pi(\thetav_t) - \eta \frac{11\mu - 6}{12} \Eb \|\nabla \pi(\thetav_t)\|^2 + \eta \|\vp\|^2 \sum_{i=1}^n K_i^2 \left( \frac{\widetilde{\sigma}_i^2}{m} + 2\tilde{\sigma}^2 \right) + \nonumber \\
&+  2 \eta \tilde{L}^2 \|\vp\|^2 \sum_{i=1}^n 8(e - 2) K_i^4 \eta^2 \left(\frac{\widetilde{\sigma}_i^2}{m} + \tilde{\sigma}^2 +  \Eb \|\nabla \pi(\thetav_t)\|^2 \right) \nonumber \\
& = \Eb \pi(\thetav_t) - \eta \left( \frac{11\mu - 6}{12} - 16 (e - 2) \eta^2 \tilde{L}^2 \|\vp\|^2 \sum_{i=1}^n K_i^4 \right)\Eb \|\nabla \pi(\thetav_t)\|^2 + \widetilde{\Psi}_{\sigma} \nonumber \\ 
&\leq \Eb \pi(\thetav_t) - \eta \frac{11\mu - 9}{12} \Eb \|\nabla \pi(\thetav_t)\|^2 + \widetilde{\Psi}_{\sigma},
\end{align}
where in the second inequality we used eq.~\ref{eq:local_var_pf} and eq.~\ref{eq:global_var_pf}; in the third inequality we used eq.~\ref{eq:distance_pf}, in the second last inequality, we denoted:
\begin{align}
\widetilde{\Psi}_{\sigma} = {\eta} \|\vp\|^2 \bigg[ \sum_{i=1}^n K_i^2 \left(\frac{\widetilde{\sigma}_{i}^2}{m} + 2\widetilde{\sigma}_{}^2\right) + 16(e - 2) \eta^2 \tilde{L}^2 \sum_{i=1}^n K_i^4 \left(\frac{\widetilde{\sigma}_{i}^2}{m} + \widetilde{\sigma}_{}^2\right) \bigg];
\end{align}
and in the last line, we note that:
\begin{align}
\frac{11\mu - 6}{12} - 16 (e - 2) \eta^2 \tilde{L}^2 \|\vp\|^2 \sum_{i=1}^n K_i^4 \leq \frac{11\mu - 9}{12}, 
\end{align}
since we assumed:
\begin{align}
\eta \leq \frac{1}{8\tilde{L}}\sqrt{\frac{1}{(e-2)\|\vp\|^2(\sum_{i=1}^n K_i^4)}}.
\end{align}
Similar to eq.~\ref{eq:final_fedavg} we obtain:
\begin{align}
\min_{0\leq t \leq T - 1} \Eb \|\nabla \pi(\thetav_t)\|^2 \leq \frac{12}{(11\mu - 9)\eta}\left( \frac{\pi(\thetav_0) - \pi^*}{T} + \widetilde{\Psi}_{\sigma}\right).
\end{align}
}

\end{proof}

\section{A Failure Case of Agnostic Federated Learning}\label{sec:failure_afl}
In this section we show that AFL might suffer from the generalization issue, in the case when some of the clients have very few samples that are outliers. Suppose the input space is $\Rb^2$ and the classification task is binary with a linear classifier. We assume the simple case where every client has the same underlying distribution:
\begin{align}\label{eq:mixture_noise}
p(x|y) = \begin{cases}
0.9 U([-1, 0] \times [-1, 1]) + 0.1 U([0, 1] \times [-1, 1]) & \mbox{ if }y=1, \\ 
0.9 U([0, 1] \times [-1, 1]) + 0.1 U([-1, 0] \times [-1, 1]) & \mbox{ if }y=-1.
\end{cases}
\end{align}
% \begin{align}
% p_1(x|y) = \begin{cases}
% U([-1, 0] \times [-1, 1]) & \mbox{ if }y=1, \\ 
% U([0, 1] \times [-1, 1]) & \mbox{ if }y=-1.
% \end{cases}
% \end{align}
Note that $U(I)$ represents the density of the uniform distribution on interval $I$. A visualization of eq.~\ref{eq:mixture_noise} can be found in \Cref{fig:counter_afl}. 

In practice, we draw samples from each of the client. However, if one of the clients do not have enough samples, AFL might have an issue. For instance, two clients could have to opposite sample sets:
\begin{align}
S_1 = \{((0.5, \pm 0.5), 1), ((-0.5, \pm 0.5), -1)\}, \, S_2 = \{((0.5, \pm 0.5), -1), ((-0.5, \pm 0.5), 1)\}. \nonumber
\end{align}

\begin{figure}
\centering
\includegraphics[width=\textwidth]{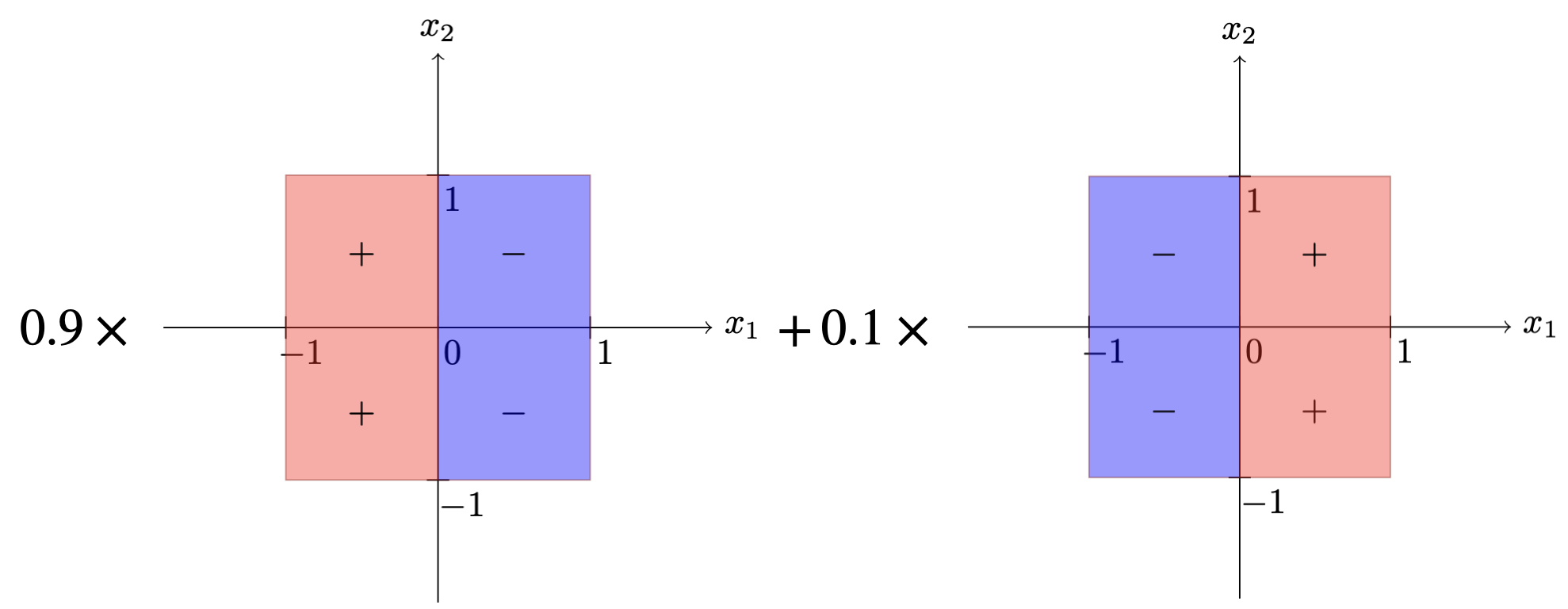}
\caption{The visualization of the distribution shown in eq.~\ref{eq:mixture_noise}. The plus sign means the positive label $y = 1$ and the negative sign means the negative label $y = -1$. }
\label{fig:counter_afl}
\end{figure}
In this case, AFL could give an unfavorable generalization error, since the optimal training error is 50\%. For example, this optimal AFL solution can be reached if one chooses the linear classifier to be perpendicular to the $x_1$-axis, resulting in the test error to be 50\%. However, there exists an optimal classifier $\vw = (-1, 0)$ such that the test error is 10\%. 
%Even worse, one of the client surrogate losses can go to infinity if these clients contain samples with conflicting labels.

We can also verify this claim from the proof of Theorem 1 in Appendix C.2, \citet{mohri2019agnostic}. If one of the clients has too few samples (i.e., some $m_k$ is small), then the generalization bound on the right can be very large or even vacuous. 

\blue{Note that our PropFair algorithm does not suffer from this generalization problem, since if some client $i$ has too few samples, then the corresponding weight $p_i = n_i / N$ will be small, and thus according to \eqref{eq:training_loss} the overall performance will not be heavily affected. }
%the generalization error might be large. For instance, suppose we only draw 4 samples for the second client: $\{((0.5, \pm 0.5), 1), ((-0.5, \pm 0.5), -1)\}$, and 360 samples for the first client. 

% We train logistic regression with FedAvg and AFL separately. The solution found by FedAvg gives $w_{\tt FedAvg} = (-1.1436, -0.3474)$, and the AFL finds $w_{\tt AFL} = (0.0090, -0.2549)$. While $w_{\tt FedAvg}$ gives low test error, $w_{\tt AFL}$ inevitably finds a solution that works badly for both clients.

\section{Additional Experiments}\label{app:exp}

In this section, we provide more details about our experimental results. Results for all experiments are provided based on an average over three runs with different seeds. %We used an internal GPU cluster with six NVIDIA Tesla P100, lasting in total about 4 weeks.

\subsection{Datasets and models}\label{sec:datasets}

We describe the benchmark datasets in this subsection. For all datasets we fix the batch size to be 64. 

\paragraph{CIFAR-\{10, 100\}}\citep{krizhevsky2009learning} are standard image classification datasets. There are 50000 samples with 10/100 balanced classes for CIFAR-\{10, 100\}. By doing Dirichlet allocation  \citep{wang2019federated} we achieve the heterogeneity of label distributions. For all samples in each class $k$, denoted as the set $\Sc_k$, we split $\Sc_k = \Sc_{k,1}\cup \Sc_{k,2}\dots \Sc_{k, n}$ into $n$ clients according a symmetric Dirichlet distribution ${\rm Dir}(\beta)$. Then we gather the samples for client $j$ as $\Sc_{1, j} \cup \Sc_{2, j} \dots \Sc_{C, j}$ if we have $C$ classes in total. We note that some of the clients might have too few samples (a few hundred). In this case the FL algorithm might overfit for such clients and we regenerate the data split. We choose the number of clients to be 10 for both CIFAR-\{10, 100\}. For each of the client dataset, we split it further into 80\% training data and 20\% test data.

% \paragraph{CINIC-10}\citep{darlow2018cinic} is an extension of CIFAR-10, which contains the images from CIFAR-10 and a selection of ImageNet database images (210,000 images downsampled to 32$\times$32). We split the dataset into $50$ clients due to the increased sample size. We use the symmetric Dirichlet distribution ${\rm Dir}(0.5)$ for partitioning the total 270,000 samples into 50 clients. For each client, we randomly select 50\% of the client dataset as the training set and the rest as the test set.
\paragraph{TinyImageNet} is from the course project of {Stanford CS231N}.\footnote{\url{http://cs231n.stanford.edu/}} It contains 200 classes and each class has 500 images. Our FL split setting is the same as CIFAR-\{10, 100\}, except that we choose 20 clients and the Dirichlet parameter $\beta = 0.05$. 

\paragraph{Shakespeare}\citep{shakespeare, mcmahan2017communication} is a text dataset of Shakespeare dialogues, and we use it for the task of next character prediction. We treat each speaking role as a client resulting in a natural heterogeneous partition. We first filter out the clients with less than 10,000 samples and sample 20 clients from the remaining. Also, each client's dataset is split into 50\% for training and 50\% for test.

% \paragraph{StackOverflow}\citep{authors2019tensorflow} is another popular text dataset for benchmarking FL algorithms \citep{reddi2020adaptive, he2020fedml}. Our task is next word prediction and our model is LSTM \citep{hochreiter1997long}. 
% We use the natural non-iid partition of StackOverflow and randomly select 20 users with between 10,000 and 12,500 samples. Afterwards, we split their data into training and test sets with an equal ratio. 

In \Cref{tab:datasets}, we summarize these datasets, our partition methods, as well as the models we implement. 

\begin{table*}[h]
\centering
\caption{Details of the experiments and the used datasets. ResNet-18 is the residual neural network defined in \citet{he2016deep}. GN: Group Normalization \citep{wu2018group}; FC: fully connected layer; CNN: Convolutional Neural Network; Conv: convolution layer; RNN: Recurrent Neural Network; LSTM: Long Short-Term Memory layer. The plus sign means composition. }
\label{tab:datasets}
\small
\setlength\tabcolsep{2pt}
\begin{tabular}{cccccc}
\toprule
\bf{Datasets} & \bf{Training set size} & \bf{Test set size} & \bf{Partition method} & \bf{\# of clients} & \bf{Model} % & VPF
\\ 
\midrule
CIFAR-10 & 39963 & 10037 & Dirichlet partition ($\beta=0.5$)& 10 & ResNet-18 + GN\\

CIFAR-100 & 39764 & 10236 & Dirichlet partition ($\beta=0.1$)& 10 & ResNet-18 + GN
\\
TinyImageNet & 78044 & 20135 & Dirichlet partition ($\beta=0.05$)& 20 & {ResNet-18 + GN} \\
Shakespeare & 178796 &  177231 & realistic partition & 20 & RNN (1 LSTM + 1 FC)
\\
%StackOverflow & 109671 & 109621 & realistic partition & 20 & RNN (1 LSTM + 1 FC)
\bottomrule
\end{tabular}
\end{table*}

%\subsection{Additional baseline methods}

%\todo{GiFair and FedMGDA+}

\subsection{Algorithms to compare and tuning hyperparameters}\label{sec:baselines}

We compare our PropFair algorithm with common FL baselines, including FedAvg \citep{mcmahan2017communication}, $q$-FFL \citep{li2019fair} and AFL \citep{mohri2019agnostic}. 
For each dataset and each algorithm (algorithms with different hyperparameters are counted as different), we find the best learning rate from a grid. Here are the grids we used for each dataset:\newline
\begin{itemize}
\item CIFAR-10: \texttt{\{5e-3, 1e-2, 2e-2, 5e-2\}};
\item CIFAR-100: \texttt{\{5e-3, 1e-2, 2e-2, 5e-2\}};
\item TinyImageNet: \texttt{\{5e-3, 1e-2, 2e-2, 5e-2\}};
\item Shakespeare: \texttt{\{1e-1, 5e-1, 1, 2\}};
%\item StackOverflow: \texttt{\{1e-1, 2e-1, 5e-1, 1, 2\}}.
\end{itemize}

\begin{table*}[ht]
\centering
\caption{The best values of hyperparameters used for different datasets, chosen based on grid search.}
\label{tab:hyperparam}
\small
\setlength\tabcolsep{2pt}
\begin{tabular}{ccccccc}
\toprule
\bf{Algorithm} & \bf{Hyperparameter} & \bf{CIFAR-10} & \bf{CIFAR-100}  & \bf{TinyImageNet} & \bf{Shakespeare} % & \bf{CINIC-10}
\\ 
\midrule
$q$-FFL &  $q$ & 0.1 & 0.1 & 0.1 & 0.1 \\ % & 0.1
%\midrule
TERM & $\alpha$ & 0.5 & 0.5 & 0.5 & 0.5 \\ % & 0.5
GIFAIR-FL & $\lambda/\lambda_{\rm max}$ & 0.9 & 0.1 & 0.1 & 0.5  \\
FedMGDA$+$ & $\epsilon$ & 0.5 & 0.05 & 0.05 & 0.5 \\
%\midrule
PropFair & $M$ & 5.0 & 2.0 & 2.0 & 2.0  % & 1.0
\\
\bottomrule
\end{tabular}
%\vspace{2em}
\end{table*}
\begin{table*}[h]

\centering
\caption{The best learning rates used for different datasets and algorithms, based on grid search.}
\label{tab:lr}
\small
\setlength\tabcolsep{2pt}
\begin{tabular}{cccccccc}
\toprule
\bf{Datasets} & \bf{FedAvg} & \bf{$q$-FFL} & \bf{AFL} & \bf{PropFair} & \bf{TERM} & \bf{GIFAIR-FL} & \bf{FedMGDA+}
%& \bf{GiFair}
\\ 
\midrule
CIFAR-10 & {\tt 5e-3} & {\tt 5e-2} & {\tt 1e-2} & {\tt 5e-2} & {\tt 1e-2} & {\tt 1e-2} & {\tt 1e-2}
\\
CIFAR-100 & {\tt 5e-3} & {\tt 2e-2} & {\tt 1e-2} & {\tt 1e-2} & {\tt 5e-3} & {\tt 1e-2} & \tt 1e-2
\\ 
TinyImageNet & \tt 2e-2 & \tt 2e-2 & \tt 2e-2 & \tt 5e-2 & \tt 2e-2 & \tt 2e-2 & \tt 1e-2\\
%CINIC-10 & {\tt 5e-3} & {\tt 2e-2} & {\tt 5e-3} & {\tt 5e-3}  & {\tt 5e-3} %& {\tt 1e-2}\\
Shakespeare & {\tt 2} & {\tt 2} & {\tt 2} & {\tt 2} & {\tt 2} & {\tt 2}  & {\tt 2} 
\\
%StackOverflow & {\tt 1} & {\tt 2} & {\tt 1} & {\tt 2}  & {\tt 2} \\
\bottomrule
\end{tabular}
\end{table*}

We adapt hierarchical TERM from \cite{li2020tilted}, with client-level fairness ($\alpha>0$) and no sample-level fairness ($\tau=0$). For each dataset, we tune $\alpha$ (user-level parameter) from $\{0.01, 0.1, 0.5\}$. \Cref{tab:hyperparam} shows the optimal value of $\alpha$ used for different datasets is $0.5$. For AFL we tune the learning rate $\gamma_w$ from the corresponding grid and choose the default hyperparameter $\gamma_\lambda = 0.1$. 
%We also find that $\gamma_\lambda = 0.01$ has worse performance than $\gamma_\lambda = 0.1$ for all datasets, and thus we stick to $\gamma_\lambda = 0.1$ throughout. 
For $q$-FFL, we run the $q$-FedAvg algorithm from \citet{li2019fair} with the default Lipschitz constant $L = 1/\eta$ from where $\eta$ is the learning rate.\footnote{\url{https://github.com/litian96/fair_flearn/tree/master/flearn/trainers}}
%For $q$-FFL, we run $q$-FedAvg from \citet{li2019fair}. 
For each dataset we tune $q$ from $\{0.1, 1.0, 5.0\}$. For all datasets we find $q=0.1$ has the best performance. We also find that $q = 5$ often leads to divergence during training. 

For PropFair we fix $\epsilon = 0.2$ and tune $M$ (\Cref{alg:prop_fair}) from $M = 2, 3, 4, 5$. \Cref{tab:hyperparam} shows the optimal values of $M$ used for different datasets. A rule of thumb is to first take a large $M$ (say $M = 10$) and then gradually reduce this value so as to obtain better performance. Given a learning rate $\eta$, we use the learning rate $\eta \tfrac{\epsilon}{M}$ when the loss is greater than $M - \epsilon$, and $\eta$ otherwise. 
%For GiFair we tune $\lambda$ (the regularization weight) based on grid search. There is a upperbound for the possible values for $\lambda$ (see Sec. 3 in \cite{Yue2021GIFAIRFLAA}). We try four different values in the interval and choose the best one. \cref{tab:alpha} shows the best values of $\lambda$ for each dataset.  

In addition to the fair FL algorithms in the main text, we compare with two additional baselines in our appendices: GIFAIR-FL \citep{Yue2021GIFAIRFLAA} and FedMGDA+ \citep{hu2020fedmgda+}. For GIFAIR-FL we first compute $\lambda_{\max}$ and choose $\lambda$ from $\{0.1\lambda_{\max}, 0.5\lambda_{\max}, 0.9\lambda_{\max}\}$. For FedMGDA+, we choose $\epsilon$ from $\{0.05, 0.1, 0.5\}$ as implemented in \citet{hu2020fedmgda+}. One minor difference is that we fix the global learning rate to be $1.0$.

After finding the best hyperparameters for each algorithm, we record the best learning rates in \Cref{tab:lr}. For CIFAR-10/CIFAR-100/TinyImageNet/Shakespeare, we take 100/400/400/100 communication rounds respectively, in which cases we find most fair FL algorithms converge.

% \paragraph{Training epochs.} For CIFAR-10 / CIFAR-100 / Shakespeare / StackOverflow we run all algorithms in the main paper for 100 / 400 / 100 / 300 epochs, separately. 

\subsection{Detailed results}\label{sec:detailed_results}

In \Cref{tab:combinedwithterm}, we report different statistics across clients, for all the algorithms and datasets we study in this work. These statistical quantities include:
\begin{itemize}
\item The mean of test accuracies of all clients;
\item The standard deviation of client accuracies;
\item The worst test accuracy among the clients;
\item The mean of test accuracies across the worst 10\% clients;
\item The best test accuracy among the clients.
\item The mean of test accuracies across the best 10\% clients.
\end{itemize}
For each algorithm we take three different runs and report the mean and standard deviation of different statistical indices. In all the experiments we have used 64 as the default batch size. \Cref{tab:combinedwithterm} shows that PropFair is comparable with state-of-the-art algorithms across various datasets.

\begin{table*}[h!]
\centering
\caption{Comparison among federated learning algorithms on CIFAR-10, CIFAR-100, TinyImageNet %CINIC-10,
and Shakespeare datasets with test accuracies (\%) from clients. All algorithms are fine-tuned. {\bf Mean}: the average of performances across all clients; {\bf Std}: standard deviation of client test accuracies; {\bf Worst/Best}: the worst/best test accuracy from clients; {\bf Worst (10\%)/Best(10\%)}: the average of performance across the worst/best 10\% clients. Note that for CIFAR-\{10, 100\} the worst (best) case accuracy is the same as the worst (best) 10\% accuracy since we have 10 clients. 
}
\label{tab:combinedwithterm}
\small
\setlength\tabcolsep{2pt}
\begin{tabular}{cccccccc}
\toprule
\bf Dataset & \bf Algorithm & \bf Mean & \bf Std & \bf Worst & \bf Worst (10\%)& \bf Best & \bf Best (10\%)% & VPF
\\ 
\midrule\midrule
& FedAvg& 63.63$_{\pm 0.48}$ & 5.38$_{\pm 0.43}$ & 53.49$_{\pm 1.67}$ & 53.49$_{\pm 1.67}$ & 72.37$_{\pm 0.53}$ & 72.37$_{\pm 0.53}$\\
 & $q$-FFL & 57.27$_{\pm 0.47}$ & 5.68$_{\pm 0.16}$ & 47.28$_{\pm 0.26}$ & 47.28$_{\pm 0.26}$ & 66.71$_{\pm 1.24}$ & 66.71$_{\pm 1.24}$\\
& AFL& 64.29$_{\pm 0.40}$ & 4.48$_{\pm 0.70}$ & 56.16$_{\pm 1.56}$ & 56.16$_{\pm 1.56}$ & 71.55$_{\pm 0.84}$ & 71.55$_{\pm 0.84}$\\
{CIFAR-10} & TERM & 63.81$_{\pm 0.62}$ & 4.96$_{\pm 0.42}$ & 56.22$_{\pm 1.24}$ & 56.22$_{\pm 1.24}$ & 71.51$_{\pm 0.42}$ & 71.51$_{\pm 0.42}$\\
& GIFAIR-FL & 63.81$_{\pm 0.23}$ & 5.05$_{\pm 0.04}$ & 54.24$_{\pm 1.14}$ & 54.24$_{\pm 1.14}$ & 72.41$_{\pm 0.88}$ & 72.41$_{\pm 0.88}$\\
& FedMGDA+ & 61.92$_{\pm 0.93}$ & 4.93$_{\pm 0.44}$ & 52.84$_{\pm 1.12}$ & 52.84$_{\pm 1.12}$ & 70.42$_{\pm 1.72}$ & 70.42$_{\pm 1.72}$\\
\cmidrule{2-8}
& PropFair & \bf 64.75$_{\pm 0.10}$ & \bf 4.46$_{\pm 0.63}$ & \bf 58.14$_{\pm 0.89}$ & \bf 58.14$_{\pm 0.89}$ & \bf 72.72$_{\pm 2.35}$ & \bf 72.72$_{\pm 2.35}$\\
\midrule

& FedAvg& 29.94$_{\pm 0.81}$ & 4.06$_{\pm 0.37}$ & 25.26$_{\pm 1.50}$ & 25.26$_{\pm 1.50}$ & 40.29$_{\pm 0.85}$ & 40.29$_{\pm 0.85}$\\
& $q$-FFL & 28.53$_{\pm 0.58}$ & 4.53$_{\pm 0.11}$ & 23.33$_{\pm 0.72}$ & 23.33$_{\pm 0.72}$ & 39.82$_{\pm 1.02}$ & 39.82$_{\pm 1.02}$\\
& AFL& 30.33$_{\pm 0.27}$ & 3.68$_{\pm 0.40}$ & 25.49$_{\pm 1.12}$ & 25.49$_{\pm 1.12}$ & 39.21$_{\pm 0.98}$ & 39.21$_{\pm 0.98}$\\
\multirow{1}{*}{CIFAR-100}  & TERM & 30.35$_{\pm 0.28}$ & 3.50$_{\pm 0.37}$ & 26.46$_{\pm 0.36}$ & 26.46$_{\pm 0.36}$ & 39.39$_{\pm 0.90}$ & 39.39$_{\pm 0.90}$\\
& GIFAIR-FL & 30.63$_{\pm 0.37}$ & 3.58$_{\pm 0.17}$ & 26.99$_{\pm 0.38}$ & 26.99$_{\pm 0.38}$ & 40.03$_{\pm 0.62}$ & 40.03$_{\pm 0.62}$\\
& FedMGDA+ & 23.69$_{\pm 0.98}$ & 3.52$_{\pm 0.33}$ & 19.01$_{\pm 0.87}$ & 19.01$_{\pm 0.87}$ & 32.51$_{\pm 1.86}$ & 32.51$_{\pm 1.86}$\\
\cmidrule{2-8}
& PropFair & \bf 31.84$_{\pm 0.67}$ & \bf 3.10$_{\pm 0.47}$ & \bf 28.85$_{\pm 0.94}$ & \bf 28.85$_{\pm 0.94}$ & \bf 40.12$_{\pm 1.80}$ & \bf 40.12$_{\pm 1.80}$\\
% \midrule
%  & FedAvg & 
%  81.29$_{\pm 0.09}$ &
%  14.42$_{\pm 0.05}$ & 
%  49.93$_{\pm 0.43}$ & 
%  55.42$_{\pm0.19}$ & 
%  100.00$_{\pm 0.00}$ & 
%  $99.98_{\pm0.00}$
 
%  \\
 
% \multirow{3.5}{*}{CINIC-10} & 
% $q$-FFL & 
% 81.90$_{\pm 0.08}$ &
% 14.22$_{\pm 0.05}$ &
% 50.91$_{\pm 0.60}$ &
% $56.21_{\pm0.12}$ &
% 100.00$_{\pm 0.00}$ & 
% 99.99$_{\pm0.00}$

% \\

% & AFL & 
% 80.17$_{\pm0.34}$ &
% \bf14.04$_{\pm0.17}$ 
% & 49.02$_{\pm1.78}$ & 
% 54.08$_{\pm0.76}$ & 
% 99.54$_{\pm0.05} $ & 
% 98.39$_{\pm0.11}$

% \\

% & TERM & 
% \bf82.07$_{\pm0.12}$ &
% 15.00$_{\pm0.03}$ & 
% \bf 53.21$_{\pm0.12}$ &
% 55.61$_{\pm0.36}$ & 
% 100.0$_{\pm0.00}$ &
% 99.87$_{\pm0.00}$
% \\
% \cmidrule{2-8} 
% & PropFair% (M=5)
% & 81.96$_{\pm0.23}$ &
% 14.11$_{\pm0.03}$ & 
% 51.04$_{\pm0.69}$ &
% \bf 56.54$_{\pm0.34}$ &
% \bf100.0$_{\pm0.00}$ & 
% \bf99.99$_{\pm0.00}$
% \\
\midrule

& FedAvg& 16.14$_{\pm 0.59}$ & \bf 2.33$_{\pm 0.07}$ & 11.07$_{\pm 0.78}$ & 11.81$_{\pm 0.67}$ & 20.23$_{\pm 1.11}$ & 19.91$_{\pm 0.90}$\\
  & $q$-FFL  & \bf 18.84$_{\pm 0.02}$ & 3.23$_{\pm 0.25}$ & 12.12$_{\pm 0.58}$ & 13.06$_{\pm 0.66}$ & \bf 24.19$_{\pm 0.25}$ & \bf 23.69$_{\pm 0.19}$\\
& AFL& 16.43$_{\pm 0.58}$ & 2.34$_{\pm 0.04}$ & 11.34$_{\pm 1.24}$ & 12.32$_{\pm 0.66}$ & 20.70$_{\pm 0.64}$ & 20.21$_{\pm 0.49}$\\
{TinyImageNet} & TERM & 16.41$_{\pm 0.29}$ & 2.75$_{\pm 0.27}$ & 10.67$_{\pm 0.47}$ & 11.55$_{\pm 0.40}$ & 21.75$_{\pm 1.19}$ & 20.97$_{\pm 0.71}$\\
& GIFAIR-FL & 16.54$_{\pm 0.41}$ & 2.70$_{\pm 0.17}$ & 11.34$_{\pm 0.47}$ & 11.92$_{\pm 0.15}$ & 22.28$_{\pm 0.46}$ & 21.47$_{\pm 0.50}$\\
& FedMGDA+ & 13.94$_{\pm 0.20}$ & 2.70$_{\pm 0.30}$ & 9.45$_{\pm 0.03}$ & 9.73$_{\pm 0.12}$ & 19.15$_{\pm 0.06}$ & 18.62$_{\pm 0.53}$\\
\cmidrule{2-8}
& PropFair & 18.04$_{\pm 0.74}$ & 2.69$_{\pm 0.08}$ & \bf 12.63$_{\pm 1.57}$ & \bf 13.51$_{\pm 1.19}$ & 23.68$_{\pm 0.49}$ & 23.02$_{\pm 0.30}$\\

\midrule
& FedAvg & 50.54$_{\pm 0.12}$ & 1.22$_{\pm 0.07}$ & 48.18$_{\pm 0.17}$ & 48.26$_{\pm 0.17}$ & 52.33$_{\pm 0.29}$ & 52.15$_{\pm 0.12}$\\
& $q$-FFL & 50.69$_{\pm 0.14}$ & 1.05$_{\pm 0.02}$ & 48.74$_{\pm 0.21}$ & 48.83$_{\pm 0.22}$ & 52.35$_{\pm 0.08}$ & 52.25$_{\pm 0.13}$\\
& AFL& \bf 52.54$_{\pm 0.08}$ & 1.25$_{\pm 0.05}$ & \bf 49.86$_{\pm 0.29}$ & \bf 50.13$_{\pm 0.12}$ & \bf 54.47$_{\pm 0.29}$ & \bf 54.22$_{\pm 0.18}$\\
\multirow{1}{*}{Shakespeare} & TERM& 50.90$_{\pm 0.11}$ & 1.27$_{\pm 0.03}$ & 48.10$_{\pm 0.15}$ & 48.45$_{\pm 0.20}$ & 52.65$_{\pm 0.39}$ & 52.47$_{\pm 0.26}$\\
& GIFAIR-FL & 50.67$_{\pm 0.28}$ & 1.25$_{\pm 0.04}$ & 48.22$_{\pm 0.26}$ & 48.32$_{\pm 0.30}$ & 52.50$_{\pm 0.24}$ & 52.45$_{\pm 0.21}$\\
& FedMGDA+ & 44.17$_{\pm 0.18}$ & \bf 0.99$_{\pm 0.02}$ & 42.42$_{\pm 0.10}$ & 42.67$_{\pm 0.05}$ & 46.30$_{\pm 0.20}$ & 46.10$_{\pm 0.20}$\\
\cmidrule{2-8}
& PropFair & 52.28$_{\pm 0.08}$ & 1.20$_{\pm 0.04}$ & 49.50$_{\pm 0.41}$ & 49.76$_{\pm 0.20}$ & 54.10$_{\pm 0.11}$ & 53.88$_{\pm 0.12}$\\
%\midrule
%  & FedAvg& 72.90$_{\pm 0.08}$ & 10.04$_{\pm 0.12}$ & 51.37$_{\pm 0.45}$ & 52.02$_{\pm 0.38}$ & \bf 85.15$_{\pm 0.09}$ & \bf 84.78$_{\pm 0.05}$\\
% \multirow{3.5}{*}{StackOverflow} & $q$-FFL & \bf 72.99$_{\pm 0.02}$ & 10.01$_{\pm 0.02}$ & 51.46$_{\pm 0.16}$ & 51.99$_{\pm 0.09}$ & 85.23$_{\pm 0.06}$ & 84.92$_{\pm 0.01}$\\
% & AFL& 72.23$_{\pm 0.13}$ & \bf 9.13$_{\pm 0.01}$ & \bf 54.33$_{\pm 0.24}$ & \bf 54.53$_{\pm 0.08}$ & 83.84$_{\pm 0.22}$ & 83.64$_{\pm 0.13}$\\
% & TERM & 72.96$_{\pm 0.05}$ & 9.67$_{\pm 0.08}$ & 52.84$_{\pm 0.17}$ & 53.08$_{\pm 0.19}$ & 84.94$_{\pm 0.12}$ & 84.54$_{\pm 0.16}$\\
% \cmidrule{2-8} 
% & PropFair & 72.86$_{\pm 0.04}$ & 9.81$_{\pm 0.05}$ & 52.15$_{\pm 0.11}$ & 52.63$_{\pm 0.07}$ & 85.10$_{\pm 0.05}$ & 84.70$_{\pm 0.10}$\\
\bottomrule
\end{tabular}
\end{table*}

\subsection{Additional evaluation metrics}

In this subsection, we perform comparison with baseline algorithms on CIFAR-100 using additional evaluating metrics, including worst 20\% and 30\% test accuracies. One can see that our algorithm remains the state-of-the-art among a large variety of algorithms. 

% \subsubsection{Comparison with additional fair FL algorithms}

% We compare PropFair with two additional fair FL algorithms, GIFAIR-FL \citep{Yue2021GIFAIRFLAA} and FedMGDA+ \citep{hu2020fedmgda+}, using the same setting for CIFAR-100 as before. \Cref{tbl:add_comparison} shows the effectiveness of PropFair vs.~these two algorithms.

% \begin{table*}[h!]
% \centering
% \caption{Comparison with additional federated learning algorithms on CIFAR-100 dataset. We run FedMGDA+ \citep{hu2020fedmgda+} for 1000 epochs with local learning rate $0.01$, global learning rate $0.1$ and $\epsilon = 1.0$.  For GIFAIR-FL \citep{Yue2021GIFAIRFLAA} we take local learning rate $0.01$ and $\lambda = {\tt 5e-3}$.
% }
% \label{tbl:add_comparison}
% \begin{tabular}{cccccccc}
% \toprule
% \bf Algorithm & \bf Mean & \bf Std & \bf Worst (10\%) & \bf Best (10\%) \\
% \midrule \midrule
% FedMGDA+ &
% \textbf{62.36$_{\pm0.19}$} & 
% 5.41$_{\pm0.10}$ & 
% 53.64$_{\pm0.33}$ & 
% 70.61$_{\pm1.10}$  \\
% GIFAIR-FL &  
% 61.84$_{\pm0.15}$ & 
% 5.52$_{\pm0.27}$ & 
% 53.13$_{\pm0.56}$ & 
% 70.08$_{\pm0.30}$ 
% \\
% PropFair & 
% 61.36$_{\pm 0.24}$ &
% \bf3.02$_{\pm 0.17}$ & 
% \bf 54.80$_{\pm 0.44}$ &
% 65.73$_{\pm 0.16}$
% \\
% \bottomrule
% \end{tabular}
% \end{table*}
%\subsubsection{Other evaluation metrics}

\begin{table*}[h!]
\centering
\caption{Comparison using worst 20\% and 30\% test accuracies on the CIFAR-100 dataset. The hyperparameters and learning rates are the same as in \Cref{tab:hyperparam} and \Cref{tab:lr}. }
\begin{tabular}{cccccccc}
\toprule
\bf Metric & \bf PropFair & \bf AFL & \bf FedAvg & \bf TERM & \bf $q$-FFL & \bf GIFAIR-FL & \bf FedMGDA+ \\
\midrule \midrule
{\bf worst 20\%} & 
\bf 29.08$_{\pm 0.77}$  &  $26.15_{\pm 0.69}$ &  $25.68_{\pm 1.66}$  &  $26.90_{\pm 0.33}$  &  $23.94_{\pm 0.51}$  &  $27.33_{\pm 0.35}$  &  $19.77_{\pm 0.87}$ 
\\
{\bf worst 30\%} & 
\bf 29.29$_{\pm 0.63}$  &  $26.63_{\pm 0.24}$  &  $26.26_{\pm 1.48}$  &  $27.25_{\pm 0.25}$  &  $24.53_{\pm 0.53}$  &  $27.66_{\pm 0.23}$  &  $20.33_{\pm 1.10}$ 
\\
\bottomrule
\end{tabular}
\end{table*}
% \begin{figure}
%     \centering
%     \includegraphics[width=0.50\textwidth]{images/ablation_batch_cifar100.PNG}
%     %\vspace{-1.1em}    
%     \caption{Comparison among federated learning algorithms on CIFAR-100 dataset with varying batch size.}
%     \label{fig:ablationbatchsize}
%     \vspace{-1.1em}
% \end{figure}

\section{Dual View of Fair FL Algorithms} \label{sec:dual_prop}

In this section we derive the convex conjugates of the generalized means for each algorithm. We sometimes extend the domain of $\vf$ to obtain a clear form of $\Msf_\varphi^*$, while ensuring the equality of eq.~\ref{eq:convex_conjugate}.

\subsection{Dual View of FedAvg}

For FedAvg, we have $\varphi(t) = t$ and the generalized mean can be written as:
\begin{align}
\Msf_\varphi(\vf) = \sum_i p_i f_i,
\end{align}
where we extend the domain of $\vf$ to be $\Rb^n$. The convex conjugate can be written as:
\begin{align}
\Msf_\varphi^*(\lambdav) = \sup_{\vf\in \Rb^n} (\lambdav - \vp)^\top \vf
\end{align}
Solving it yields:
\begin{align}
\Msf_\varphi^*(\lambdav) = \begin{cases}
0 & \mbox{ if }\lambdav = \vp, \\
\infty & \mbox{ otherwise.}
\end{cases}
\end{align}
Bringing the equation above to eq.~\ref{eq:convex_conjugate} we obtain the original form of FedAvg. 

\subsection{Dual View of $q$-FFL and AFL}

Let us now derive the conjugate function for $q$-FFL. With $\varphi(t) = t^{q+1}$ ($q > 0$) we have:
\begin{align}
\Msf_\varphi(\vf) = \left(\sum_i p_i f_i^{q+1} \right)^{\frac{1}{q+1}},
\end{align}
where we assume $\dom \vf = \Rb_+^n$. The convex conjugate can thus be written as:
\begin{align}\label{eq:qFFL}
\Msf_\varphi^*(\lambdav) = \sup_{\vf\geq \zero} \lambdav^\top \vf - \left(\sum_i p_i f_i^{q+1} \right)^{\frac{1}{q+1}}.
\end{align}
If $\sum_i p_i^{-1/q} \lambda_i^{(q+1)/q} > 1$, we can take $f_i = \lambda_i^{{1}/{q}}p_i^{-1/q} t$ and the maximand of eq.~\ref{eq:qFFL} becomes:
\begin{align}
\lambdav^\top \vf - \left(\sum_i p_i f_i^{q+1} \right)^{\frac{1}{q+1}} &= \sum_i \lambda_i^{(q+1)/q} p_i^{-1/q} t - \left( \sum_i \lambda_i^{(q+1)/q} p_i^{-1/q}\right)^{\frac{1}{q+1}}t \nonumber \\
&= \left(\sum_i \lambda_i^{(q+1)/q} p_i^{-1/q} - \left( \sum_i \lambda_i^{(q+1)/q}p_i^{-1/q}\right)^{\frac{1}{q+1}}\right) t.
\end{align}
By taking $t\to \infty$ we have $\Msf_\varphi^*(\lambdav) = \infty$. Therefore we must constrain $\sum_i p_i^{-1/q} \lambda_i^{(q+1)/q} \leq 1$. In this case, we can utilize H\"{o}lder's inequality to obtain $\Msf_\varphi^*(\lambdav) = 0$. In summary, the convex conjugate for $\lambdav \geq \zero$ is:
\begin{align}
\Msf_\varphi^*({\lambdav}) = \begin{cases}
0, & \mbox{ if } \sum_i p_i^{-1/q} \lambda_i^{(q+1)/q} \leq 1, \\
\infty & \mbox{ otherwise.}
\end{cases}
\end{align}
Taking $q\to \infty$ the function above becomes the one for AFL:
\begin{align}
\Msf_\varphi^*({\lambdav}) = \begin{cases}
0, & \mbox{ if } \|\lambdav\|_1 \leq 1, \\
\infty & \mbox{ otherwise.}
\end{cases}
\end{align}

\subsection{Dual View of TERM}\label{sec:TERM}

%\todo{talk about the dual of TERM}

We continue to derive the convex conjugate of the generalized mean of TERM. Recall that $\varphi(t) = e^{\alpha t}$ with $\alpha > 0$. The generalized mean can be written as:
\begin{align}
\Msf_\varphi(\vf) = \frac{1}{\alpha}\log\left(\sum_i p_i e^{\alpha f_i}\right),
\end{align}
where we extend the domain of $\vf$ to be $\Rb^n$. The convex conjugate is:
\begin{align}\label{eq:conjugate_TERM}
\Msf_\varphi^*(\lambdav) = \sup_{\vf\in \Rb^n} \lambdav^\top \vf - \frac{1}{\alpha}\log\left(\sum_i p_i e^{\alpha f_i}\right).
\end{align}
If any $\lambda_i < 0$, we can take the corresponding $f_i \to -\infty$ and thus $\Msf_\varphi^*(\lambdav) = \infty$. If $\lambdav^\top \one \neq 1$, we can impose $\vf = t\one$ and obtain:
\begin{align}
\lambdav^\top \vf - \frac{1}{\alpha}\log\left(\sum_i p_i e^{\alpha f_i}\right) = (\lambdav^\top \one - 1)t.
\end{align}
By taking $t \to \infty$ or $t\to -\infty$ we get $\Msf_\varphi^*(\lambdav) = \infty$. Now let us assume $\lambdav \geq \zero$ and $\lambdav^\top \one = 1$. By requiring stationarity in eq.~\ref{eq:conjugate_TERM} we find the necessary and sufficient optimality condition:
\begin{align}\label{eq:lambda_TERM}
\lambda_i = \frac{p_i e^{\alpha f_i}}{\sum_i p_i e^{\alpha f_i}},
\end{align}
which can always to satisfied with our assumption. Denote $c = \sum_i p_i e^{\alpha f_i}$ we can solve eq.~\ref{eq:lambda_TERM} to obtain $f_i = \frac{1}{\alpha} \log \left(\frac{c\lambda_i}{p_i}\right)$. Bringing it back to eq.~\ref{eq:conjugate_TERM} the convex conjugate becomes:
\begin{align}
\Msf_\varphi^*(\lambdav) &= \sum_i \frac{\lambda_i}{\alpha} \log \left(\frac{c\lambda_i}{p_i}\right) - \frac{1}{\alpha} \log c \nonumber \\
& = \sum_i \frac{\lambda_i}{\alpha}\log \frac{\lambda_i}{p_i},
\end{align}
where we used the condition $\lambdav^\top \one = 1$. Since we have the constraint that $\lambdav \geq \zero$, eq.~\ref{eq:convex_conjugate} still holds. Therefore, we get:
\begin{align}
\Msf_\varphi^*(\lambdav) = \begin{cases}
\sum_i \frac{\lambda_i}{\alpha}\log \frac{\lambda_i}{p_i} & \mbox{ if }\lambdav \geq \zero, \lambdav^\top \one = 1, \\
\infty & \mbox{ otherwise.}
\end{cases}
\end{align}

\subsection{Dual View of PropFair}
Let us derive the dual of the generalized mean for PropFair in the same framework as in \Cref{sec:dual_view}. Note that
\begin{align}
\varphi(t) = - \log(M - t),
\end{align}
and therefore the generalized mean is:
\begin{align}
\Msf_\varphi(\vf) = \varphi^{-1}\left(\sum_i p_i \varphi(f_i)\right) = M - \prod_{i=1}^n (M - f_i)^{p_i},
\end{align}
where we require $\vf \leq M\one$.
We observe that $\Msf_\varphi$ is a convex function, since it is composition of the generalized geometric mean (which is concave) and affine transformation. 
Now we compute the dual function
\begin{align}
\Msf^*_\varphi(\lambdav) &= \sup_{\vf\leq M\one} \lambdav^\top \vf - \Msf_\varphi(\vf) \nonumber \\
\label{eq:pf-conj}
&= \sup_{\vf\leq M\one} \lambdav^\top \vf + \prod_{i=1}^n (M - f_i)^{p_i} - M
\end{align}

If any entry $\lambda_i$ is non-positive, clearly we can let $f_i \to -\infty$ so that $\Msf_\varphi^*(\lambdav) \to \infty$. For positive $\lambdav$, and $\prod_{i=1}^n \left(\frac{\lambda_i}{p_i} \right)^{p_i} < 1$, we can take $f_i = M - c \frac{p_i}{\lambda_i}$ and get:
\begin{align}
\lambdav^\top \vf + \prod_{i=1}^n (M - f_i)^{p_i} - M &= \sum_{i=1}^n \left(M \lambda_i - c p_i \right) +  \prod_{i=1}^n \left(\frac{c p_i}{\lambda_i}\right)^{p_i} - M  \nonumber \\
&= M(\lambdav^\top \one - 1) +\left( \prod_i\left(\frac{p_i}{\lambda_i}\right)^{p_i} - 1\right) c 
\end{align}
Since $c\geq 0$ is arbitrary, we can take $c\to \infty$ and thus $\Msf_\varphi^*(\lambdav) = \infty$. Otherwise, if $\prod_{i=1}^n \left(\frac{\lambda_i}{p_i} \right)^{p_i} \geq 1$, then we have:
\begin{align}
\lambdav^\top \vf + \prod_{i=1}^n (M - f_i)^{p_i} - M &= \prod_{i=1}^n (M - f_i)^{p_i} - \lambdav^\top (M \one - \vf) + M(\lambdav^\top \one - 1) \nonumber \\
&\leq \prod_{i=1}^n (M - f_i)^{p_i} - \prod_{i=1}^n \left(\frac{\lambda_i}{p_i} \right)^{p_i} \prod_{i=1}^n (M - f_i)^{p_i}  + M(\lambdav^\top \one - 1) \nonumber \\
&\leq M(\lambdav^\top \one - 1),
\end{align}
where in the second line we used the AM-GM inequality and in the last line we used $\prod_{i=1}^n \left(\frac{\lambda_i}{p_i} \right)^{p_i} \geq 1$. This equality can always be achieved by taking $\vf = M\one$. In summary, we have:
\begin{align}
\Msf^*_\varphi(\lambdav) = \begin{cases}
M(\lambdav^\top \one - 1), & \mbox{ if } \lambdav \geq \zero\mbox{ and }\prod_{i=1}^n \left(\frac{\lambda_i}{p_i} \right)^{p_i} \geq 1, \\
\infty, & \mbox{ otherwise.}
\end{cases}
\end{align} 
We remark that $\Msf^*_\varphi$ is closed (since its domain is closed). 
If we want to enforce $\vf \geq \zero$ when computing the dual function, we simply apply the convolution formula:
\begin{align}
\overline\Msf_\varphi^*(\lambdav) = \inf_{\lambdav \leq \gammav} \Msf_{s}^*(\gammav).
\end{align}
However, the formula for $\Msf_\varphi^*$ suffices for our purpose so we need not compute the above explicitly.

Applying the above conjugation result we can rewrite PropFair's generalized mean as:
\begin{align}
\min_{\thetav} \Msf_\varphi(\vf(\thetav)) = \min_{\thetav} \max_{\lambdav \geq \zero} \lambdav^\top \vf(\thetav) - \Msf_\varphi^*(\lambdav).
\end{align}
We focus on the inner maximization so that we know the weights we put on each client:
\begin{align}\label{eq:optimal_lambda}
\max_{\lambdav \geq \zero} \lambdav^\top \vf(\thetav) - \Msf_\varphi^*(\lambdav) &= 
\max_{\lambdav \geq \zero, \prod_{i=1}^n (\lambda_i/p_i)^{p_i} \geq 1} \lambdav^\top \vf - (\lambdav^\top \one - 1)M \nonumber \\
&= \max_{\lambdav \geq \zero, \prod_{i=1}^n (\lambda_i/p_i)^{p_i} \geq 1} M - \lambdav^\top (M \one - \vf).
\end{align}
Using the AM-GM inequality we have:
\begin{align}
\lambdav^\top (M \one - \vf) \geq \prod_{i=1}^n \left(\frac{\lambda_i}{p_i} \right)^{p_i} \prod_{i=1}^n (M - f_i)^{p_i} \geq\prod_{i=1}^n (M - f_i)^{p_i},
\end{align}
where the equality is attained iff 
$\prod_{i=1}^n \left(\frac{\lambda_i}{p_i} \right)^{p_i} = 1$ and 
\begin{align}\label{eq:prop_to_align}
\lambda_i \propto \frac{p_i}{M-f_i}.
\end{align}
Thus, we verify again that the optimal value of eq.~\ref{eq:optimal_lambda} is:
\begin{align}
M - \prod_{i=1}^n (M - f_i)^{p_i} = \Msf_\varphi(\vf),
\end{align}
and we retrieve our original objective. eq.~\ref{eq:prop_to_align} tells us that we are essentially solving a linearly weighted combination of $f_1, \dots, f_n$, but with more weights on the worse-off clients, since $\frac{p_i}{M-f_i}$ is larger for larger $f_i$.

\section{More Related Work}\label{app:related_work}

In this appendix we introduce more related work, including multi-objective optimization, fairness in FL, as well as various definitions of fairness from multiple fields.

\subsection{Multi-objective optimization}

Multi-Objective Optimization (MOO) has been intensively studied in the field of operation research \citep{geoffrion1968proper, yu1975set, jahn2009vector}. The goal of MOO is to minimize a series of objectives $f_1, f_2, \dots, f_n$ based on their best trade-offs. This is directly related to federated learning \citep{hu2020fedmgda+} because one can treat the loss function of each client as an objective. 

In MOO, Pareto optimality is often desired. To find a Pareto optimum, one way is to use an \emph{aggregating objective} (a.k.a.~scalarizing function, \citealt{lootsma1995controlling}). We list some common choices of this aggregating objective:
\begin{itemize}
\item \emph{Linear weighting method \citep{geoffrion1968proper}:}  this method converts MOO into the problem of minimizing the convex combination of client objectives:
\begin{align}
\min_{\vx\in \Xc} \sum_{i=1}^n \lambda_i f_i(\vx),
\end{align}
with ${\bm \lambda} \in \Delta_{n-1}$ in the $(n-1)$-simplex, and $\Xc$ the domain of $\vx$. Such solution is always Pareto optimal and the method has been used in FedAvg \citep{mcmahan2017communication}. A well-known difficulty is that it cannot generate point in the nonconvex part of the Pareto front \citep{audet2008multiobjective}.  
\item \emph{Reference point} \citep{audet2008multiobjective}: This method requires proximity to the \emph{ideal point}: $\vr = (\min_{\vx\in \Xc} f_1(\vx), \dots, \min_{\vx\in \Xc} f_n(\vx))$, measured by $\ell_q$-norm:
\begin{align}
\min_{\vx\in \Xc} \|\vf(\vx) - \vr\|_q^q := \sum_{i=1}^n (f_i(\vx) - r_i)^q,
\end{align}
with $\vf(\vx) := (f_1(\vx), \dots, f_n(\vx))$ and $\|\cdot \|_q$ the $\ell_q$-norm ($q \geq 1$). This method has been applied to federated learning as $q$-FFL \citep{li2019fair} (by assuming $\vr = \zero$).
\item \emph{Weighted geometric mean} \citep{lootsma1995controlling}: this method converts MOO to a single-objective formulation by maximizing the weighted geometric mean between elements of the \emph{nadir point} and the client objectives:
\begin{align}
\max_{x\in \Xc} \prod_{i=1}^n (q_i - f_i(\vx))^{\lambda_i}, \mbox{ such that }f_i(\vx) \leq q_i\mbox{ for any $i$ and $\vx\in \Xc$,}
\end{align}
where $\vq$ is called a \emph{nadir point}, defined as \citep{lootsma1995controlling}:
\begin{align}\label{eq:nadir_point}
q_i = \max_{j=1, 2, \dots, n} f_i(\vx^*_j),
\end{align}
with $\vx^*_j = \argmin_{\vx\in \Xc} f_j(\vx)$ the optimizer of function $f_j$. The $\lambda_i$'s are the weights for each client and they are positive. If we take $\lambdav = (\lambda_1, \dots, \lambda_n) = {\bf 1}$, then it resembles our objective in eq.~\ref{eq:training_loss}. 
\end{itemize}

\subsection{Fairness in Federated Learning}

As FL has been deployed to more and more real-world applications, it has become a major challenge to guarantee that FL models has no discrimination against certain clients and/or sensitive attributes. Since different participants may contribute differently to the final model's quality, it is necessary to provide a fair mechanism to encourage user participation.

Besides the related work we mentioned in the main paper \citep{mcmahan2017communication, mohri2019agnostic, li2018federated}, another direction of research tries to directly encourage the involvement of user participation, by providing some rewards to fairly recognize the \emph{contributions} of clients. For example,
\citet{lyu_towards_2020} designed a local credibility mutual evaluation mechanism to enforce good contributors get more credits. Concretely, each client computes the contribution of every other client by investigating the label similarities of the synthetic samples generated by the clients' differential private GANs \citep{goodfellow2014generative}.
\citet{kang_reliable_2020} proposed a pairwise measurement of contribution. Reputation scores are kept at each client for all other clients, and are updated by a multi-weight subjective logic model. 
\citet{yu_fairness-aware_2020} proposed a Federated Learning Incentivizer (FLI) payoff-sharing scheme, which dynamically divides a given budget among clients by optimizing their joint utility while minimizing their discrepancy. The objective function takes into account the amount of payoff and the waiting time to receive the payoff.
\citet{wang_principled_2020} analyzed the contribution from the data side, and proposed the federated Shapley Value (SV) for data valuation. While preserving the desirable properties of the canonical SV, this federated SV can be calculated with no extra communication overhead, making it suitable for the FL scenarios.

The above methods already applied some objective functions that reflect the concept of proportional fairness, e.g., payoff proportional to the contribution. However, they mostly apply fixed contribution-reward assignment rules, without explicit definitions of proportional fairness or theoretical guarantee.

\subsection{Definitions of fairness}

Fairness has been a perennial topic in social choice \citep{sen_chapter_1986}, communication \citep{jain1984quantitative}, law \citep{rawls1999theory} and machine learning \citep{barocas2017fairness}. Whenever we have multiple agents and limited resources, we need fairness to allocate the resources. There have been many definitions of fairness, such as individual fairness \citep{dwork_fairness_2012}, demographic fairness, counterfactual fairness and proportional fairness. 

In this section, we introduce definitions of fairness from various perspectives including social choice, communication and machine learning, and study the implications in the setting of FL. 

\subsubsection{Social Choice and Law}

We review some principles for fairness and justice in social choice \citep{sen_chapter_1986} and law \citep{rawls1999theory}, which resembles FL: we can treat the shared global model as a public policy and clients as social agents. 
\begin{itemize}
\item \emph{Utilitarian rule \citep{maskin1978theorem}:}  suppose we have $n$ clients and their loss functions are $f_i$, the utilitarian rule aims to minimize the sum of the loss functions, e.g., 
\begin{align}\label{eq:utilitarian}
    \min_{\thetav} \sum_i f_i(\thetav),
\end{align}
with $\thetav$ the global model parameters. This utilitarian rule represents the utilitarian philosophy: as long as the overall performance of the whole society is optimal, we call the society to be fair. A utilitarian policy is Pareto-optimal but not vice versa. With model homogeneity, equation eq.~\ref{eq:utilitarian} is nothing but the objective for FedAvg \citep{mcmahan2017communication}, although the FedAvg algorithm may not always converge to the global optimum even in linear regression \citep{pathak2020fedsplit}.
\item \emph{Egalitarian rule \citep{rawls1974some, rawls1999theory}:} The egalitarian rule, also known as the maximin criterion represents egalitarianism in political philosophy. Instead of maximizing the overall performance as in eq.~\ref{eq:utilitarian}, an egalitarian wants to maximizing the performance of the worst-case client, i.e., we solve the following optimization problem:
\begin{align}\label{eq:egalitarian}
\min_{\thetav} \max_i f_i(\thetav).
\end{align}
This accords with Agnostic FL \citep{mohri2019agnostic}. The egalitarian problem eq.~\ref{eq:egalitarian} may not always be Pareto optimal, e.g., $(f_1, f_2, f_3) = (1, 1, 1)$ and $(f_1, f_2, f_3) = (1, 0.9, 0.8)$ can both be the optimal solution of eq.~\ref{eq:egalitarian}, but the former is not Pareto optimal. 
\end{itemize}

\subsubsection{Fairness in wireless communications} 

Since resource allocation is common in communication, different notions of fairness have also been proposed and studied. We review some common fairness definitions in communication:

\begin{itemize}
\item \emph{Max-min fairness / Pareto optimal} \citep{bertsekas2021data}: this definition says at the fair solution, one cannot simultaneously improve the performance of all clients, which is equivalent to the definition of Pareto optimal. The corresponding algorithm in FL for finding a Pareto optimum is FedMGDA+ \citep{hu2020fedmgda+}. 
% \item \emph{$\alpha$-fairness} \citep{lan_axiomatic_2010}: the optimization problem is:
% \begin{align}
% \max_\thetav \sum_i u_i(\thetav)^\alpha, \, \alpha > 0.
% \end{align}
% The optimal solution is always Pareto optimal. When $\alpha \to \infty$, the optimal solution corresponds to the egalitarian solution (a.k.a.~Agnostic FL); when $\alpha = 1$, the optimization problem corresponds to the utilitarian rule, i.e., FedAvg. 
\item \emph{Proportional-fair rule \citep{kelly1997charging,bertsimas2011price}:} proportional fairness aims to find a solution $\thetav^*$ such that for all $\thetav$ in the domain:
\begin{align}\label{eq:proportional_fair}
\sum_i \frac{u_i(\thetav) - u_i(\thetav^*)}{u_i(\thetav^*)} \leq 0,
\end{align}
with $u_i$ the utility function of client $i$, e.g., the test accuracy. This problem aims to find a policy such that the total relative utility cannot be improved. Proportional fairness has been studied in communication \citep[e.g.][]{seo_proportional-fair_2006} for scheduling but the application in FL has not been seen. 

\item \emph{Harmonic mean} \citep{dashti_harmonic_2013}: the method maximizes the harmonic mean of the utility functions of each client, that is, we solve the following optimization problem: 
\begin{align}
\max_{\thetav} \frac{n}{\sum_i u_i(\thetav)^{-1}}
\end{align}
In a similar vein we can find its optimality condition, assuming the utility set $\Uc$ is convex: 
\begin{align}
\sum_{i=1}^n \frac{u_i - u_i^*}{(u_i^*)^2} \leq 0, \, \mbox{ for all } \vu\in \Uc.
\end{align}
Compared to proportional fairness, it simply amounts to squaring the denominator.
\end{itemize}

\subsubsection{Fairness in machine learning}\label{sec:fair_ML}

Fairness has been studied in machine learning for almost a decade \citep{barocas2017fairness}. A large body of work focuses on proposing machine learning algorithms for achieving different definitions of fairness. These definitions are often incompatible with each other, i.e., one cannot achieve two definitions of fairness simultaneously. Let us review some common definitions, using classification as an illustrating example:
\begin{itemize}
\item \emph{Group fairness / statistical parity / demographic parity} \citep[DP,][]{dwork_fairness_2012, zemel_learning_2013}: this definition requires that the prediction is independent of the subgroup (e.g., race, gender). Denote ${\tt Y}$ as the prediction and ${\tt S}$ as the sensitive attribute, this definition requires ${\tt Y}\perp {\tt S}$, where the symbol $\perp$ denotes statistical independence. This is the simplest definition of fairness, and probably what people think of at a first thought. 
However, this definition can be problematic.
For instance, suppose a subgroup of clients have poor performance (e.g.~due to communication, memory), and then to achieve better group fairness one can deliberately lower the performance of high-performing clients, and thus the overall performance is lower. 
Moreover, DP would forbid us to achieve the optimal performance if the true labels are not independent of the sensitive attribute \citep{hardt_equality_2016, zhao2019inherent}. 
\item \emph{Equalized odds (EO)} \citep{hardt_equality_2016}: this defintion requires demographic parity given each true label class. Define $\tt T$ as the random variable for the true label. Equalized odds requires that ${\tt Y}\perp {\tt S} \, |\, {\tt T}$ for \emph{any} ${\tt T}$ and equal opportunity requires that ${\tt Y}\perp {\tt S} \, |\, {\tt T}$ for \emph{some} ${\tt T}$. Different from DP, this conditioning allows the prediction to align with the true label. In the binary setting, EO and DP cannot be simultaneously achieved \citep{barocas2017fairness}. 

\item \emph{Calibration / Predictive Rate Parity} \citep{gebel2009multivariate}: this definition requires that among the samples having a prediction score ${\tt Y}$, the expectation of the true label ${\tt T}$ should match the prediction score, i.e., $\Eb[{\tt T}|{\tt Y}]={\tt Y}$. In the context of fairness, calibration says that ${\tt T}\perp {\tt S} \, | \, {\tt Y}$. Under mild assumptions, calibration and EO cannot be simultaneously achieved \citep{pleiss2017fairness}. Similarly, calibration and DP cannot be  simultaneously achieved. 

\item \emph{Individual fairness} \citep{dwork_fairness_2012}: this concept requires that similar samples, as measured by some metric, should have similar predictions. 

\item \emph{Counterfactual fairness} \citep{kusner2017counterfactual}: this definition requires that from any sample, the prediction should be the same had the sensitive attribute taken different values. It follows the notion of counterfactual from casual inference \citep{pearl2000models}. 
\item\emph{Accuracy parity} \citep{zafar_fairness_2017}: the accuracy for each group remains the same.  
\end{itemize}

Since many concepts conflict with each other \citep{barocas2017fairness}, there is no unified definition of fairness. In light of this, a dynamical definition of fairness has been proposed \citep{awasthi2020beyond}. Algorithms for achieving different definitions of fairness include mutual information \citep{zemel_learning_2013}, representation learning \citep{zemel_learning_2013, zhao2019inherent} and R\'{e}nyi correlation \citep{baharlouei2019renyi}.

\end{document}